\newcommand{\bnm}[1]{\left\| #1 \right\|}
\newcommand{\dps}{\Delta \bm{\psi}^*}
\newcommand{\dbb}{\Delta b^*}
\newcommand{\cmark}{\text{\ding{51}}}%
\newcommand{\xmark}{\text{\ding{55}}}%
\newcommand{\nn}{\nonumber}
\newcommand{\xhdr}[1]{\vspace{1.5mm}\noindent{{\bf #1.}}}
\newcommand{\explain}[2]{\underbrace{#1}_{#2}}
\newcommand{\explainup}[2]{\overbrace{#1}^{#2}}
\newcommand{\minz}[1]{\underset{#1}{\text{minimize}}}
\newcommand{\df}{\Delta \wb^*}
\newcommand{\inp}[2]{{#2}^\top {#1}}
\newcommand{\wbvsl}{\wb^*(\Vcal\cp(\Scal\cup \Lcal))}
\newcommand{\wbvs}{\wb^*(\Vcal\cp\Scal)}
\newcommand{\bbvs}{b^*(\Vcal\cp\Scal)}
\newcommand{\prvsl}{\pred^*(\Vcal\cp(\Scal\cup \Lcal))}
\newcommand{\prvs}{\pred^*(\Vcal\cp\Scal)}
\newcommand{\oursub}{\xhdr}
\newcommand{\vcs}{\Vcal\cp\Scal}
\newcommand{\featdim}{m}
\newcommand\blfootnote[1]{%
  \begingroup
  \renewcommand\thefootnote{}\footnote{#1}%
  \addtocounter{footnote}{-1}%
  \endgroup
}
\newcommand{\tf}[1]{\Phi(#1)}
\title{Classification Under Human Assistance}
\author[1]{Abir De$^*$}
\author[2]{Nastaran Okati$^*$}
\author[2]{Ali Zarezade}
\author[2]{Manuel Gomez-Rodriguez}
\affil[1]{%
  {IIT Bombay, abir@cse.iitb.ac.in}
}
\affil[2]{%
  {MPI for Software Systems, \{nastaran, zarezade, manuelgr\}@mpi-sws.org}
}
\date{}
\begin{document}
\maketitle
\blfootnote{$^{*}$ Equal contributions.} 

\begin{abstract}
Most supervised learning mo\-dels are trained for full automation. However, their predictions are sometimes worse than 
those by human experts on some specific instances.
Motivated by this empirical observation, our goal is to design classifiers that are optimized to operate under different 
automation levels.
More specifically, we focus on convex margin-based classifiers and first show that the problem is NP-hard.
Then, we further show that, for  support vector machines, the corres\-pon\-ding objective function can be expressed as the 
difference of two functions $f = g - c$, where $g$ is monotone, non-negative and $\gamma$-weakly submodular, and $c$ 
is non-negative and mo\-du\-lar.
%
This representation allows us to utilize a recently introduced deterministic greedy algorithm, as well as a more efficient randomized 
variant of the algorithm, which en\-joy approximation guarantees at solving the problem.
Ex\-pe\-ri\-ments on synthetic and real-world data from several applications in medical diag\-no\-sis illustrate our theoretical 
findings and demonstrate that, under human assistance, supervised learning models trained to operate under different 
automation levels can outperform those trained for full automation as well as humans operating alone.

\end{abstract}

\section{Introduction} 
\label{sec:introduction}
In recent years, machine learning models have matched, or even surpassed, the average performance of human experts at tasks for which 
intelligence is required~\cite{hinton2013icassp,hinton2012nips,silver2016mastering,sutskever2014nips}.
As a consequence, there is a widespread discussion on the possibility of letting machine learning models take high-stake decisions---the 
promise is that the timeliness and quality of the decisions would greatly improve.
For example, in medical diagnosis, patients would not need to wait for months to be diagnosed by a specialist. 
In content moderation, online publishers could moderate toxic comments before they trigger incivility in their platforms.
In software development, developers would easily find bugs in large software projects and would not need to spend long hours in code reviews.

Unfortunately, the decisions taken by machine learning models are still worse than those by human experts on some instances, where they 
make far more errors than average~\cite{raghu2019algorithmic}.
Motivated by this observation, there has been a paucity of work on developing machine learning models that are optimized to operate 
under different automation levels~\cite{de2020aaai, bordt2020humans, sontag2020, raghu2019algorithmic, wilder2020learning}---mo\-dels that are optimized to take decisions 
for a given fraction of the instances and leave the remaining ones to humans.
%
However, most of this work has developed heuristic algorithms that do not enjoy theoretical guarantees. One of the only exceptions is the work by 
~\citet{de2020aaai}, which has reduced the problem of ridge regression under different automation levels to the maximization of an 
$\alpha$-submodular function~\cite{gatmiry2019}.
In our work, rather than (ridge) regression, we focus on classification under human assistance and show that, for support vector machines, 
the problem can be solved using algorithms with theoretical guarantees.

More specifically, we first show that, for convex mar\-gin-based classi\-fiers, the problem of classification under human assistance is NP-hard. This is 
due to its combinatorial nature---for each potential meta-decision about which instances the classifier will decide upon, there is an optimal set of parameters 
for the classifier, however, the meta-decision is also something we seek to optimize.
Then, for support vector machines, we derive an alternative representation of the objective function as a difference of two functions $f = g - c$, 
where $g$ is monotone, non-negative, and $\gamma$-weakly submodular~\cite{bian2017guarantees, das2018approximate} and $c$ is non-negative 
and modular. 
Moreover, we further show that, in our problem, the submodularity ratio $\gamma$, which characterizes how close is the function $g$ to 
being submodular, can be lower bounded.
These properties allow a recently introduced deterministic greedy algorithm (Algorithm~\ref{alg:greedy}) as well as a 
more efficient randomized variant of the algorithm~\cite{harshaw2019submodular} to enjoy nontrivial approximation guarantees.
%
%

%
%
%
Finally, we experiment with synthetic and real-world data from several appli\-ca\-tions in medical diagnosis.
Our experiments on synthetic data reveal that, by outsourcing samples to humans during training, the resulting support vector machine is able to 
reduce the number of training samples inside or on the wrong side of the margin, among those samples it needs to decide upon.
Our experiments on real data demonstrate that, under human assistance, support vector machines trained to operate under different automation 
levels outperform those trained for full automation as well as humans ope\-ra\-ting alone\footnote{\scriptsize Our code and data are available in \url{https://github.com/Networks-Learning/classification-under-assistance}}. 
%
%

\oursub{Further related work}
There is a rapidly increasing line of work devoted to designing classifiers that are able to defer decisions~\cite{bartlett2008classification,cortes2016learning,el2010foundations,geifman2019selectivenet,geifman2018bias,hsu2020generalized, liu2019deep,ramaswamy2018consistent,thulasidasan2019combating, wiener2011agnostic, ziyin2020learning}. 
However, this line of work does not consider there is a human decision maker, with a human error model, who takes 
a decision whenever the classifiers defer it and the classifiers are trained to predict the labels of all samples in the 
training set, as in full automation.

Our work also relates to the area of active learning~\cite{chen2017active,cohn1995active,guo2008discriminative,hashemi2019submodular,hoi2006batch,sabato2014active,sugiyama2006active,willett2006faster},
where the goal is to determine which subset of training samples one should label so that a supervised machine learning model, trained on these samples, 
ge\-ne\-ra\-lizes well across the entire feature space during test.
However, there is a fundamental difference between our work and active learning. In our work, the trained model only needs to accurately predict samples 
which are close to the samples assigned to the machine during training time. In contrast, in active learning, the trained model needs to predict well \emph{any}
sample during test time.

Finally, our work advances the state of the art on human-machine collaboration~\citep{ghosh2020towards,grover2018learning,hadfield2016cooperative,haug2018teaching,kamalaruban2019interactive,macindoe2012pomcop,meresht2020learning,nikolaidis2017mathematical,nikolaidis2015efficient,radanovic2019learning,tschiatschek2019learner,wilson2018collaborative}.
However, rather than considering a setting in which the machine and the human interact with each other as most previous work, we develop algorithms that
learn to distribute decisions between humans and machines.

\section{Problem Formulation} 
\label{sec:formulation}
In this section, we formally introduce the problem of designing convex margin-based classifiers that are optimized to 
operate under different automation levels. Then, we show that, for convex margin-based classifiers, the problem is NP-hard. 
For simplicity, we will consider binary classification, however, our ideas can be extended to m-ary classification.

In binary classification, one needs to find a mapping function $f(\xb)$ between feature vectors $\xb \in \RR^{\featdim}$, with $\xb \sim p(\xb)$, 
and class labels $y \in \{-1, 1\}$, with $y \sim p(y \,|\, \xb)$.
To this end, one utilizes a training set $\Dcal=\{ (\xb_i, y_i) \}_{i \in \Vcal}$ to construct a mapping that works \emph{well} 
on an \emph{unseen} test set.
For margin-based classifiers, finding this mapping usually reduces to building a decision boundary defined by a set of 
parameters $\theta$ that separates feature vectors in the training set accor\-ding to their class labels.
One typically looks for the decision boundary that achieves the greatest classification accuracy in a test set by minimizing a convex 
loss function $\ell(h_{\theta}(\xb), y)$ over a training set, \ie, $\theta^{*} = \argmin_{\theta} \sum_{i \in \Vcal} \ell(h_{\theta}(\xb_i), y_i)$,
where $h_{\theta}(\xb_i)$ denotes the signed distance from the feature vector $\xb$ to the decision boundary.
Then, given an \emph{unseen} feature vector $\xb$ from the test set, the classifier predicts $f(\xb) = 1$ if $h_{\theta^{*}}(\xb_i) \geq 0$
and $f(\xb) = -1$ otherwise.

%
In binary classification under human assistance, for eve\-ry feature vector $\xb \in \RR^{\featdim}$, the mapping function $f(\xb)$ can resort to either a classifier or 
a human expert. 
For margin-based classifiers, finding the mapping then reduces to pi\-cking the subset of training samples $\Scal \subseteq \Vcal$ that are outsourced to
human experts, with $| \Scal | \leq n$, and building a decision boundary that separates feature vectors in the subset of training samples $\Scal^{c} = \Vcal \backslash \Scal$ 
according to their class labels. 
Using the same convex loss function $\ell(h_{\theta}(\xb), y)$ as in the standard binary classification, our goal is then to solve the following minimization
problem:
\begin{equation}  \label{eq:optimization-problem}
\begin{split}
\underset{\Scal, \theta}{\text{minimize}} & \quad \sum_{i \in \Vcal \backslash \Scal}  \ell(h_{\theta}(\xb_i), y_i) + \sum_{i \in \Scal} c(\xb_i, y_i)  \\
\text{subject to} &\quad |\Scal| \le n,
\end{split}
\end{equation} 
where $c(\xb, y)$ denotes the human \emph{error per sample}, which we will define more precisely in the next section. Here, we assume that human annotations are independent, which allows\- us to cast the total human error as the sum of human error per sample over all instances.
%
Moreover, we use a linear constraint on the number of examples outsourced to humans  because, in most practical scenarios, humans get paid every 
time they make a prediction---if they make $n$ predictions, they get paid $n$ times.

%

Given the optimal set $\Scal^{*}$, we can find the optimal pa\-ra\-me\-ter $\theta^{*} = \theta^{*}(\Vcal \backslash \Scal^{*})$ in polynomial time since, by assump\-tion, the loss $\ell(h_{\theta}(\xb_i), y_i)$ is convex.
Unfortunately, the follo\-wing Theorem tells us that, in general, we cannot expect to find both $\Scal^{*}$ and $\theta^{*}$ in polynomial time (proven in Appendix~\ref{app:np-hard}):
%
\begin{theorem}\label{thm:np-hard}
The problem of designing margin-based classifiers under human assistance defined in Eq.~\ref{eq:optimization-problem} is NP-Hard.
\end{theorem}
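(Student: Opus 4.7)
I would establish NP-hardness by a polynomial-time reduction from the minimum-disagreement halfspace problem (MDH): given labeled samples $\{(\xb_i, y_i)\}_{i=1}^m$ with $y_i \in \{-1,+1\}$ and an integer $k$, decide whether there exists $\theta$ whose induced linear classifier misclassifies at most $k$ of the points. The decision version of MDH is classically NP-hard. Because the theorem concerns the entire class of convex margin-based classifiers, it suffices to establish hardness for a single specific convex margin loss; I choose the hinge loss $\ell(t, y) = \max(0, 1 - y t)$, which corresponds to the SVM studied in the rest of the paper.

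Given an MDH instance $(\{(\xb_i, y_i)\}_{i=1}^m, k)$, construct an instance of Eq.~\ref{eq:optimization-problem} with the same training set $\Vcal = \{1, \dots, m\}$, hinge loss $\ell$, uniform human error $c(\xb, y) \equiv 0$, and budget $n = k$, and then query whether the optimum equals $0$. This construction is polynomial time. For correctness I would argue a biconditional. \emph{Forward:} if MDH admits $\theta$ misclassifying a set $M$ with $|M| \le k$, place $M$ into $\Scal$ and rescale $\theta$ by $\alpha = 1 / \min_{i \in \Vcal \setminus M} y_i h_\theta(\xb_i)$; every remaining point then has margin at least $1$, so its hinge loss vanishes, and since $c \equiv 0$ on $\Scal$, the total objective is $0$. \emph{Reverse:} if some $(\Scal^*, \theta^*)$ attains objective $0$, then the hinge loss of every $i \in \Vcal \setminus \Scal^*$ is zero, forcing $y_i h_{\theta^*}(\xb_i) \ge 1 > 0$ and hence correct classification; therefore $\theta^*$ misclassifies at most $|\Scal^*| \le k$ points and solves the MDH instance.

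The main technical point to verify is the rescaling step. We need the correctly classified points under $\theta$ to admit a uniform positive lower bound on their margin so that a single scaling factor $\alpha$ lifts all their margins above $1$ simultaneously. This is automatic because the minimum of a finite set of strictly positive quantities is itself strictly positive, so a finite $\alpha$ exists; the remainder is routine bookkeeping. Extending the reduction to convex margin losses whose loss never vanishes (e.g.\ logistic or exponential) would require a more delicate limiting argument and is the chief additional obstacle if one wanted a fully uniform statement; however, NP-hardness of the theorem as stated needs only a single hard convex margin-based instance family, which the hinge/SVM case already provides.
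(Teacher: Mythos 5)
Your reduction is correct, but it follows a genuinely different route from the paper's. The paper instantiates Eq.~\ref{eq:optimization-problem} with the squared loss $(1-y_i\,\wb^\top\xb_i)^2$, zero human error, $\lambda=0$, and a full-row-rank design matrix with $d>|\Vcal|$, and then uses a change of variables $\wb'=\wb-\wb_0$ (with $\wb_0$ built from a right inverse of the design matrix) to map the problem onto robust least squares regression, whose NP-hardness is imported from Bhatia et al. You instead instantiate Eq.~\ref{eq:optimization-problem} with the hinge loss and reduce from the minimum-disagreement halfspace problem, arguing that the optimum of the constructed instance equals zero iff the halfspace instance admits a linear classifier with at most $k$ errors. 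Both strategies are legitimate, since exhibiting hardness for a single convex margin loss suffices for the theorem as stated. Your choice has the advantage of using the very loss the rest of the paper studies, so the hardness result speaks directly to the SVM setting, and the ``optimum equals zero'' equivalence is more transparent than the paper's algebraic substitution; the paper's route buys a source problem (robust regression) whose ``discard a subset, fit the rest'' structure matches Eq.~\ref{eq:optimization-problem} essentially verbatim, at the cost of extra assumptions on the design matrix. The one step you should tighten is the rescaling: under the convention that $h_\theta(\xb)\ge 0$ predicts $+1$, a correctly classified positive point may lie exactly on the decision boundary, so the margins you take the minimum over need not all be strictly positive and your $\alpha$ could be undefined. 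A standard perturbation of the offset by less than the smallest nonzero margin restores strict positivity without flipping any other point, after which your argument goes through; you should also attach a concrete citation (e.g., H\"offgen--Simon--Van Horn or the open-hemisphere problem) for the NP-hardness of the minimum-disagreement halfspace problem.
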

Moreover, given the solution to the above minimization problem, we would still need to decide whether to outsource an \emph{unseen} feature vector 
$\xb$ from the test set to a human expert even if $\xb \neq \xb_i$ for all $i \in \Vcal$.
To this end, we could train an additional model $\pi(d \,|\, \xb)$ to decide which samples to outsource to a human using the labeled set 
$\{ (\xb_i, d_i) \}_{i \in \Vcal}$, where $\xb_i$ are the feature vectors in the training set and $d_i = +1$ if $i \in \Scal^{*}$ and $d_i = -1$ 
otherwise. 
Then, as long as this model does not make mistakes on the training set, 
one can readily conclude that the samples assigned to the classifier during training are as if they were \emph{sampled} from the feature 
distribution $p(\xb) \pi(d=-1 \,|\, \xb)$ induced by $\pi$. 
As a direct consequence, if the model $\pi(d \,|\, \xb)$ is smooth with respect to $\xb$, one can further conclude that the trained margin-based classifier will work \emph{well} on the \emph{unseen} samples it needs to decide upon at test time, \ie, samples from $p(\xb) \pi(d=-1 \,|\, \xb)$.
%

\section{Algorithms with Approximation Guarantees for Support Vector Machines}
\label{sec:algorithm}
In this section, we show that, for support vector machines (SVMs), the optimization problem defined in Eq.~\ref{eq:optimization-problem} can be rewritten as a maximization of the difference of two functions $g-c$, where $g$ is monotone, non-negative, and $\gamma$-weakly submodular and $c$ is non-negative modular. 
Moreover, we further show that the submodularity ratio $\gamma$ can be lower bounded and, as a consequence, a recently introduced deterministic greedy algorithm~\cite{harshaw2019submodular} as well as a more efficient randomized variant of the algorithm enjoy approximation guarantees at sol\-ving the problem.

\oursub{Monotonicity and weak submodularity} 
For (soft margin) SVMs, we can first rewrite the minimization problem defined in Eq.~\ref{eq:optimization-problem} as follows\footnote{\scriptsize In 
Appendix~\ref{app:hard-margin-linear-svm}, we also consider hard margin linear SVMs, which are relevant whenever the data is linearly separable.}:
\begin{align} \label{eq:soft-lin-svm}
\underset{\Scal, \wb, b}{\text{minimize}} & \quad \sum_{i \in \Vcal \backslash \Scal} \explain{\left[ \lambda \|\wb\|^2 + [1-y_i (\wb^\top\tf{\xb_i}+b) ]_{+} \right]}{\ell(h_{\wb, b}(\xb_i), y_i)}  
  + \sum_{i \in \Scal} \explain{[1-y_i h(\xb_i)]_{+}}{c(\xb_i, y_i)}    \\
\text{subject to} & \quad | \Scal | \leq n,\nn
\end{align}
%
where $\tf{\cdot}$ denotes a given feature transformation, $h(\cdot) \in [-H, H]$ is the (normalized) score provided by the human experts, which is 
only known for the training samples, and $H > 0$ is a given constant.   
In the above, we measure the human error $c(\xb, y)$ using a hinge loss $[1- y\cdot h(\xb)]_+$ because the SVM formulation also uses a hinge loss 
$[1 - y \cdot (\wb^{T} x + b)]_+$ to measure the machine error.
This is necessary in order to compare human and machine performance meaningfully. 
%
However, our solution is agnostic to this specific choice---it is applicable to any human error model.

Now, for any given set $\Scal$, let $\wb^{*}(\Vcal \backslash \Scal)$ and $b^{*}(\Vcal \backslash \Scal)$ be the parameters that minimize the objective
function above, \ie, $\wb^{*}(\Vcal \backslash \Scal), b^{*}(\Vcal \backslash \Scal) = \argmin_{\wb, b} \sum_{i\in\Vcal \backslash \Scal} [\lambda \|\wb\|^2 + (1-y_i (\wb^{\top} \tf{\xb_i} + b))_+]$. 
Here, note that these parameters can be found in polynomial time since the first two parameters in the objective function are convex. 
Then, we can rewrite the above minimization problem as a set function maximization problem:
\begin{equation} \label{eq:soft-lin-svm-2} 
\underset{\Scal}{\text{maximize}} \ \ \ g(\Scal)- c(\Scal), \qquad \text{subject to} \quad | \Scal | \leq n,
\end{equation}
where 
\begin{equation} \label{eq:g-def}
\begin{split}
g(\Scal) &=   \lambda |\Vcal|  \|\wb^*(\Vcal)\|^2  + \sum_{i \in \Vcal} [1  -   y_i (\wb^{*}(\Vcal)^\top \tf{\xb_i}+b^{*}(\Vcal))]_{+} \nn\\
&\quad -  \lambda |\Vcal \backslash \Scal |   \|\wb^*(\Vcal \backslash \Scal)\|^2  - \sum_{i \in \Vcal \backslash \Scal} [1 - y_i (\wb^{*}(\Vcal \backslash \Scal)^\top \tf{\xb_i}+b^{*}(\Vcal \backslash \Scal)) ]_{+},  
\end{split}
\end{equation}
and
\begin{equation}
c(\Scal) =  \sum_{i \in \Scal} [1-y_i h(\xb_i)]_{+}.
\end{equation}
%
%
%
In the above, the first term $\lambda |\Vcal| \|\wb^*(\Vcal)\|^2 + \sum_{i \in \Vcal} [1-y_i (\wb^{*}(\Vcal)^\top \tf{\xb_i}+b^{*}(\Vcal)) ]_{+}$ ensures that the function 
$g(\Scal)$ is non-negative and the function $c(\Scal)$ is clearly non-negative and modular\footnote{\scriptsize A set function $f(\Scal)$ is modular iff it satisfies that $f(\Scal \cup \{j \}) - f(\Scal) = f(\Lcal \cup \{j\}) - f(\Lcal)$ for all $\Scal \subseteq \Lcal \subseteq \Vcal$ and $j \in \Vcal$.}.
%
Moreover, we have the following proposition, which shows that $g(\Scal)$ is a monotone function (proven in Appendix ~\ref{app:mont}):
%
\begin{proposition}\label{thm:mont}
The set function $g(\Scal)$, defined in Eq.~\ref{eq:g-def}, is monotone, \ie, $g(\Scal \cup \{ j \}) - g(\Scal) \geq 0$ for all $\Scal \subseteq \Vcal$ and $j \in \Vcal$.
\end{proposition}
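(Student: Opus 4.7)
The plan is to isolate the only $\Scal$-dependent part of $g(\Scal)$ and show it is non-increasing in $\Scal$. Observe that the first two terms of $g(\Scal)$, namely $\lambda|\Vcal|\|\wb^{*}(\Vcal)\|^{2} + \sum_{i\in\Vcal}[1 - y_{i}(\wb^{*}(\Vcal)^{\top}\tf{\xb_{i}}+b^{*}(\Vcal))]_{+}$, are constants that do not depend on $\Scal$. Hence
\[
g(\Scal \cup \{j\}) - g(\Scal) \;=\; F(\Scal) - F(\Scal \cup \{j\}),
\]
where
\[
F(\Scal) \;\defeq\; \lambda|\Vcal\setminus\Scal|\,\|\wb^{*}(\Vcal\setminus\Scal)\|^{2} + \sum_{i\in\Vcal\setminus\Scal}\bigl[1-y_{i}(\wb^{*}(\Vcal\setminus\Scal)^{\top}\tf{\xb_{i}}+b^{*}(\Vcal\setminus\Scal))\bigr]_{+}.
\]
So it suffices to prove $F(\Scal \cup \{j\}) \le F(\Scal)$ for any $\Scal \subseteq \Vcal$ and $j \in \Vcal$.

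Next, define the joint training objective $L(\Tcal;\wb,b) \defeq \sum_{i\in\Tcal}\bigl[\lambda\|\wb\|^{2} + (1-y_{i}(\wb^{\top}\tf{\xb_{i}}+b))_{+}\bigr]$, so that by definition $F(\Scal) = \min_{\wb,b} L(\Vcal\setminus\Scal;\wb,b) = L(\Vcal\setminus\Scal; \wb^{*}(\Vcal\setminus\Scal), b^{*}(\Vcal\setminus\Scal))$. Set $\Tcal = \Vcal\setminus\Scal$ and let $(\wb^{*},b^{*}) = (\wb^{*}(\Tcal),b^{*}(\Tcal))$ be the minimizer on $\Tcal$. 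Plugging this \emph{same} pair into the smaller sum over $\Tcal \setminus \{j\} = \Vcal\setminus(\Scal\cup\{j\})$ removes exactly one non-negative summand:
\[
L(\Tcal\setminus\{j\};\wb^{*},b^{*}) \;=\; L(\Tcal;\wb^{*},b^{*}) - \lambda\|\wb^{*}\|^{2} - \bigl[1 - y_{j}(\wb^{*\top}\tf{\xb_{j}} + b^{*})\bigr]_{+} \;\le\; F(\Scal),
\]
since both $\lambda\|\wb^{*}\|^{2}\ge 0$ and $[\,\cdot\,]_{+} \ge 0$. Because $F(\Scal\cup\{j\})$ is the minimum of $L(\Tcal\setminus\{j\};\cdot,\cdot)$ over all $(\wb,b)$, it can only be smaller, giving $F(\Scal\cup\{j\}) \le L(\Tcal\setminus\{j\};\wb^{*},b^{*}) \le F(\Scal)$, which yields $g(\Scal\cup\{j\}) - g(\Scal) \ge 0$ as claimed.

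There is no real obstacle here; the crucial observation is simply that the regularizer is written inside the sum in the definition of $\wb^{*}(\Vcal\setminus\Scal)$, so shrinking the training set by one element removes a non-negative contribution $\lambda\|\wb\|^{2} + (1-y_{j}(\wb^{\top}\tf{\xb_{j}}+b))_{+}$ from the objective. The only point worth double-checking is that the constant offset $\lambda|\Vcal|\|\wb^{*}(\Vcal)\|^{2} + \sum_{i\in\Vcal}[1-y_{i}(\cdot)]_{+}$ in $g$ truly has no $\Scal$-dependence, so that the monotonicity of $g$ is exactly the non-increasingness of $F$; this is immediate from the definition in Eq.~\ref{eq:g-def}.
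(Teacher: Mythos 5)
Your proof is correct and follows essentially the same route as the paper's: both arguments reduce monotonicity to the observation that the optimal value of the per-sample-regularized SVM objective can only decrease when a sample is dropped, because plugging the old minimizer into the smaller sum removes the non-negative summand $\lambda\|\wb^{*}(\Vcal\setminus\Scal)\|^{2}+[1-y_{j}(\cdot)]_{+}$ and re-optimizing can only help. The paper writes this as an explicit add-and-subtract computation (and states it for $b=0$), but the resulting lower bound on the marginal gain is identical to yours.
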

\begin{figure}[t!]
    \centering
  \subfloat[$\Ccal^{\pm}$ and $\Ccal^{\pm}_{1/s}$]{\includegraphics[width=0.26\textwidth]{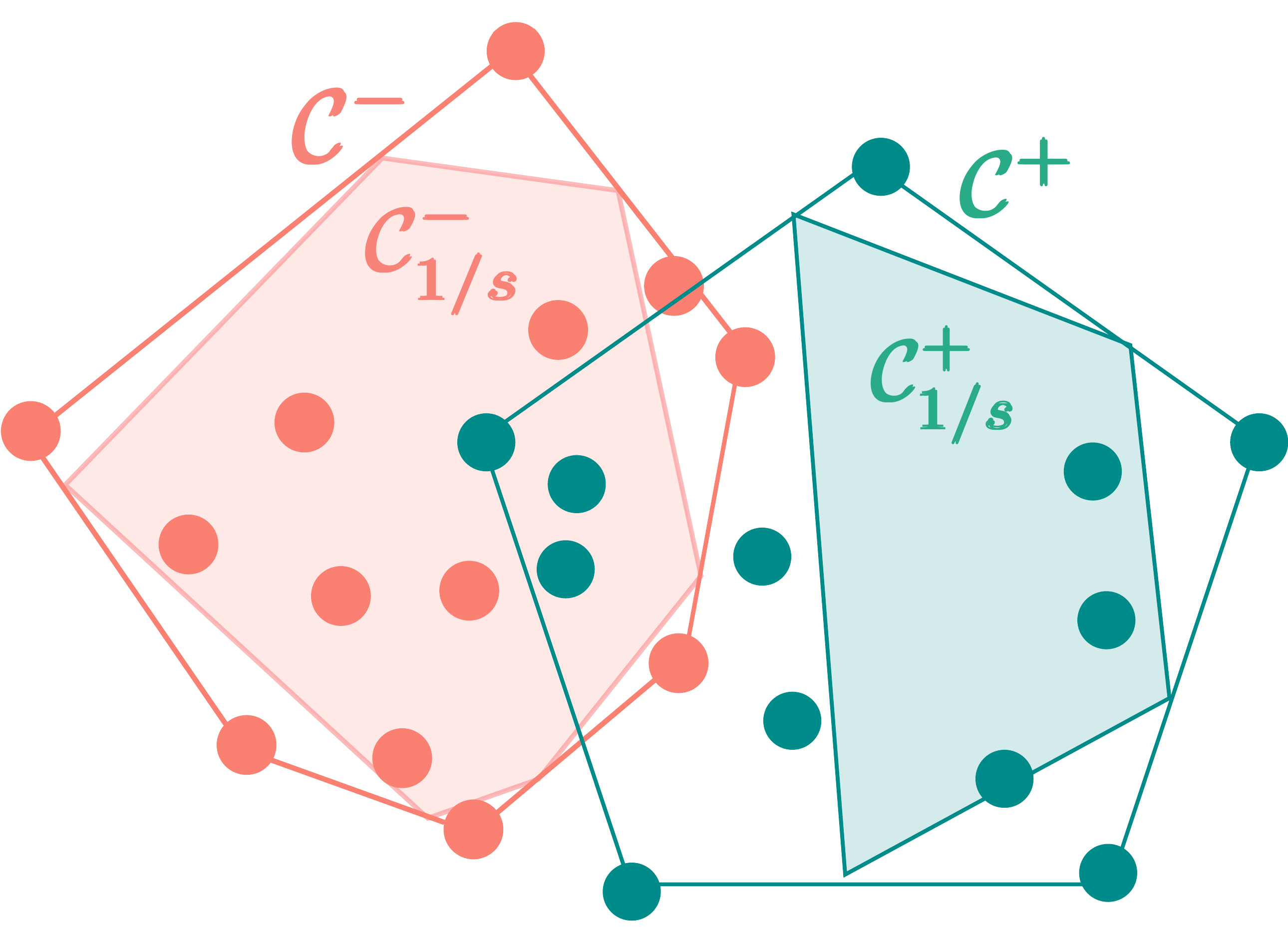}}
\hspace{16mm}
\subfloat[Sequence of $\Ccal^{\pm} _{1/s}$ for $s\in \NN^+$]
{\makebox[14em]{\includegraphics[width=0.26\textwidth]{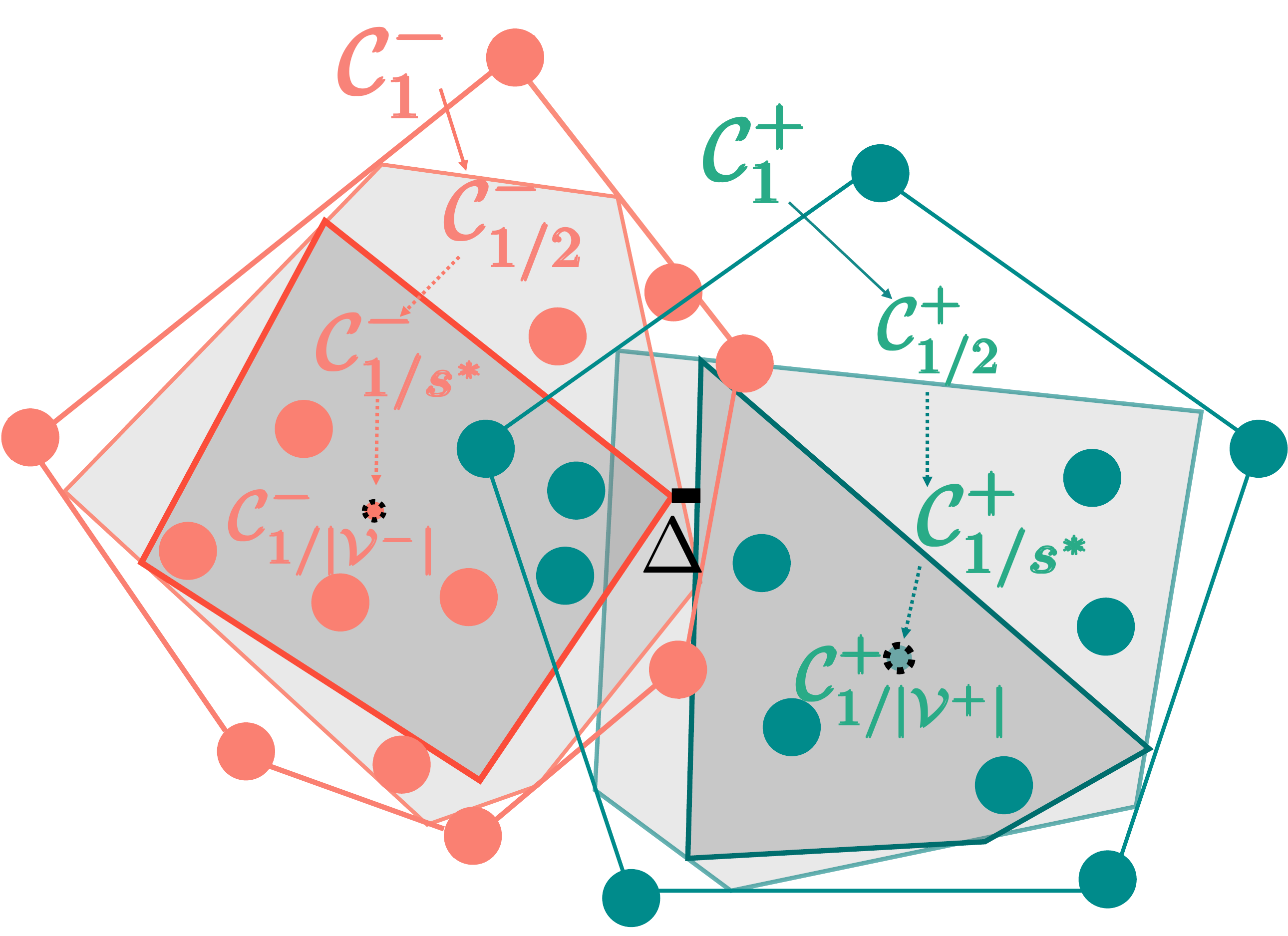}}}
    \caption{Convex hulls and reduced convex hulls. 
    %
    Panels (a) and (b) show the convex hulls $\Ccal^{\pm}$ and reduced convex hulls $\Ccal^{\pm}_{1/s}$ of a training set whose feature vectors 
    are non separable.
    In both panels, cyan and orange dots represent feature vectors $\xb_i$ with $y_i = 1$ and $y_i = -1$, respectively. 
    }
    \label{fig:ch}
\end{figure}

If the feature vectors in the training set are not separable according to their class labels, there exist instances of the problem in which the function $g(\Scal)$ has
sub\-mo\-du\-la\-ri\-ty ratio $\gamma = 0$\footnote{\scriptsize  A set function $f(\Scal)$ is $\gamma$-weakly submodular iff we have $\sum_{j \in \Lcal \backslash \Scal} 
[f(\Scal \cup \{j \}) - f(\Scal)] \geq \gamma [ f(\Scal \cup \Lcal) - f(\Scal) ]$ $\forall \Scal, \Lcal \subseteq \Vcal$ and $j \in \Vcal$. The largest $\gamma \leq 1$ such 
that the inequality is true is called submodularity ratio.}. 
However, we will now identify a general class of feature distributions for which the function $g(\Scal)$ has a nonzero submodularity ratio and its value can be lower 
bounded.
This lower bound will allow a recently introduced deterministic greedy algorithm as well as its randomized va\-riant to enjoy approximation guarantees at sol\-ving 
the pro\-blem. In the remainder, we first consider linear SVMs, \ie, $\tf{\xb} = \xb$, and then nonlinear SVMs.
%

%
Let $\Vcal^+$ and $\Vcal^-$ be the set of training samples with po\-si\-tive and negative labels, respectively, $\Ccal^+$ and $\Ccal^-$ be their 
corresponding convex hulls, \ie,
\begin{equation}
\Ccal^{\pm} = \left\{ \sum_{i \in\Vcal^{\pm}} \mu_i\xb_i \, \Big| \, \sum_{i\in\Vcal^{\pm}}\mu_i=1,\ \mu_i \ge 0 \right\},
\end{equation}
and $\Delta$ be the minimum distance between them, \ie, $\Delta = \min_{\xb^{+} \in \Ccal^{+},\, \xb^{-} \in \Ccal^{-}} \|\xb^+-\xb^-\|_2$.
Then, note that, whenever the feature vectors in the training set are not separable according to their class labels, the above convex hulls 
overlap and thus $\Delta = 0$.
However, under mild technical conditions, there will always exist subsets of feature vectors within these convex hulls that do not overlap, 
as shown in Figure~\ref{fig:ch}. 
To characterize these subsets, we introduce the notion of reduced convex hulls~\cite{bennett2000duality}:
\begin{equation}
\Ccal^{\pm}_{1/s} = \left\{\sum_{i\in\Vcal^{\pm}} \mu_i\xb_i \, \Big| \, \sum_{i\in\Vcal^{\pm}}\mu_i=1,\ 0 \le \mu_i \le \frac{1}{s} \right\},\label{eq:cnp}
\end{equation}
where $s \in \NN^+$, with $s \leq \min\{|\Vcal^+|,|\Vcal^-|\}$ and, similarly as before, we denote the minimum distance between them as 
$\Delta_{1/s} = \min_{\xb^{+}\in\Ccal^{+}_{1/s}, \, \xb^{-}\in\Ccal^{-}_{1/s}}\|\xb^+-\xb^-\|$.

Now, consider the sequence of reduced convex hulls $\{ (\Ccal^{+} _{1/s},\Ccal^{-} _{1/s}) \}_{s=1}^{\Vcal_{\min}}$, where $\Vcal_{\min} = \min\{|\Vcal^+|,|\Vcal^-|\}$,
illustrated in Figure~\ref{fig:ch}(b), and note that the corresponding minimum distances, by definition, satisfy that $\Delta_{1/s} \leq \Delta_{1/s'}$ for all $s' > s$. 
%
%
Then, we measure to what extent feature vectors with positive and negative labels overlap using the distance 
\begin{equation} 
\Delta^* = \min_{s \in \{1, \ldots, \Vcal_{\min}\}} \ \{\Delta_{1/s} \ \vert \ \Delta_{1/s}>0\},~\label{eq:deltas}
\end{equation}
where the higher the value of $s^* = \argmin_{s} \ \{\Delta_{1/s} \ \vert \ \Delta_{1/s}>0\}$, the higher the overlap between feature vectors with positive and negative labels.
Moreover, if there are no elements in the sequence with positive distance, we set $\Delta^{*} = 0$ and $s^{*} = 0$.
%

Given the above, we are now ready to present and prove one of our key results, which characterizes the submodularity ratio of the function $g(\Scal)$ in terms
of the amount of overlap between feature vectors with positive and negative labels, as measured by the distance $\Delta^{*}$ (proven in Appendix~\ref{app:thm:soft-linear-subm}):
%
%
\begin{theorem} \label{thm:soft-linear-subm}
 Let $\tf{\xb} = \xb$,
 $\rho^*=\Vcal_{\min}/{|\Vcal|}$, $\sigma^*={s^*}/{|\Vcal|}$,
$\eta= {\left(2\sqrt{\lambda}+\max_{i\in\Vcal}\bnm{\xb_i}\right) }/{\sqrt{\lambda}}$. Then, the 
submodularity ratio $\gamma$ of the function $g(\Scal)$ (defined in Eq.~\ref{eq:g-def}) satisfies that
\begin{align} 
\gamma\ge 
\gamma^*= 
\dfrac{\min\left\{ \dfrac{\left[\Delta^{\ast}\sigma^* \right]^2}{4\lambda},  \dfrac{1}{(\eta-2)^2} \right\}}
 {\eta  +  \dfrac{\eta^2}{2} \left( \dfrac{1}{2} 
 + \sqrt{\dfrac{1}{4} +\dfrac{4 |\Vcal|   (\eta-1)}{\eta^2}} \right) +  \left(\eta-1\right) |\Vcal| } \nn 
 \end{align}
as long as the number of samples outsourced to humans $n\le ( \rho^*-\sigma^*)|\Vcal|$. 
\end{theorem}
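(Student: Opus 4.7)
The plan is to work directly from the definition of the submodularity ratio. Writing $F_\Scal := \lambda |\Vcal \setminus \Scal| \|\wb^*(\Vcal \setminus \Scal)\|^2 + \sum_{i \in \Vcal \setminus \Scal} [1 - y_i(\wb^*(\Vcal \setminus \Scal)^\top \xb_i + b^*(\Vcal \setminus \Scal))]_+$ for the optimal SVM value on $\Vcal \setminus \Scal$, we have $g(\Scal) = F_\emptyset - F_\Scal$, so the required inequality becomes $\sum_{j \in \Lcal \setminus \Scal}[F_\Scal - F_{\Scal \cup \{j\}}] \ge \gamma^*\,[F_\Scal - F_{\Scal \cup \Lcal}]$ for all $\Scal \subseteq \Lcal \subseteq \Vcal$ with $|\Lcal| \le n$. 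The denominator of $\gamma^*$ must come from an upper bound on the right-hand side and the numerator from a lower bound on the left, so the three ingredients to assemble are an RHS upper bound, an LHS lower bound, and a geometric control of $\|\wb^*(\Vcal \setminus \Scal)\|$.

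For the RHS, I would use $(\wb^*(\Vcal \setminus \Scal), b^*(\Vcal \setminus \Scal))$ as a feasible point in the optimization defining $F_{\Scal \cup \Lcal}$ and vice versa, together with the $1$-Lipschitzness of the hinge loss and the bound $\max_i \|\xb_i\|$, to upper bound $F_\Scal - F_{\Scal \cup \Lcal}$ by a linear combination of $|\Lcal \setminus \Scal|$, the parameter displacement $\|\wb^*(\Vcal \setminus \Scal) - \wb^*(\Vcal \setminus (\Scal \cup \Lcal))\|$, and the shift in $b^*$. Using the first-order optimality conditions of both subproblems, the displacement can then be controlled in terms of $\eta$ and $|\Vcal|$, producing precisely the expression $\eta + \tfrac{\eta^2}{2}\bigl(\tfrac{1}{2} + \sqrt{\tfrac{1}{4} + \tfrac{4|\Vcal|(\eta-1)}{\eta^2}}\bigr) + (\eta - 1)|\Vcal|$ that appears in the denominator.

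For the LHS, each marginal $F_\Scal - F_{\Scal \cup \{j\}}$ is at least the contribution that point $j$ makes to the objective at the minimizer of $F_\Scal$, because $(\wb^*(\Vcal \setminus \Scal), b^*(\Vcal \setminus \Scal))$ is feasible for the smaller problem. Dropping the hinge contribution and keeping only the quadratic part gives $F_\Scal - F_{\Scal \cup \{j\}} \ge \lambda \|\wb^*(\Vcal \setminus \Scal)\|^2$, so summing over $j \in \Lcal \setminus \Scal$ produces $\lambda |\Lcal \setminus \Scal|\, \|\wb^*(\Vcal \setminus \Scal)\|^2$ as the basic lower bound, provided we can control $\|\wb^*(\Vcal \setminus \Scal)\|$ from below.

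The main obstacle is this last step, and it is where the reduced convex hull construction and the assumption $n \le (\rho^* - \sigma^*)|\Vcal|$ are used. By the standard SVM--closest-points duality (Bennett and Bredensteiner, 2000), $\wb^*(\Vcal \setminus \Scal)$ is proportional to the difference between the closest pair of points in the reduced convex hulls $\Ccal^{\pm}_{1/s}$ of the surviving training set, with $s$ determined by $\lambda$ and $|\Vcal \setminus \Scal|$. The constraint $n \le (\rho^* - \sigma^*)|\Vcal|$ guarantees that at least $s^*$ samples of each class remain after outsourcing, so the reduced hulls at index $s^*$ of $\Vcal \setminus \Scal$ are still nonempty and strictly separated; the delicate point is to relate these hulls back to the original hulls of $\Vcal$ so that the separation inherits the original $\Delta^*$ (scaled by $\sigma^*$), which after dividing by $2\sqrt{\lambda}$ produces the bound $\|\wb^*(\Vcal \setminus \Scal)\| \ge \Delta^*\sigma^*/(2\sqrt{\lambda})$ and hence the $[\Delta^*\sigma^*]^2/(4\lambda)$ term in the numerator. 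The alternative term $1/(\eta - 2)^2$ inside the $\min$ handles the small-$\lambda$ regime where this geometric bound is vacuous and must be replaced by a Lipschitz-based one. Combining the three bounds and simplifying yields the stated $\gamma^*$.
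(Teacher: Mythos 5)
Your overall architecture matches the paper's: the per-element bound $g(\Scal\cup\{j\})-g(\Scal)\ge\lambda\bnm{\wb^*(\Vcal\backslash\Scal)}^2$, a stability-type upper bound on $g(\Scal\cup\Lcal)-g(\Scal)$ producing the denominator, and a lower bound on $\bnm{\wb^*(\Vcal\backslash\Scal)}$ producing the numerator. The gap is in the last of these, which you yourself flag as ``the delicate point'' but do not resolve, and the route you propose does not close it. From the dual (Lemma~\ref{prop:svm-dual}), $\wb^*(\Vcal\backslash\Scal)=\tfrac{1}{2\lambda|\Vcal\backslash\Scal|}\sum_i\alpha_i y_i\xb_i$ with $0\le\alpha_i\le1$ and $\sum_i\alpha_i y_i=0$; writing $A=\sum_{i\in\Vcal^+}\alpha_i=\sum_{i\in\Vcal^-}\alpha_i$, this equals $\tfrac{A}{2\lambda|\Vcal\backslash\Scal|}(\ab^+-\ab^-)$ with $\ab^\pm\in\Ccal^\pm_{1/A}$. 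The reduced-hull parameter is therefore \emph{not} ``determined by $\lambda$ and $|\Vcal\backslash\Scal|$'': it is the total dual mass $A$, which a priori can be arbitrarily small (well-separated data with few support vectors), in which case the closest-points distance gives no lower bound on $\bnm{\wb^*}$ because the prefactor $A$ collapses. The missing idea is the paper's dichotomy on the structure of the optimizer. Either (a) each class contains a point with $y_i(\wb^{*\top}\xb_i+b^*)\ge1$, in which case adding the two inequalities (the offset cancels) and Cauchy--Schwarz give $\bnm{\wb^*}\ge1/\max_i\bnm{\xb_i}$, i.e.\ the $1/(\eta-2)^2$ term; or (b) one entire class lies strictly inside the margin, in which case the KKT conditions force $\alpha_i=1$ for all such $i$, pinning $A=|(\Vcal\backslash\Scal)\cap\Vcal^-|\ge s^*$ (this is exactly where $n\le(\rho^*-\sigma^*)|\Vcal|$ enters), so the normalized combinations lie in the reduced hulls $\Ccal^\pm_{1/A}$ of the \emph{original} classes (removed points simply receive weight zero, so no comparison between hulls of $\Vcal\backslash\Scal$ and of $\Vcal$ is needed) and $\bnm{\wb^*}\ge\Delta^*\sigma^*/(2\lambda)$. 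In particular, the $\min$ in the numerator is not a small-$\lambda$ versus large-$\lambda$ split, as you suggest; it is precisely this case split, and without it your numerator bound is unproven.

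A secondary omission: in the stability argument for the denominator you must also control the offset, since the hinge-loss displacement involves $\Delta b^*$ as well as $\Delta\wb^*$ and the offset makes soft-margin SVMs only weakly stable. The paper handles this by shifting $b^*(\Vcal\backslash\Scal)$ so that some training point lies exactly on the margin (Lemma~\ref{lem:bshift}) and then bounding $|b^*(\Vcal\backslash\Scal)|\le1+\kappa/\sqrt{\lambda}$; this is the source of the $(\eta-1)|\Vcal|$ term in the denominator, which your sketch attributes only to the displacement of $\wb^*$.
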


For nonlinear SVMs, rather than characterizing the submodularity ratio in terms of $\Delta^{*}$ and $s^{*}$, we resort to the
spectral properties of the kernel matrix $\Kbb=[K(\xb_i,\xb_j)]_{i,j\in\Vcal}$. In particular, our key result is the fo\-llo\-wing Theorem (proven in Appendix~\ref{app:soft-nonlinear-subm}):
%
\begin{theorem} \label{thm:soft-nonlinear-subm}
Let  $\eta= {\left(2\sqrt{\lambda}+\max_{i\in\Vcal}\sqrt{K(\xb_i,\xb_i)} \right) }/{\sqrt{\lambda}}$, $\Yb=\diag(\{y\}_{i\in\Vcal})$, $\rho^*=\Vcal_{\min}/{|\Vcal|}$, and 
\begin{equation*}
\displaystyle{\zeta=\minz{\mub \ge 0, \sum_{i\in\Vcal^{+}}\mu_i=1, \sum_{i\in\Vcal^{-}}\mu_i=1}\  \mub^\top\Yb^\top \KK \Yb  \mub}, 
\end{equation*}
If the kernel matrix is full rank, \ie, $\text{rank}(\Kbb)=|\Vcal|$, then the submodularity ratio $\gamma$ of the function $g(\Scal)$ satisfies that
 \begin{align} 
\gamma\ge 
\gamma^*= 
\dfrac{\min\left\{\dfrac{\zeta \sigma^{*2}}{4\lambda}, \dfrac{1}{(\eta-2)^2} \right\}}
 {\eta  +  \dfrac{\eta^2}{2} \left( \dfrac{1}{2} 
 + \sqrt{\dfrac{1}{4} +\dfrac{4 |\Vcal|   (\eta-1)}{\eta^2}} \right) +  \left(\eta-1\right) |\Vcal| }\nn
 \end{align}
as long as $n\le ( \rho^*-\sigma^*)|\Vcal|$ for some $\sigma^* \in (0,\rho^*]$.
\end{theorem}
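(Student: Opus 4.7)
The plan is to lift the argument for Theorem~\ref{thm:soft-linear-subm} from $\RR^{\featdim}$ to the reproducing kernel Hilbert space $\mathcal{H}$ induced by $K(\cdot,\cdot)$, so that every Euclidean quantity in the linear proof is replaced by its kernel analog. First, by the representer theorem, the optimal weight $\wb^{*}(\Vcal \backslash \Scal)$ lies in the span of $\{\tf{\xb_i}\}_{i \in \Vcal \backslash \Scal}$, which lets us rewrite every inner product $\wb^{*\top}\tf{\xb_i}$ and every norm $\bnm{\wb^{*}}$ purely in terms of entries of $\KK$ and the dual SVM coefficients. This means the algebraic manipulations used to bound the marginal gains $g(\Scal \cup \{j\}) - g(\Scal)$ in the linear proof carry over verbatim once we substitute $\xb_i \mapsto \tf{\xb_i}$ and $\xb_i^{\top}\xb_j \mapsto K(\xb_i,\xb_j)$. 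The bound $\max_{i \in \Vcal}\bnm{\xb_i}$ appearing in $\eta$ is correspondingly replaced by $\max_{i \in \Vcal}\sqrt{K(\xb_i,\xb_i)}$, which is exactly how $\eta$ is redefined in the theorem.

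The central geometric identity is that, for any $\mub \ge 0$ with $\sum_{i\in\Vcal^{+}}\mu_i = 1$ and $\sum_{i\in\Vcal^{-}}\mu_i = 1$,
\begin{equation*}
\mub^{\top}\Yb^{\top}\KK\Yb\mub \; = \; \Big\| \sum_{i\in\Vcal^{+}}\mu_i\tf{\xb_i} - \sum_{i\in\Vcal^{-}}\mu_i\tf{\xb_i} \Big\|_{\mathcal{H}}^{2},
\end{equation*}
so that $\zeta$ is precisely the squared distance between the convex hulls of the positive and negative images in $\mathcal{H}$. Consequently, $\zeta$ plays the role that $(\Delta^{*})^{2}$ played in Theorem~\ref{thm:soft-linear-subm}, and the factor $\sigma^{*2}$ is inherited from the same combinatorial restriction to sub-hulls built from only $(\rho^{*}-\sigma^{*})|\Vcal|$ indices kept after outsourcing. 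I would re-derive the chain of inequalities on $\sum_{j \in \Lcal \backslash \Scal}[g(\Scal \cup \{j\}) - g(\Scal)]$ versus $g(\Scal \cup \Lcal) - g(\Scal)$ exactly as in the linear proof, but use this squared-RKHS-distance identity whenever the linear proof invoked $\Delta^{*}$.

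A genuine simplification in the kernel setting is that the reduced convex hull device $\Ccal^{\pm}_{1/s}$ from the linear theorem is no longer needed for ensuring positivity. When $\mathrm{rank}(\KK) = |\Vcal|$, the images $\{\tf{\xb_i}\}_{i \in \Vcal}$ are linearly independent in $\mathcal{H}$, so any admissible $\mub$ satisfies $\Yb\mub \neq \mathbf{0}$ (since $\sum_{\Vcal^{+}}\mu_i = 1 > 0$), which in turn forces $\mub^{\top}\Yb^{\top}\KK\Yb\mub > 0$ because $\KK$ is positive definite. Hence $\zeta > 0$ is automatic under the full-rank hypothesis and there is no need to thin the convex combinations via an upper bound $\mu_i \le 1/s$; the parameter $\sigma^{*}$ survives only as a handle on the outsourcing budget $n \le (\rho^{*}-\sigma^{*})|\Vcal|$.

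The main obstacle will be making sure that the perturbation estimates relating $(\wb^{*}(\Vcal \backslash \Scal), b^{*}(\Vcal \backslash \Scal))$ for different $\Scal$, which underpin the linear bound, remain tight when the ambient space is $\mathcal{H}$ rather than $\RR^{\featdim}$. Concretely, I expect this to reduce to invoking the SVM KKT conditions in dual form and using the full-rank assumption on $\KK$ to obtain a uniform lower bound on the relevant Gram submatrix, which then lets the marginal-gain inequalities of the linear proof close with $\zeta$ replacing $(\Delta^{*})^{2}$ and with every other factor depending on $\lambda$, $\eta$, $|\Vcal|$, and $\sigma^{*}$ appearing identically.
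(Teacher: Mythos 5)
Your proposal is correct and follows essentially the same route as the paper: lower-bound the singleton gain by $\lambda\bnm{\wb^*(\Vcal\cp\Scal)}^2$ via the kernelized dual and KKT case analysis, identify $\zeta=\mub^\top\Yb^\top\KK\Yb\mub$ as the squared RKHS distance between the class convex hulls (with full rank of $\KK$ guaranteeing $\zeta>0$, so the reduced-hull device is unnecessary), and reuse the stability bound of Lemma~\ref{lem:key} for the denominator. The only slight imprecision is your closing speculation that the full-rank hypothesis is also needed to control the perturbation estimates; in the paper it enters solely through the positivity of $\zeta$, and the stability lemma holds without it.
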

Finally, for the particular case of (soft margin) SVMs without offset, \ie, $b = 0$ in Eq.~\ref{eq:soft-lin-svm}, we can derive a stronger 
lower bound, which does not depend on $\Delta^{*}$ and $s^{*}$, by exploiting their greater stability properties
(proven in 
Appendix~\ref{app:soft-linear-without-offset-subm}): 
\begin{theorem} \label{thm:soft-linear-without-offset-subm}
If $\eta= {\left(2\sqrt{\lambda}+\max_{i\in\Vcal} \sqrt{K(\xb_i, \xb_i)}\right) }/{\sqrt{\lambda}}$, then for SVMs without offset, ($b=0$ in Eq.~\ref{eq:soft-lin-svm}), the submodularity ratio of the function $g(\Scal)$ is given by
\begin{equation*}
\gamma\ge 
\gamma^*= 
 {  \min\left\{   \dfrac{1}{(\eta-2)^2}, \dfrac{1}{2} \right\} }\Big/
 {  \left[\eta  +   \dfrac{\eta^2}{2}\right]   }.
 \end{equation*} 
\end{theorem}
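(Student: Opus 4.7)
The plan is to follow the same overall template as the proofs of Theorems \ref{thm:soft-linear-subm} and \ref{thm:soft-nonlinear-subm}: (i) express each marginal gain $g(\Scal \cup \{j\}) - g(\Scal)$ in terms of the change in the SVM primal optimum when $\xb_j$ is removed from the active training set $\Vcal\backslash\Scal$, (ii) derive a lower bound on the individual marginal gains, (iii) derive an upper bound on $g(\Scal \cup \Lcal) - g(\Scal)$, and (iv) combine (ii) and (iii) to read off the submodularity ratio.

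The crucial simplification in the without-offset case is that, with $b=0$, the dual of the soft-margin SVM loses the equality constraint $\sum_i \alpha_i y_i = 0$ and collapses to a purely box-constrained quadratic program over $0\le\alpha_i\le 1$. Since there is no longer any coupling forcing the positive and negative dual masses to balance, I do not need the reduced convex hull construction at all; the minimum-distance quantity $\Delta^{\ast}$ and the scale $\sigma^{\ast}$ that appeared in Theorems \ref{thm:soft-linear-subm}--\ref{thm:soft-nonlinear-subm} can be replaced by absolute constants arising directly from a primal-dual stability argument, which is exactly what produces the clean $1/2$ and $1/(\eta-2)^2$ in the numerator.

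For step (ii) I would split into cases according to the location of the optimal dual variable $\alpha_j^{\ast}$ associated with $\xb_j$. When $\alpha_j^{\ast}$ lies in the interior $(0,1)$, the same continuous perturbation argument used for the offset case, combined with the bound $\|\wb^{\ast}\|\le\sqrt{1/\lambda}$ and the definition of $\eta$, yields a normalized gain of at least $1/(\eta-2)^2$. When $\alpha_j^{\ast}$ is at a box boundary, the absence of the equality constraint allows me to perturb $\alpha_j$ in isolation without having to re-balance the remaining coordinates, and solving the resulting one-variable quadratic yields a clean constant $1/2$. For step (iii), plugging $\wb=0$ into the primal controls $\lambda\|\wb^{\ast}\|^{2}$, and the per-coordinate dual bounds (which no longer accumulate through an offset-balancing constraint) produce an upper bound proportional to $\eta+\eta^{2}/2$, which collapses the $(\eta-1)|\Vcal|$ and $\sqrt{|\Vcal|}$ terms that appear in the denominators of Theorems \ref{thm:soft-linear-subm}--\ref{thm:soft-nonlinear-subm}.

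The main obstacle I expect is establishing the boundary lower bound of $1/2$ uniformly over $\Scal$ and $j$, particularly in the case $\alpha_j^{\ast}=1$ where $\xb_j$ is a margin-violating support vector. One has to show that, even though removing $\xb_j$ may cause some other $\alpha_i$'s to re-optimize, the net change in the dual value is at least half of the change obtained by optimizing $\alpha_j$ alone with the others frozen. Convexity of the dual and the one-sided structure of the box constraint should make this tractable, but the book-keeping---keeping the constant uniform in $j$ and free of any $|\Vcal|$ dependence---is the delicate piece and is precisely what enables the final denominator to drop all $|\Vcal|$-dependent terms.
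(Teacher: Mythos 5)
Your overall architecture matches the paper's: lower-bound the singleton gains, upper-bound $g(\Scal\cup\Lcal)-g(\Scal)$ via the stability argument, and divide. Your step (iii) is essentially exactly what the paper does — it invokes part (II) of its key stability lemma, which is the offset lemma with $b_{\max}=0$, and this is indeed why the $(\eta-1)|\Vcal|$ and square-root terms disappear, leaving $\eta|\Lcal|+\tfrac{\eta^2}{2}|\Lcal|$. You also correctly diagnose which case produces which constant ($1/(\eta-2)^2$ from an interior dual optimum, $1/2$ from the boundary).

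The genuine gap is in step (ii), and it sits precisely where you flag "the delicate piece." You propose to analyze the dual of the \emph{full} problem on $\Vcal\backslash\Scal$ and argue that perturbing $\alpha_j^{*}$ (with a re-optimization of the other coordinates) loses at most a factor of $2$; that claim is unproven, and it is harder than it looks because the dual objective carries a normalization $\tfrac{1}{4\lambda|\Vcal\backslash\Scal|}$ that itself changes when $j$ is removed, so the "one-variable quadratic with the others frozen" is not even the right comparison object. The paper avoids all of this with one observation you are missing: from the monotonicity proof, $g(\Scal\cup\{j\})-g(\Scal)\ \ge\ \lambda\bnm{\wb^{*}(\Vcal\backslash\Scal)}^2+(1-y_j\,\wb^{*}(\Vcal\backslash\Scal)^{\top}\Phi(\xb_j))_{+}$, and since $\wb^{*}(\Vcal\backslash\Scal)$ is just one feasible $\wb$, this is at least $\min_{\wb}\bigl[\lambda\bnm{\wb}^2+(1-y_j\wb^{\top}\Phi(\xb_j))_{+}\bigr]$ — the optimal value of a \emph{single-sample} SVM. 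Its dual is the scalar program $\max_{\alpha\in[0,1]}\alpha-\alpha^2 K(\xb_j,\xb_j)/(4\lambda)$, whose value is at least $\min\{\lambda/\max_i K(\xb_i,\xb_i),\,1/2\}=\min\{1/(\eta-2)^2,\,1/2\}$ by an elementary one-variable computation. No case analysis on $\alpha_j^{*}$ in the full dual, no uniformity-in-$\Scal$ bookkeeping, and no factor-of-two lemma is needed. Without this reduction (or a completed proof of your "at least half" claim), your argument for the numerator does not go through.
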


%
%
\xhdr{Proof sketch of our key technical results} The proofs of Theorems~\ref{thm:soft-linear-subm},~\ref{thm:soft-nonlinear-subm} and~\ref{thm:soft-linear-without-offset-subm}
consist of two steps. 
In the first step, they show that $g(\Scal\cup \{j\})-g(\Scal) \ge \lambda \bnm{\wb^*(\Vcal\cp\Scal)}^2$ and use the dual formulation of SVM as well as the properties of the corresponding 
SVM model to derive a lower bound of $\bnm{\wb^*(\Vcal\cp\Scal)}$. 
In the second step, they use the stability pro\-per\-ty of SVM~\cite{bousquet2002stability} to derive the upper bound on
$g(\Scal\cup\Lcal)-g(\Scal)$. These two steps together lead to the bound on $\gamma^*$.

While the bounds in all the above theorems are tight in terms of the size of the dataset, there are notable differences 
between the different model classes. For SVMs with offsets, Theorems~\ref{thm:soft-linear-subm} and~\ref{thm:soft-nonlinear-subm} suggest that the submodularity ratio bound 
decreases proportionally to $1/|\cal{V}|$ and this is due to their poor stability properties~\cite{hush2007stability}. For SVMs without offsets, Theorem~\ref{thm:soft-linear-without-offset-subm} tells us that the submodularity 
ratio bound is independent of $|\cal{V}|$ and this is due to their greater stability properties~\cite{bousquet2002stability}. More specifically, for any $\Lcal \subseteq \Vcal \backslash \Scal$, 
the marginal gain $g(\Scal\cup\Lcal) - g(\Scal)$ can be upper bounded by a smaller quantity than in the case of SVMs with offsets and this results into a stronger lower bound.

\begin{algorithm}[t]
\SetAlgoNoLine
  \mbox{\textbf{Input}: Ground set $\Vcal$, functions $g$ and $c$, parameters $n\text{ and }\gamma$}.\\
  \textbf{Initialize}: $\Scal \leftarrow \varnothing $\\
  \For{$i=0,\ldots,n-1$}{
  $\omega_i\leftarrow \left(1-\frac{\gamma}{n}\right)^{n-(i+1)}$\\
    $  k^{*} \leftarrow \displaystyle{\argmax_{k\in\Vcal \setminus \Scal}}  \left\{ \textstyle{\omega_i\cdot\left[ g(\Scal \cup \{k\}) - g(\Scal)  \right]}      - c(\{k\}) \right\}$\\
   \If{$\omega_i\cdot\left[ g(\Scal \cup \{k^{*}\}) - g(\Scal) \right] - c(\{k^{*}\}) > 0$}{
    $\Scal \leftarrow \Scal \cup \{k^{*}\} $\
    }
 }
\textbf{Return} $\Scal$  
\caption{Distorted greedy algorithm} \label{alg:greedy}
\end{algorithm}

\xhdr{Distorted greedy algorithm} 
%
%
The distorted greedy algorithm~\cite{harshaw2019submodular} proceeds iteratively and, at each iteration, it assigns to the humans the sample $(\xb_k,y_k)$ that provides the 
highest marginal \emph{distorted} gain among the remaining training samples $\Vcal \setminus \Scal$.
Algorithm~\ref{alg:greedy} summarizes the algorithm, which requires the value of the submodularity ratio $\gamma$ as an input. Since we only have data dependant bounds on
$\gamma$, in our experiments, we use the meta algorithm proposed in~\citet{harshaw2019submodular} to \emph{guess} the value of 
$\gamma$.

Since we have shown that the objective function in Eq.~\ref{eq:soft-lin-svm-2} can be expressed as the difference of two functions $g - c$, where g is monotone, non-negative and $\gamma$-weakly
submodular and $c$ is non-negative modular, it readily follows from Theorem 3 in~\citet{harshaw2019submodular} that the distorted greedy algorithm enjoys approximation 
guarantees. More specifically, the distorted greedy algorithm is guaranteed to return a set $\Scal$ such that
\begin{equation}
	g(\Scal)-c(\Scal) \geq (1-e^{-\gamma})g(\texttt{OPT}) - c(\texttt{OPT}),
\end{equation}
where $\texttt{OPT}$ is the optimal set and $\gamma \leq \gamma^{*}$ with $\gamma^{*}$ defined in Theorem~\ref{thm:soft-linear-subm} (linear SVMs with offset), Theorem~\ref{thm:soft-nonlinear-subm} (nonlinear SVMs with offset), or Theorem~\ref{thm:soft-linear-without-offset-subm} (SVMs without offset).

In addition to the distorted greedy algorithm, which needs to make $O(n |\Vcal|)$ evaluations of the function $g(\cdot)$, ~\citet{harshaw2019submodular}  has also proposed a randomized variant of
the algorithm, which enjoys an asymptotically faster run time due to the use of sampling techniques and is also applicable to our problem. Instead of optimizing over the entire ground 
set $\Vcal$ at each iteration, it optimizes over a random sample $\Bcal_i \subseteq \Vcal$ of size $O(\frac{|\Vcal|}{n} \log \frac{1}{\epsilon})$. Hence, it only needs to make 
$O(|\Vcal| \log(\frac{1}{\epsilon}))$ evaluations of $g(\cdot)$ and it returns a set $\Scal$ such that
%
\begin{equation}
	\EE[g(\Scal)-c(\Scal)] \geq (1-e^{-\gamma}-\epsilon)g(\texttt{OPT}) - c(\texttt{OPT}).
\end{equation}
In the next sections, we will demonstrate that, in addition to enjoying the above approximation guarantees, the distorted greedy algorithm as well as its randomized variant
perform better in practice than several competitive baselines.

\section{Experiments on Synthetic Data}
\label{sec:synthetic}
%
%
%
In this section, we first look into the solutions provided by the distorted greedy algorithm (Alg.~\ref{alg:greedy}) in a variety of synthetic examples.
%
Then, we compare the performance of the distorted greedy algorithm and its randomized variant with several competitive 
baselines.
%
%
Finally, we investigate the influence that the amount of human error has on the number of samples outsourced to 
humans by the distorted greedy algorithm. 
%
%
\begin{figure*}[t!]
    \centering
	 {\includegraphics[width=0.6\textwidth]{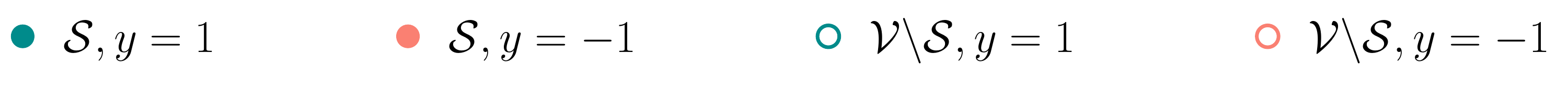}\label{fig:ltr}}\\
     \subfloat[Linear SVM]{
        \scriptsize
	\stackunder[5pt]{\includegraphics[width=0.23\textwidth]{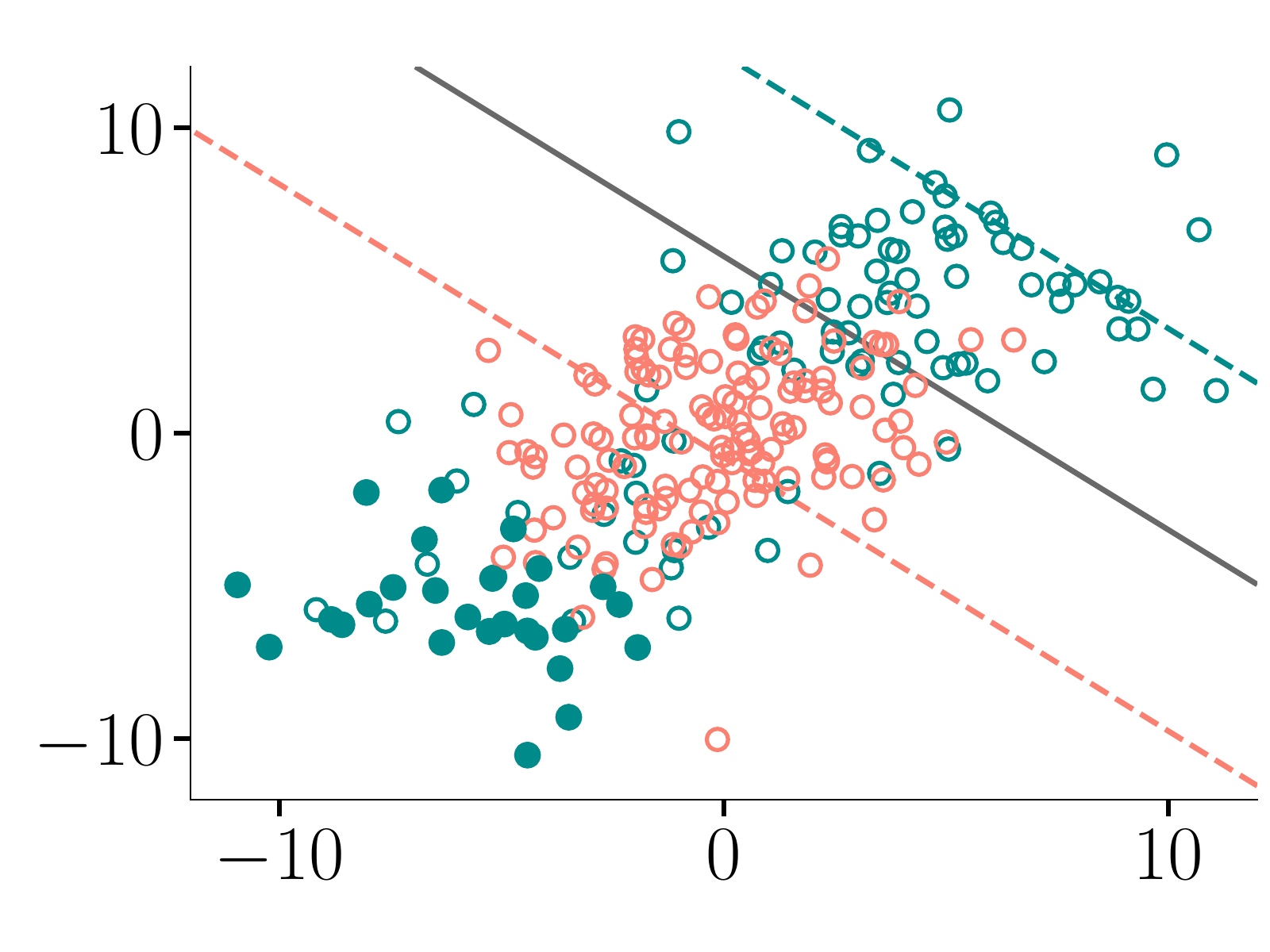}}{$n/|\Vcal| = 0.12$}
	\stackunder[5pt]{\includegraphics[width=0.23\textwidth]{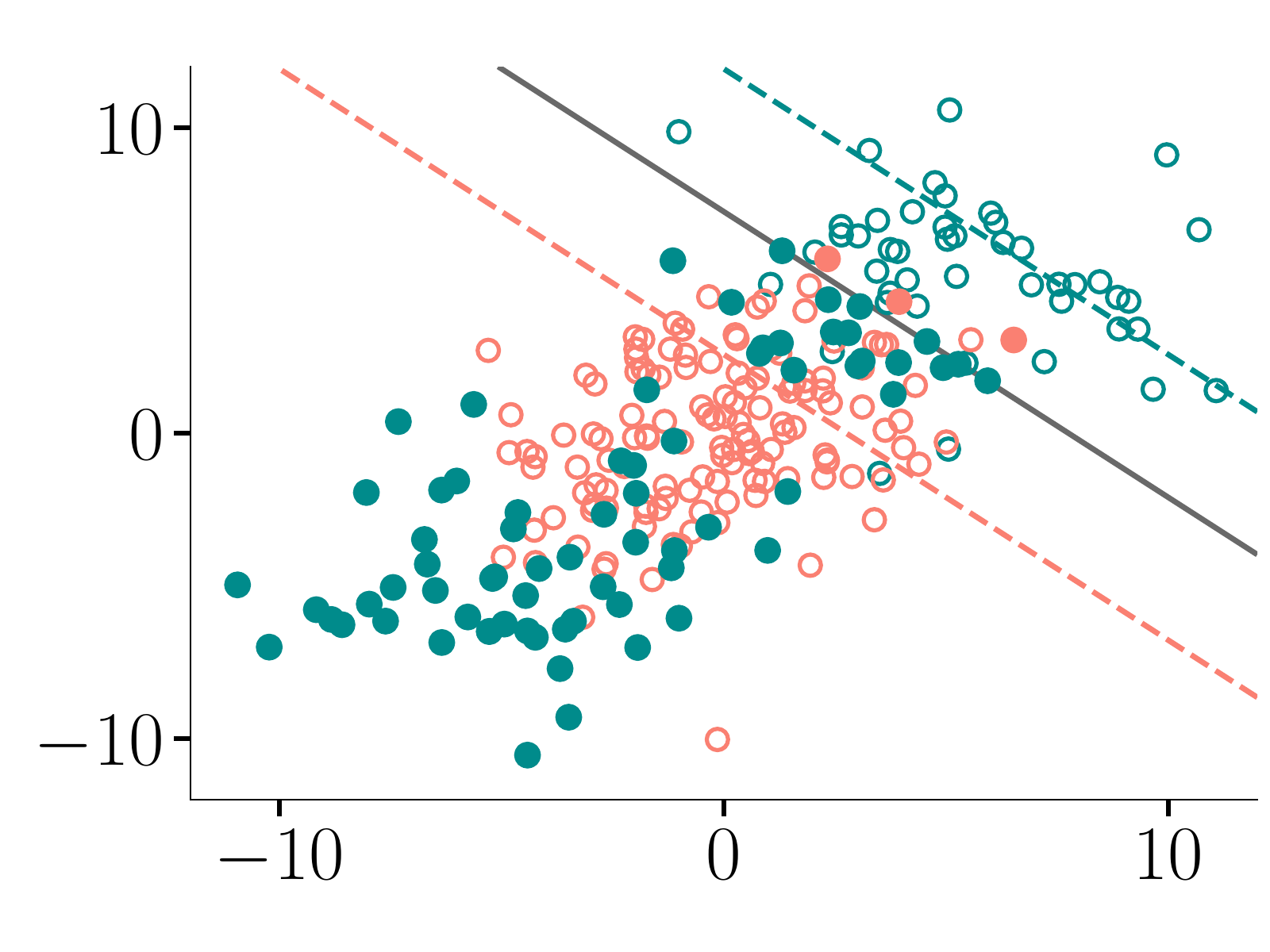}}{$n/|\Vcal| = 0.3$}
     } \hspace{8mm}
     \subfloat[Nonlinear SVM]{
        \scriptsize
	\stackunder[5pt]{\includegraphics[width=0.23\textwidth]{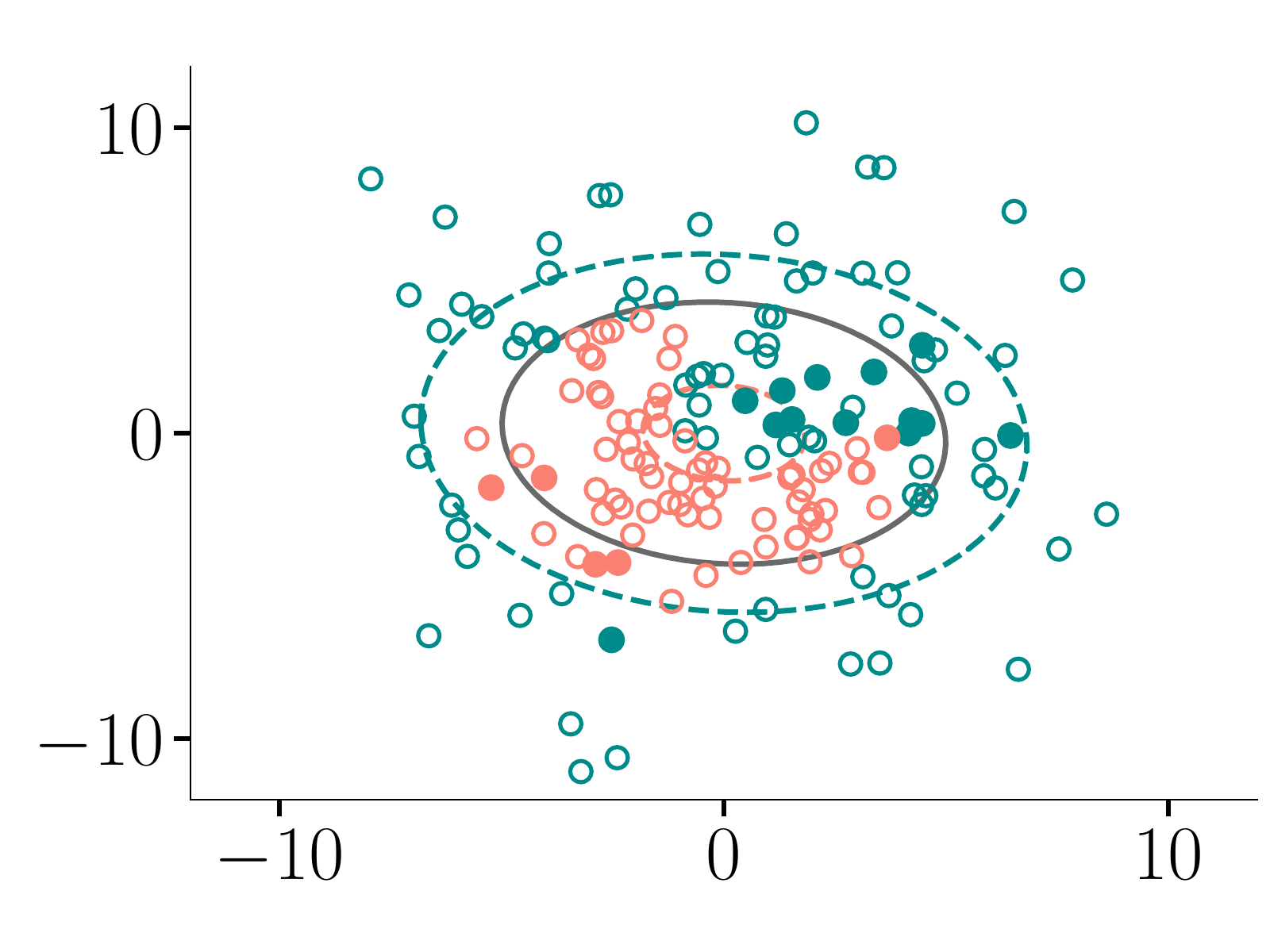}}{$n/|\Vcal| = 0.12$}
	\stackunder[5pt]{\includegraphics[width=0.23\textwidth]{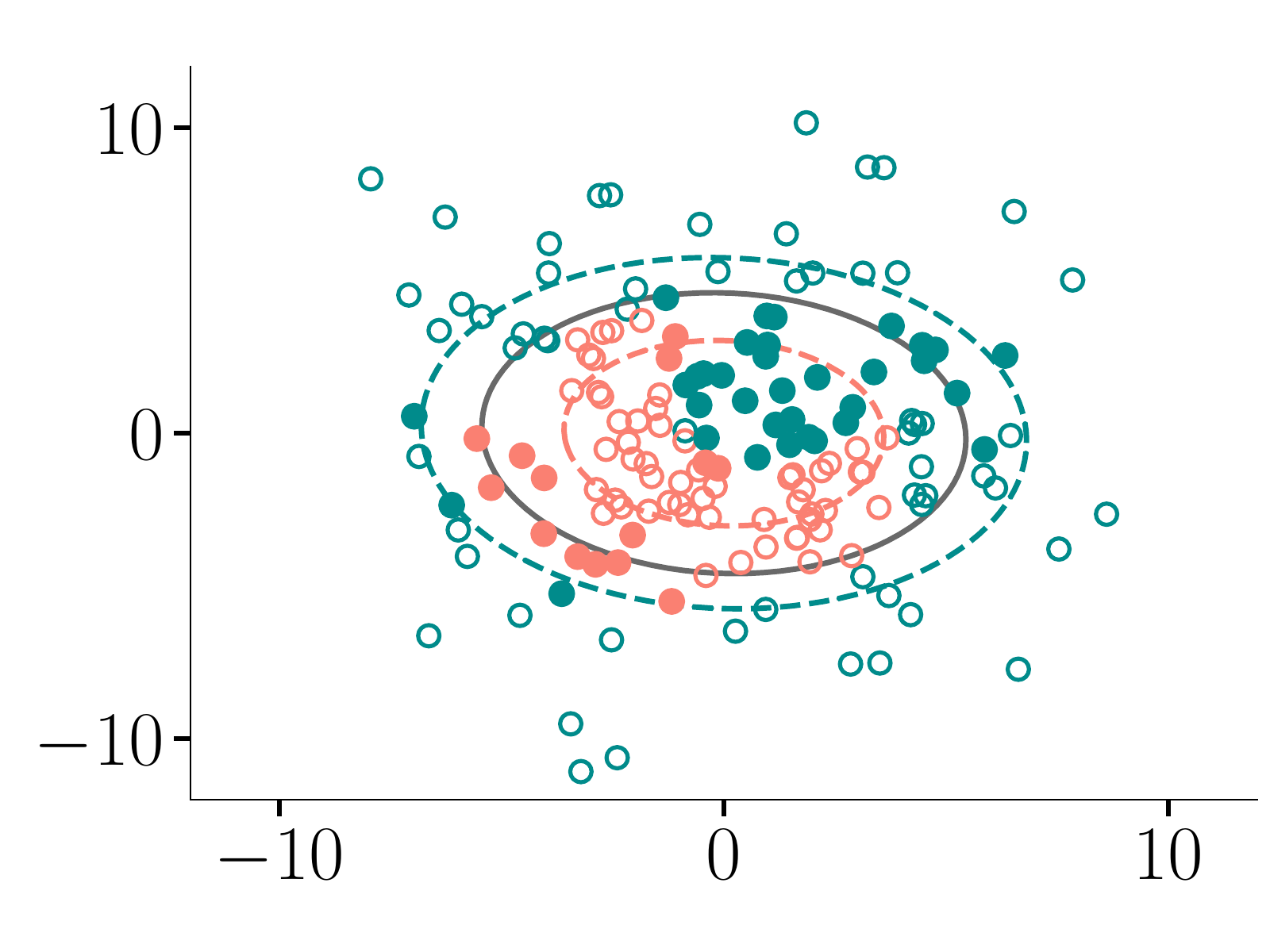}}{$n/|\Vcal| = 0.3$}
      }
\caption{Linear and non linear support vector machines with offset under human assistance trained using the stochastic greedy algorithm (Alg.~\ref{alg:greedy}). 
In each panel, circles represent the feature vectors in the training set, filled and empty circles are assigned to humans and machines, respectively, 
the solid line indicates $\wb^{*\top}(\Vcal\cp \Scal^*)  \phi(\xb) +b(\Vcal\cp \Scal^*)=0$, and the cyan and orange dashed lines indicate 
$\wb^{*\top}(\Vcal\cp \Scal^*)  \phi(\xb)  +b(\Vcal\cp \Scal^*)= 1$ and $\wb^{*\top}(\Vcal\cp \Scal^*)  \phi(\xb)+b(\Vcal\cp \Scal^*)= -1$, respectively.
For both linear and nonlinear SVMs, we used $\lambda = 1$, $\Delta H=0.2$ and, for nonlinear SVMs, we used a quadratic kernel $K(\xb_i,\xb_j) = (\frac{1}{2}\langle\xb_i,\xb_j\rangle)^2$. 
}
\label{fig:syn-dg-main}
\end{figure*}

\oursub{Experimental setup}
%
In all our experiments, we ge\-ne\-rate $|\Vcal|=400$ samples using a generative process under which the corresponding SVM under full
automation is unable to perform well. 
For linear SVMs with offset, we draw the class labels uniformly at random, \ie,  $y \sim \text{Bernoulli}(0.5)$, and then draw
the features from two diffe\-rent distributions, $p(\xb|y=-1) = \Ncal([0,0], [6, 1; 1, 6] )$ and $p(\xb|y=1) = \beta \Ncal([5,5], [6, 1; 1, 6] ) + (1-\beta) \Ncal([-5,-5], [6, 1; 1, 6] )$, where $\beta\sim\text{Bernoulli}(0.5)$.
For nonlinear SVMs, we draw the features from a single distribution $p(\xb) \sim \Ncal([0,0],  [12, 1; 1, 14] )$ and, for each sampled feature $\xb$, we set $y=+1$ if $\bnm{\xb-[1,1]}\le 2$ or $\bnm{\xb+[1,1]}\ge 5$ and $y=-1$, otherwise. Here, we use a quadratic kernel $K(\xb_i,\xb_j) = (\frac{1}{2}\langle\xb_i,\xb_j\rangle)^2$. 
%
%
Moreover, we generate the scores provided by human experts $h(\xb)$ per label by drawing samples from two uniform distributions, 
{\ie, $h(\xb) \sim \text{Unif}[-\Delta H,-\Delta H+1]$ if $y=1$ and $h(\xb) \sim \text{Unif}[-1+\Delta H,\Delta H]$ otherwise.} The exact 
choice of $\Delta H$ varies across experiments and is mentioned therein.
%
%
Finally, in each experiment, we use 60\% of samples for trai\-ning and 40\% of samples for testing, set $\lambda=1$ and we fo\-llow the
procedure described in Section~\ref{sec:formulation} to train a multilayer perceptron $\pi(\cdot|\xb)$ that decides which samples to 
outsource to humans at test time. 

\xhdr{Baseline methods} We compare the performance of the distorted greedy algorithm and its randomized variant with four 
competitive baselines:

\noindent\emph{---} Triage based on algorithmic uncertainty~\cite{raghu2019algorithmic}: it first trains a support vector machine for full automation, 
\ie, $\Scal = \emptyset$. Then, at test time, it outsources to humans the top $n$ testing samples sorted in decreasing order of the classification 
uncertainty of this support vector machine, defined as $1/|\wb^*(\Vcal)\phi(\xb)+b^*(\Vcal)|$. 

\noindent\emph{---} Triage based on predicted error~\cite{raghu2019algorithmic}: it trains a support vector machine for full automation, \ie, $\Scal = \emptyset$, and two additional supervised models that predict the human error and the support vector machine error. Then, at test time, it 
outsources to humans the top $n$ testing samples sorted in decreasing order of the difference between the predicted machine error and the 
predicted human error.

\noindent\emph{---} Full automation: it trains a support vector machine for full automation and, at test time, the trained support vector machine predicts 
all testing samples.

\noindent\emph{---} No automation: humans predict all testing samples.

We evaluate the performance of all the methods in terms of the misclassification test error $\PP(y\neq\hat{y})$ 
and F1 score on po\-si\-tive test samples.
%
Here, our F1 score is a useful metric in medical diagnosis, since it measures the ability of a model to detect the specific disease.  

\oursub{Results}
First, we look into the solutions $(\Scal^{*}, \wb^{*}(\Vcal\cp \Scal^*),$ $ b^{*}(\Vcal\cp \Scal^*))$ provided by the distorted greedy algorithm. 
Fi\-gure~\ref{fig:syn-dg-main} summarizes the results, which shows that:
(i) the distorted greedy algorithm outsources to humans those samples that are more prone to be misclassified by the machine, \ie, $\{(\xb,y)\,|\, y [ \wb^{*\top}(\Vcal\cp \Scal^*)  \phi(\xb) +b(\Vcal\cp \Scal^*)] < 0\}$; and,
(ii) the higher the samples $n$ outsourced to humans, the lower the number of training samples inside the margin, \ie, $\{(\xb,y) \,|\, |\wb^{*\top}(\Vcal\cp \Scal^*)  \phi(\xb) +b(\Vcal\cp \Scal^*)| \le 1\}$, among those samples the SVM needs to decide upon.

Next, we compare the performance of the distorted greedy algorithm and its randomized variant with all the baselines---algorithmic uncertainty triage, predicted error triage, 
full automation and no automation---in terms of the misclassification test error $\PP(y\neq\hat{y})$ 
and F1 score on po\-si\-tive test samples.
Figure~\ref{fig:baselines-syn} summarizes the results, which shows that both algorithms consistently outperform all the baselines by large margins.
%
%
%
Finally, we investigate the influence that the amount of human error has on the number of samples $|\Scal| \leq n$ outsourced to humans by the distorted greedy algorithm 
and its randomized variant. 
Figure~\ref{fig:s-vs-n} summarizes the results for the linear SVM with offset. We find that, as long as the amount of human error is small, both algorithms outsource 
$|\Scal| = n$ samples to humans, however, for higher levels of human error, the algorithm outsources fewer samples $|\Scal| < n$ since it is more beneficial for the
minimization of the overall error.
%
%
\begin{figure}[t!!]
\centering
\hspace{0.2cm}{\includegraphics[width=0.7\textwidth]{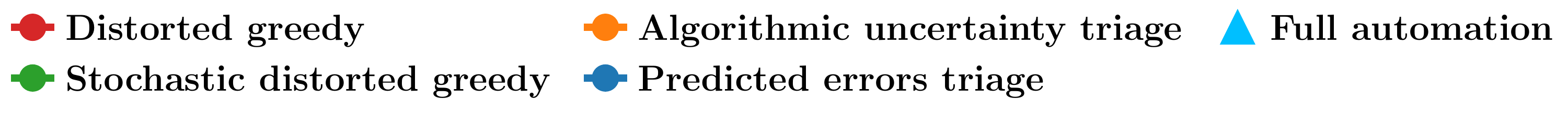}} \\[-2ex]
\subfloat[Misclassification error]{\includegraphics[width=0.27\textwidth]{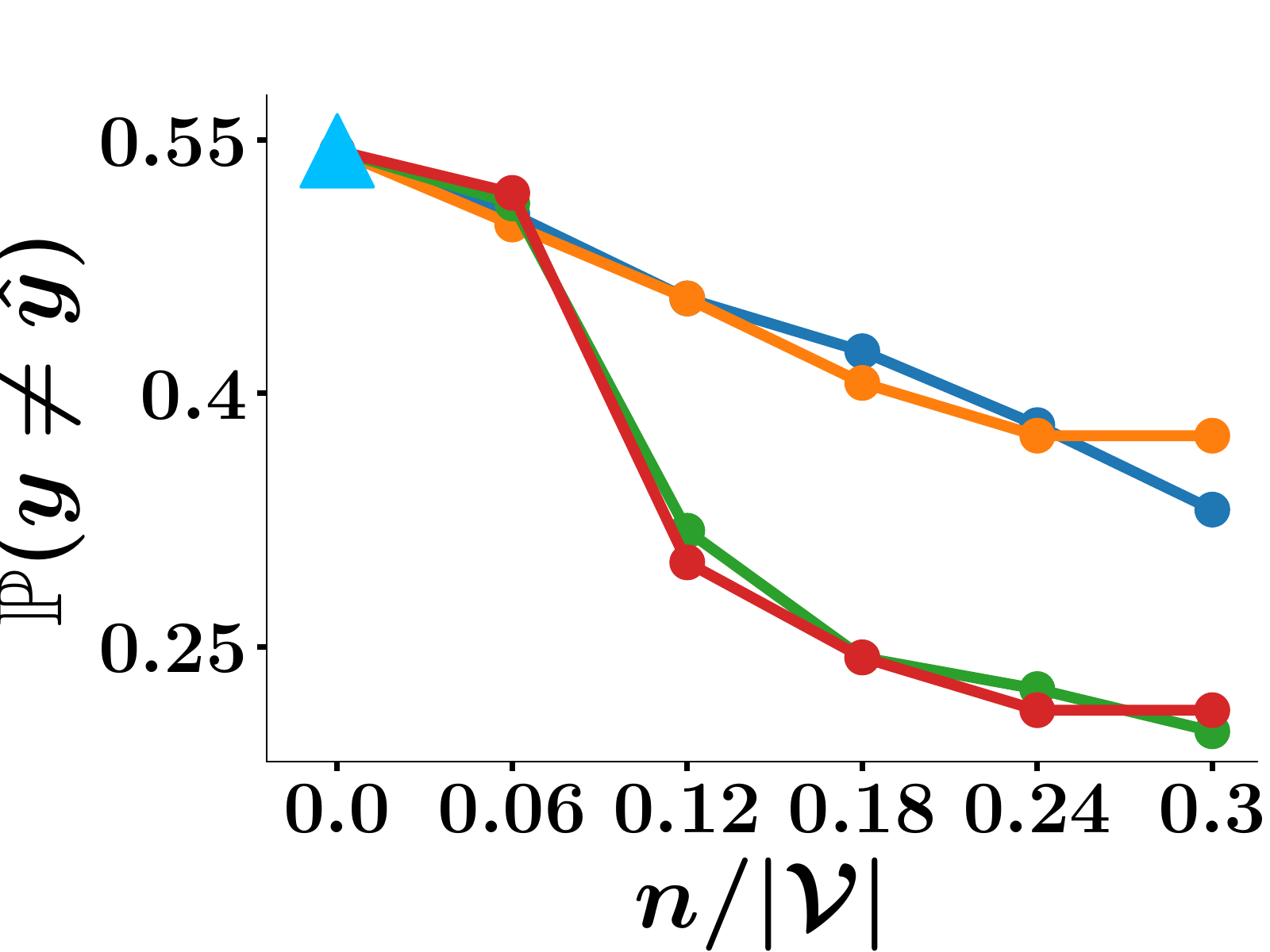}} \hspace{15mm}
\subfloat[F1 score]{\includegraphics[width=0.27\textwidth]{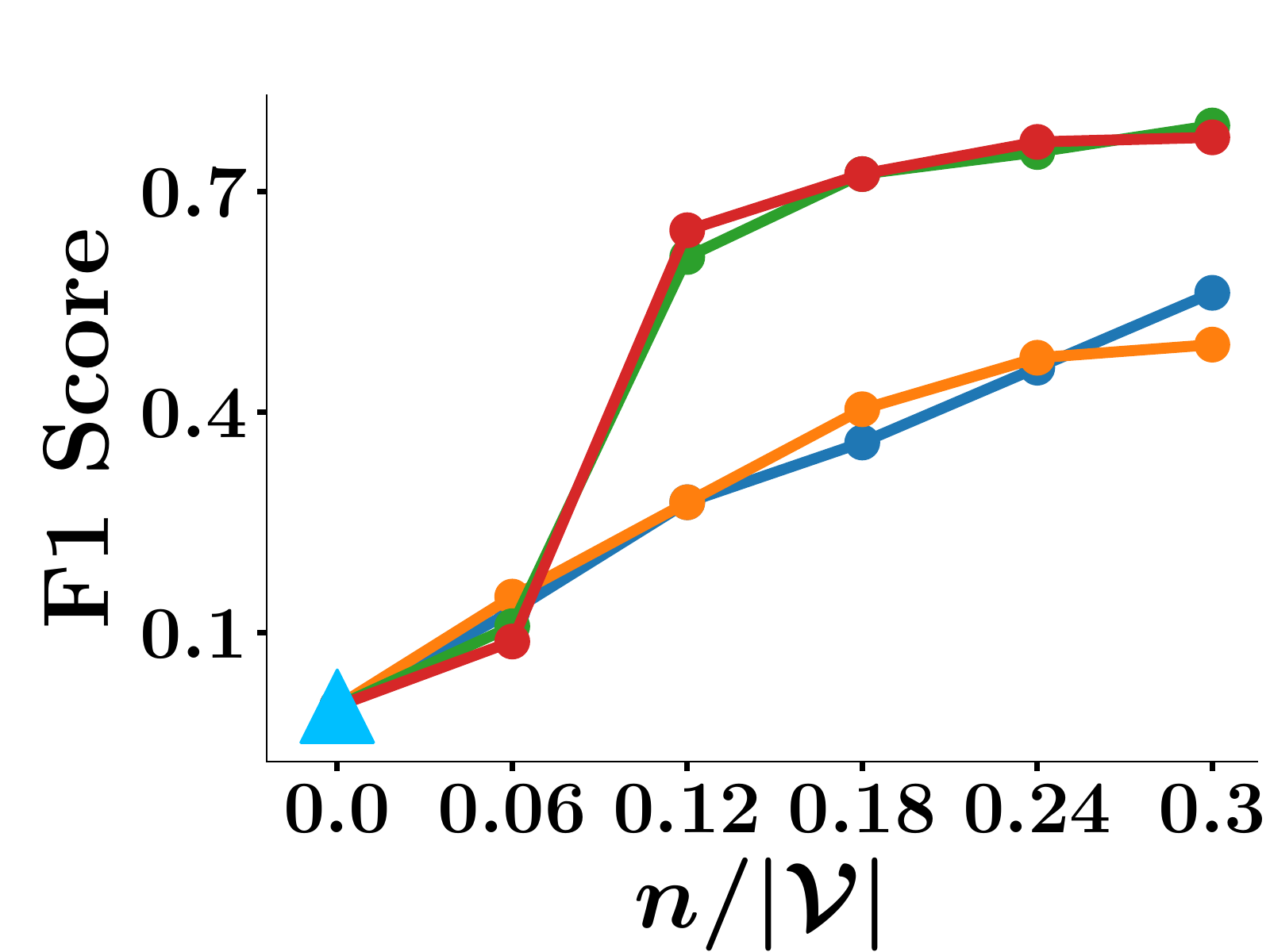}}\\
\caption{Performance of all methods 
on synthetic data with linear SVM with offset. For all methods, we set $\Delta H = 0.2$. If humans predict all testing samples (No automation), 
we have that  {$\PP(\hat{y}\neq y) = 0.19$ and $\text{F1-score} = 0.81$.}}
\label{fig:baselines-syn}
\end{figure}
\begin{figure}[t]
\centering
\hspace{0.0cm}{\includegraphics[width=0.7\textwidth]{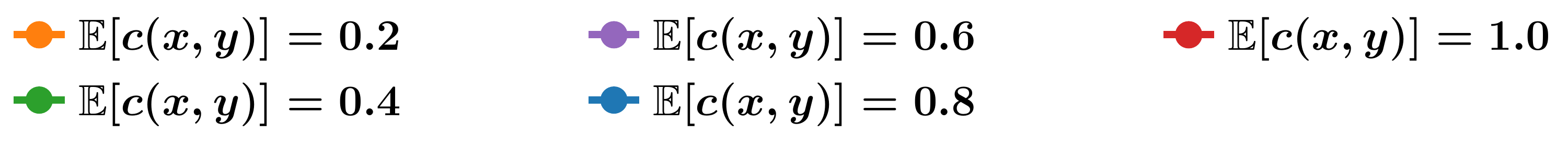}}\\[-2ex]
\subfloat[Distorted greedy]{\includegraphics[width=0.27\textwidth]{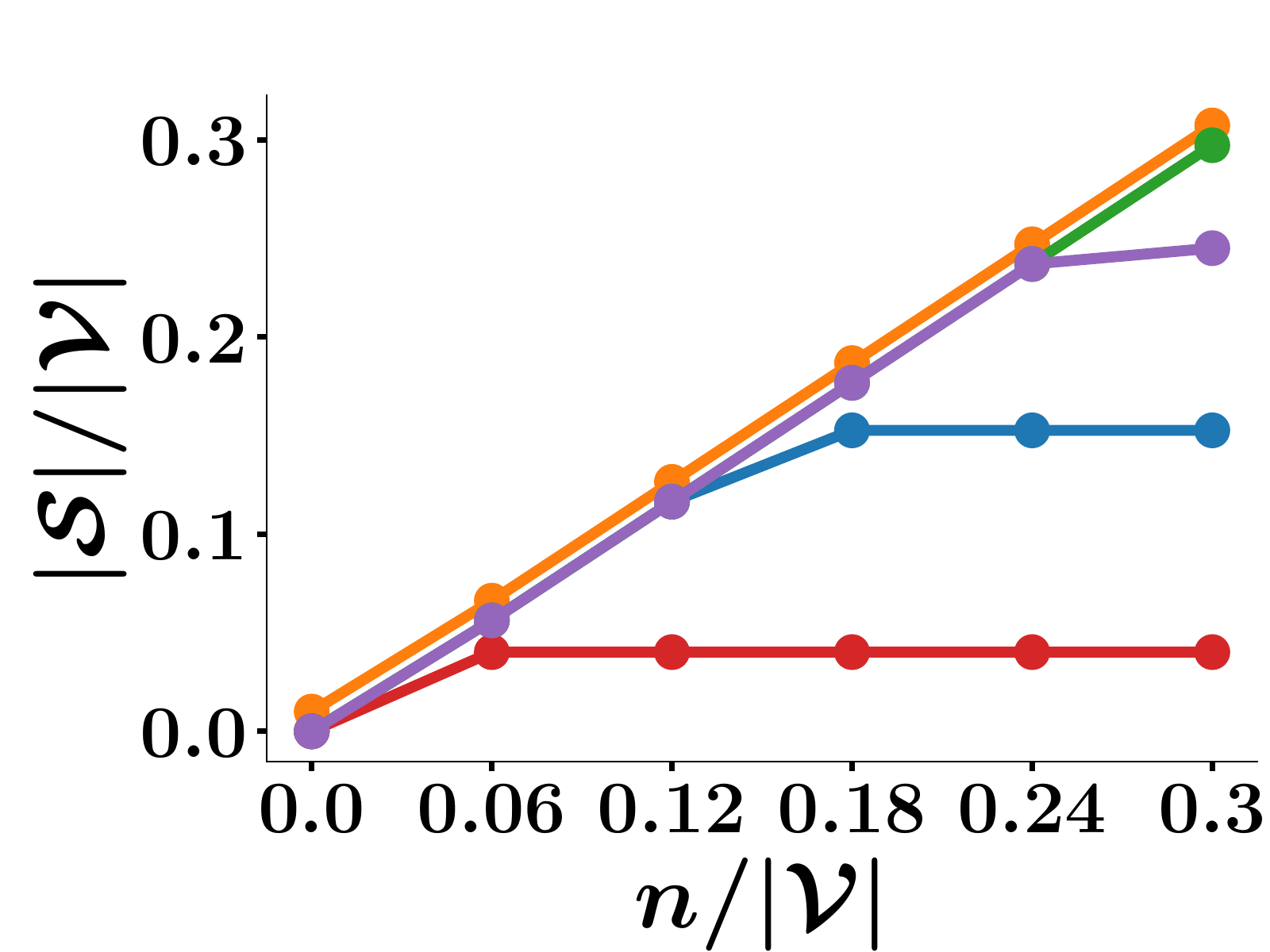}}\hspace{16mm}
\subfloat[\hspace{-0.3mm}Stochastic distorted greedy]{\makebox[12em]{\includegraphics[width=0.27\textwidth]{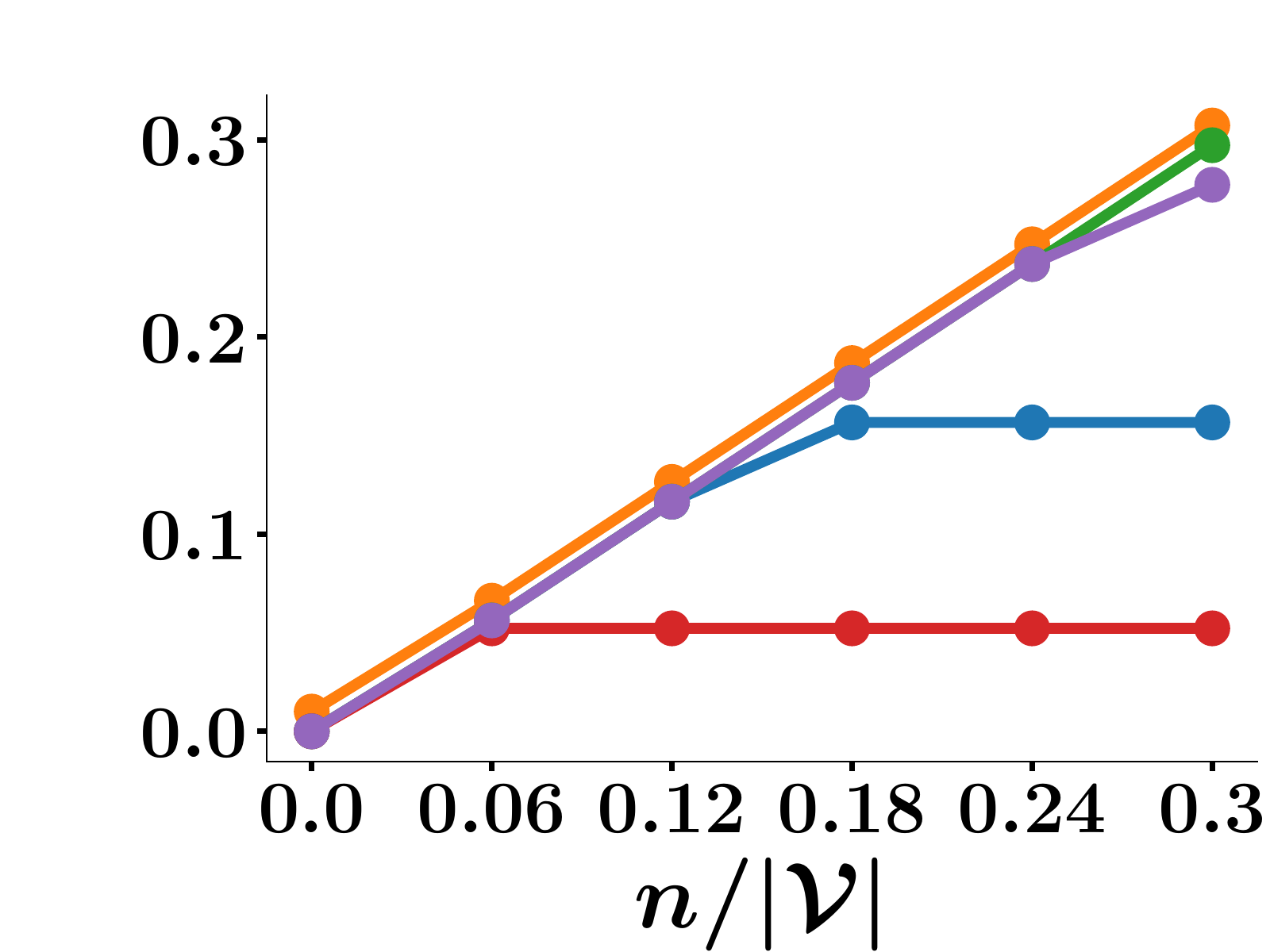}}}
\caption{Number of samples $|\Scal|$ outsourced to humans by the distorted greedy algorithm and its randomized variant against the maximum number of 
outsourced samples $n$ for different amount of human errors.}
\label{fig:s-vs-n}
\end{figure}

\section{Experiments on Real Data} 
\label{sec:real}
We experiment with three real-world medical datasets and first show that, under human assistance, support vector machines 
trained using the distorted greedy algorithm as well as its randomized variant outperform those trained for full automation as well 
as humans operating alone. 
Then, we investigate the influence that the amount of human error has on the performance of the distorted greedy algorithm. 

\begin{figure*}[t]
\centering
\hspace{0.1cm}{\includegraphics[width=0.7\textwidth]{fig/legend_notbold}}\\[-3ex]
\subfloat{\includegraphics[width=0.26\textwidth]{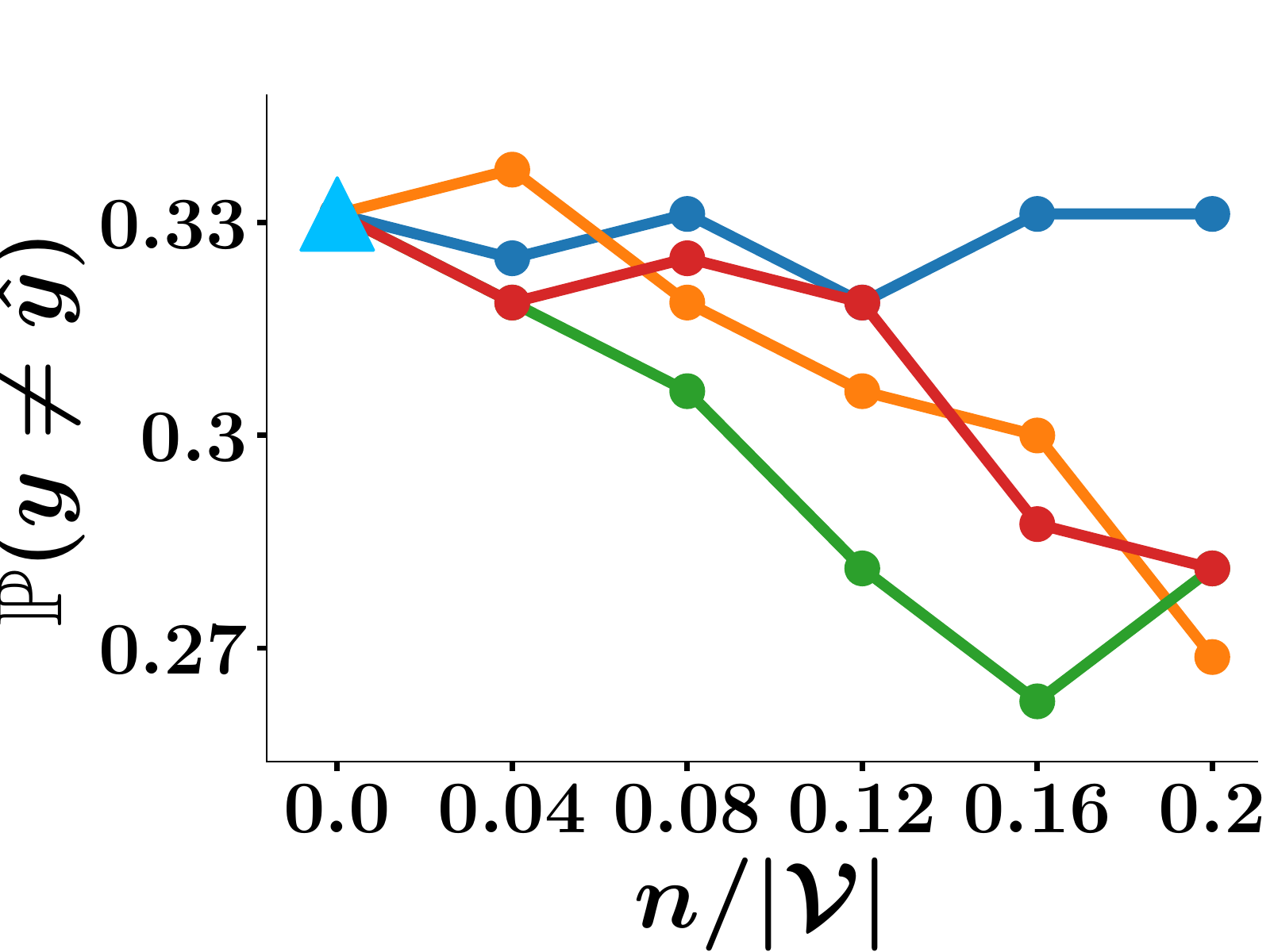}}\hspace{6mm}
\subfloat{\includegraphics[width=0.26\textwidth]{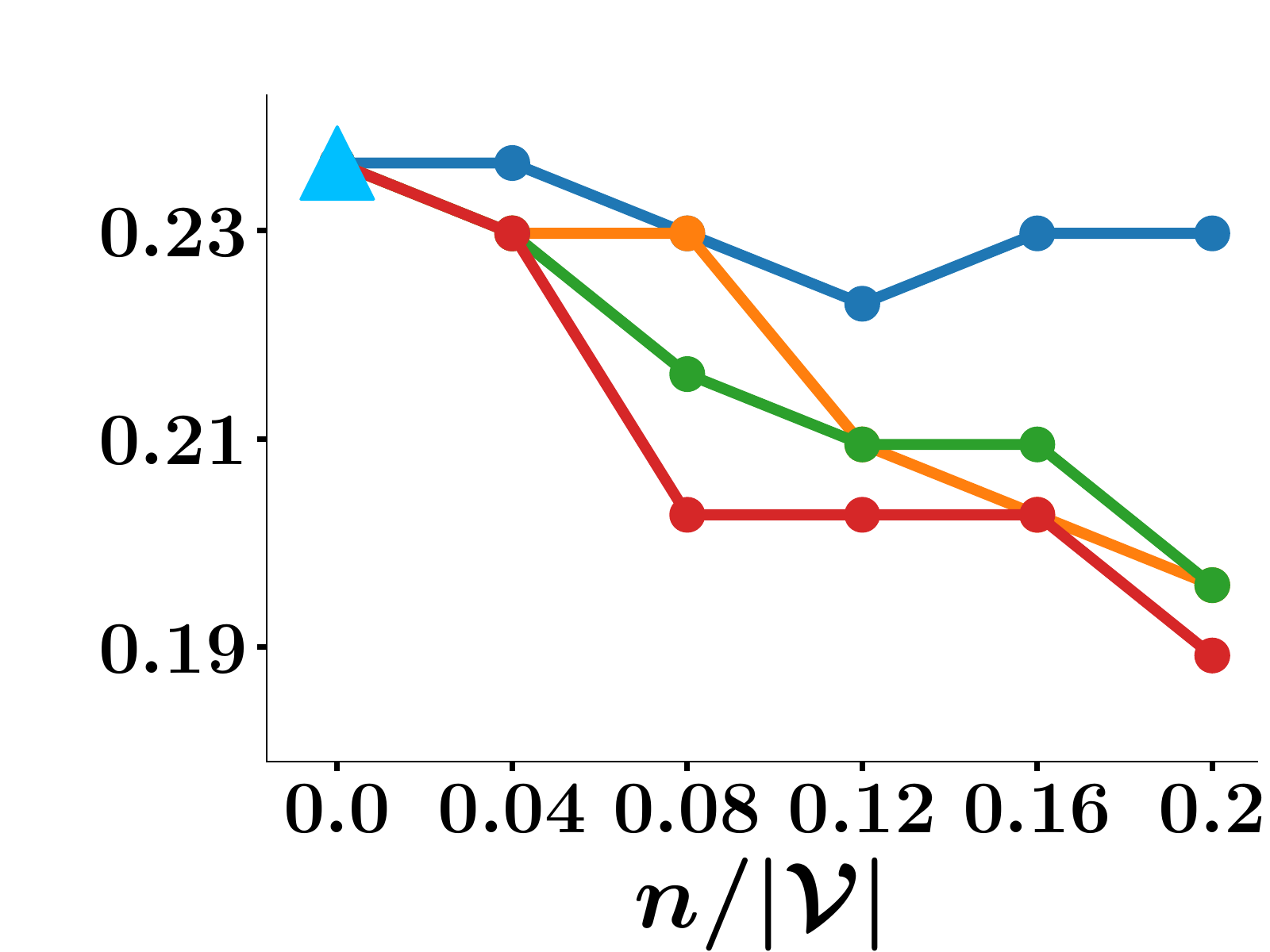}}\hspace{6mm}
\subfloat{\includegraphics[width=0.26\textwidth]{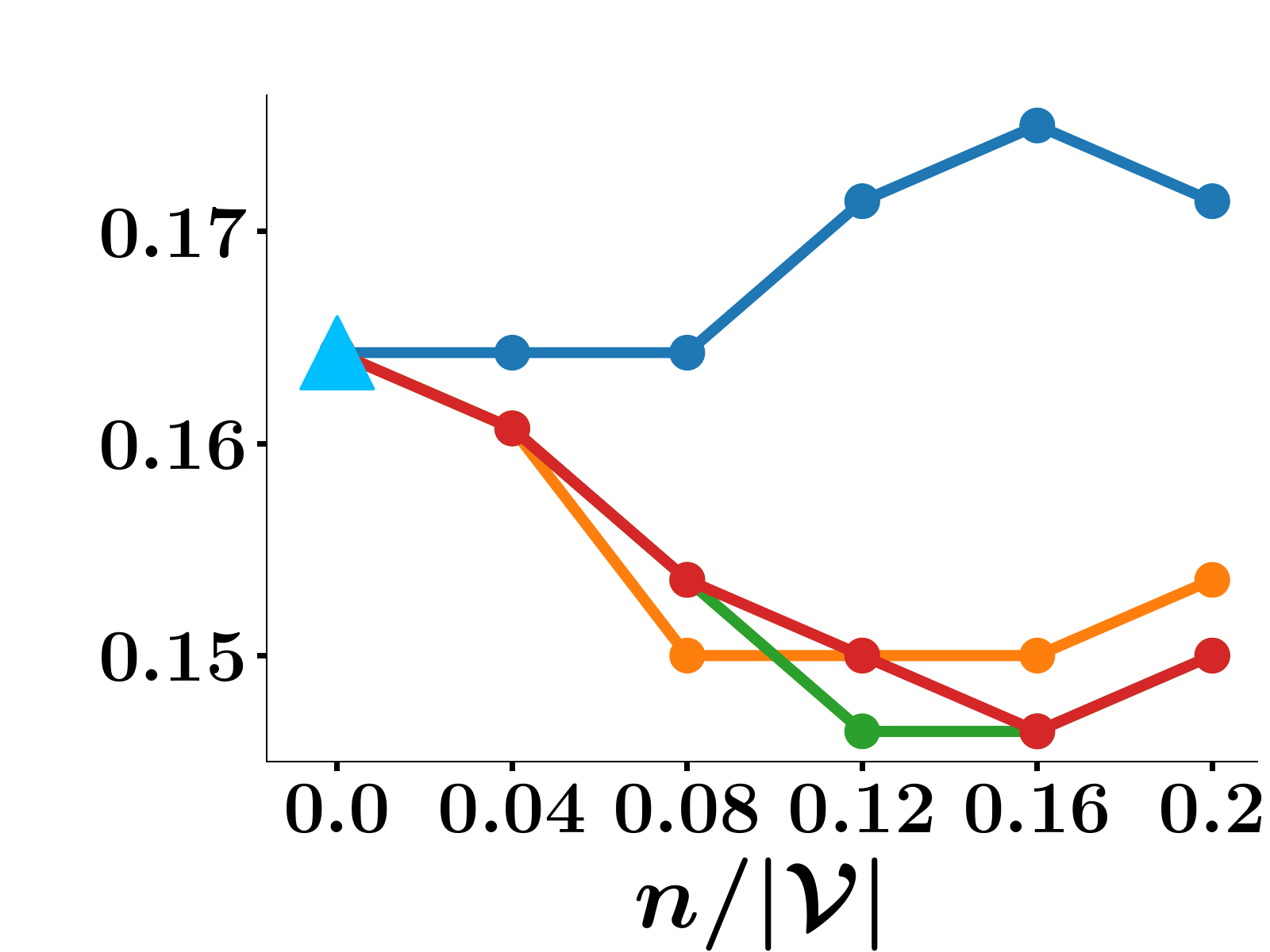}} \\[-1ex]
\subfloat[Messidor]{\setcounter{subfigure}{1} \includegraphics[width=0.26\textwidth]{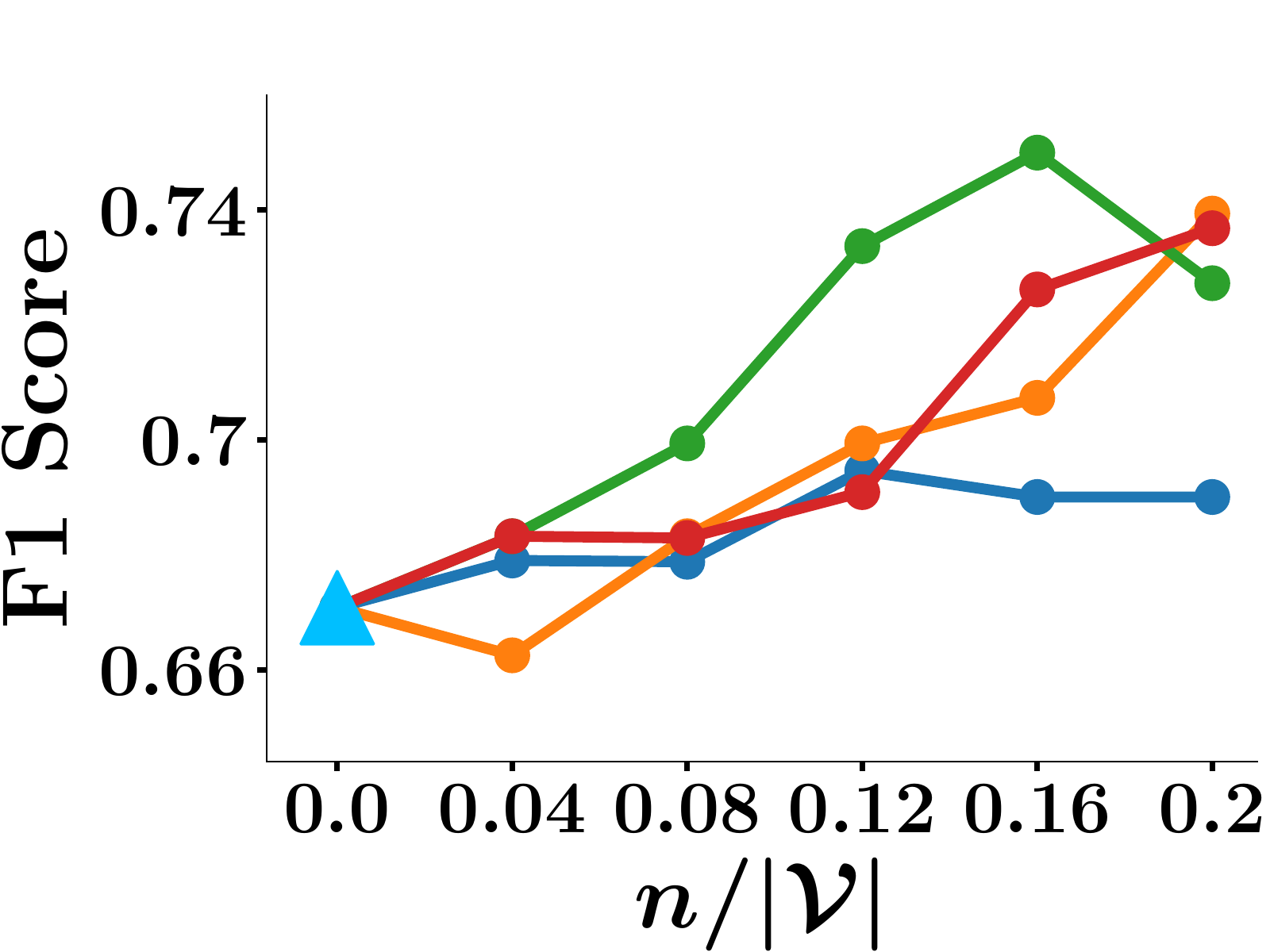}}\hspace{6mm}
\subfloat[Stare]{\includegraphics[width=0.26\textwidth]{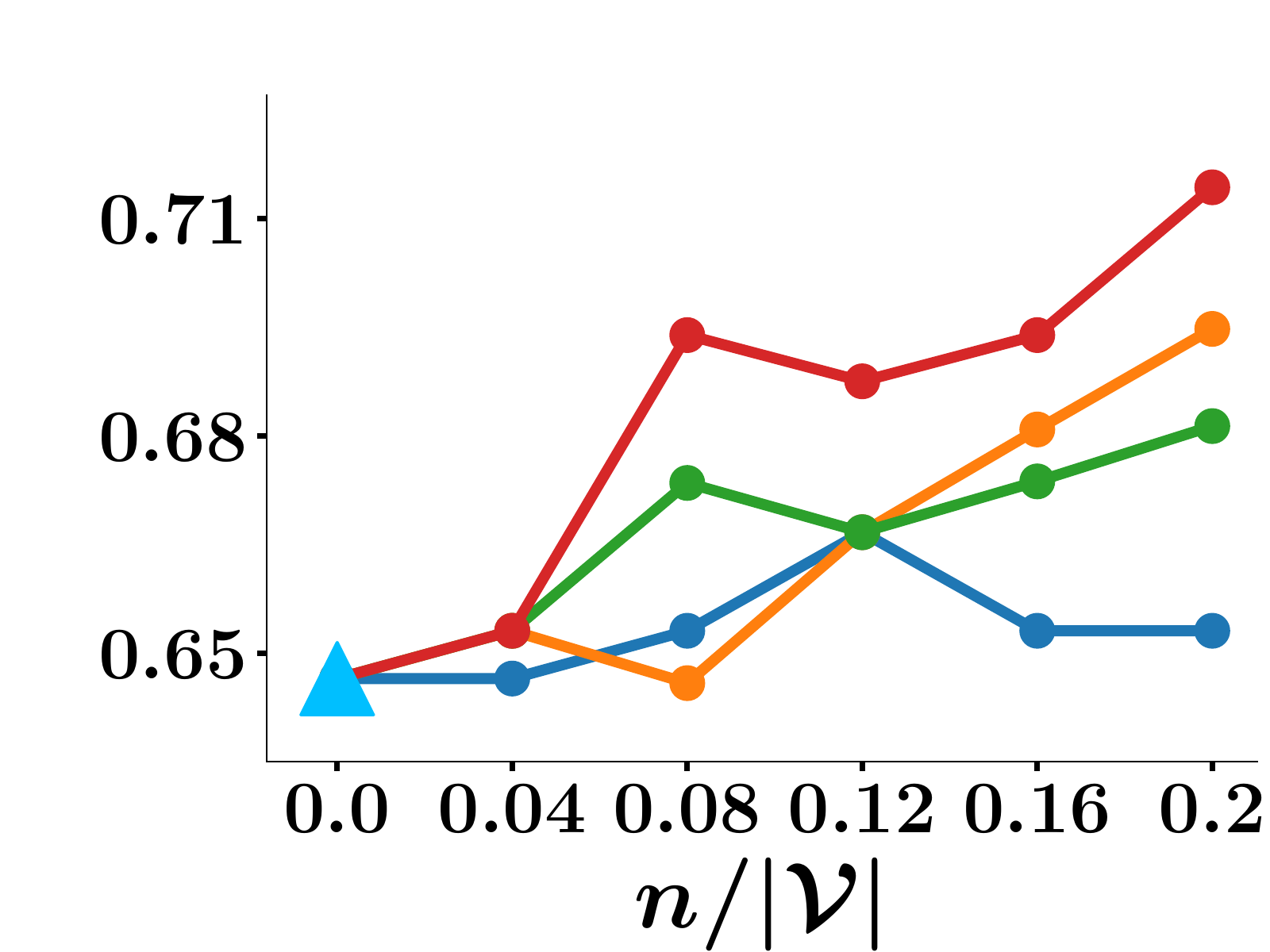}}\hspace{6mm}
\subfloat[Aptos]{\includegraphics[width=0.26\textwidth]{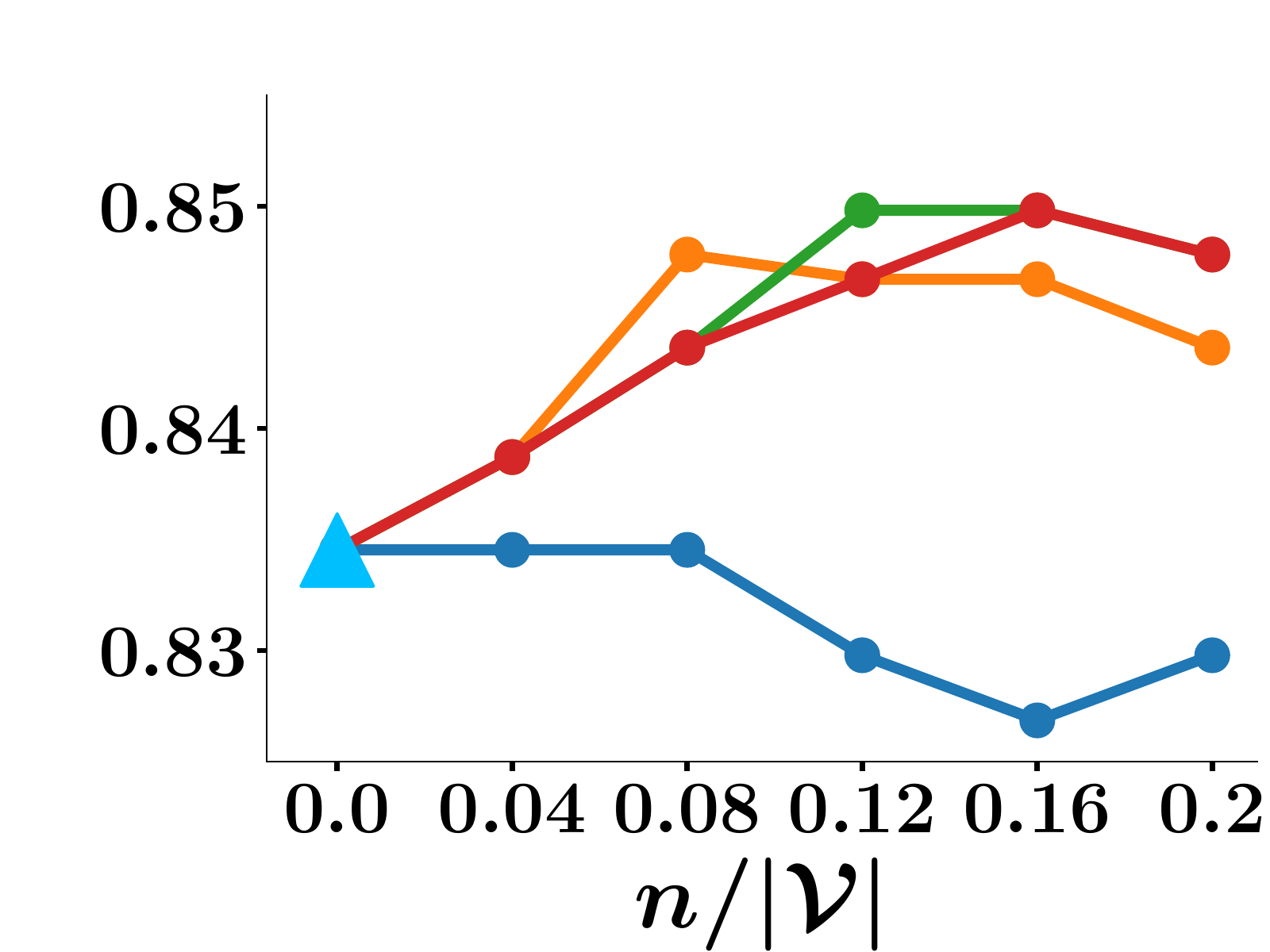}}  
\caption{Performance of all methods on three medical datasets.
%
If humans predict all testing samples (No automation), we have that $\PP(\hat{y}\neq y) = 0.15, 0.14, 0.20$ and
 $\text{F1-score} = 0.87, 0.78, 0.8$ for Messidor, Stare and Aptos datasets respectively. 
}

\label{fig:real-main}
\end{figure*}
\oursub{Experimental setup}
%
We experiment with three publicly available datasets~\cite{decenciere_feedback_2014,hoover2000locating}, each
of them from a diffe\-rent application in medical diagnosis:
\begin{itemize}
 \item[(i)]\textbf{Messidor:} It consists of $|\Vcal|=400$ eye images. 
Each image is assigned a score by one single medical expert, 
on a four-point scale, which measures the severity of a retinopathy.
\item[(ii)]\textbf{Stare:} It consists of $|\Vcal|=373$ retinal images. Each image is assigned a score by one single medical expert, 
on a five-point scale, which measures the severity of a retinal hemorrhage.
\item[(iii)]\textbf{Aptos: } 
It consists of $|\Vcal|=705$ retinal images. Each image is given a score by one single clinician, on a five-point scale, which measures the 
severity of diabetic retinopathy. 
\end{itemize}
%
%
Following previous work, we first generate a $1000$ dimensional feature vector using Resnet~\cite{resnet} for each sample in 
the Stare dataset and a $4096$-dimensional feature vector using VGG16~\cite{simonyan2014very} for each 
sample in the Messidor and the Aptos datasets.
%
%
Then, we use the top $50$ features, as identified by PCA, as $\xb$ in our experiments.
%
%
Moreover, for each sample, we set $y=-1$ if its severity score $q$ corresponds to one of the two lowest grades of the associated disease 
and $y=+1$ otherwise. 
%

For each sample, we only have access to the score $q$ given by a single human expert.
%
%
%
Therefore, we sample the scores given by humans from a categorical distribution $h(\xb) = \hat{q} \sim \text{Cat}(\pb_{\xb,q})$, 
where $\pb_{\xb,q} = \text{Dirichlet}(\chib_{\xb,q})$ are the probabilities of each potential score $\hat{q}$ for a sample $(\xb,y)$ 
and $\chib_{\xb,q}$ is a vector parameter that controls the human accuracy, and then scale these scores so that $h(\cdot)\in[-1,1]$
and compute the human error as $c(\xb,y)=\EE\left[(1-y h(\xb))_{+} \right]$.
For each sample $(\xb, y)$, the element of $\chib_{\xb,q}$ corresponding to the score $\hat{q} = q$ given by the human expert has the highest 
value.
%

Finally, for each dataset, we use 60\% of samples for training and 40\% of samples for testing, set the value of $\lambda$ using cross validation under full
automation, and followed the procedure described in Section~\ref{sec:formulation} to train a logistic regression model $\pi(d \,|\, \xb)$ that decides which 
samples to outsource to humans at test time. Here, the logistic regression model uses just $|\wb^*(\Vcal\cp\Scal)\phi(\xb)+b^*(\Vcal\cp\Scal)|$ 
as the only single feature.
Appendix~\ref{app:impl-real} contains additional details about
the generation of human scores,
the additional model $\pi$ and
the baseline algorithms.

\oursub{Result}
First, we compare the performance of all methods in Figure~\ref{fig:real-main}. The results show that the distorted greedy algorithm as well as its randomized variant:
%
%
(i) outperform the triage baselines in most of the automation levels and
(ii) benefit from human assistance both in cases when humans on their own (No automation) are better than machines on their own (Full automation) 
and vice versa (Panels (a) and (b) vs Panel (c));
and, (iii) perform comparably, sometimes one beating the other and vice versa (refer to Section 3.4. in~\citet{harshaw2019submodular} to better understand the reasons). 
Here, note that we intentionally experimented with different values of $\chib_{\bullet,\bullet}$ across different 
datasets so that the performance of the no automation baseline varies. Moreover, since in our datasets, the parameter 
$\rho^{*} = {\min\{|\Vcal^+|,|\Vcal^-|\}}/{|\Vcal|} \in (0.33,0.45)$, then to satisfy the conditions of Theorem~\ref{thm:soft-linear-subm}, we only 
experiment with $n/|\Vcal|\le 0.2< \rho^*$.

Next, we assess the influence that the human error $c(\xb,y)$ has on the performance of the greedy algorithm. 
Figure~\ref{fig:human-error-variation-real} summarizes the results for the Aptos dataset, which show that, as long as the human error is small, the performance of the distorted 
greedy algorithm improves with respect to the maximum number of samples $n$. However, for higher levels of human error, outsourcing samples does not help.
\begin{figure}[t]
\centering
\hspace{0.0cm}{\includegraphics[width=0.7\textwidth]{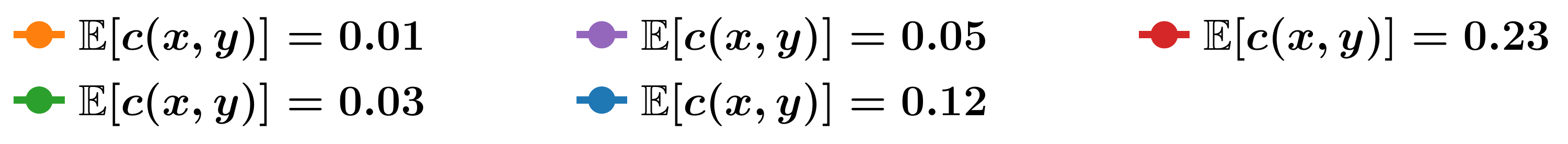}}\\[-2ex]
\subfloat[Missclassification error]{\includegraphics[width=0.27\textwidth]{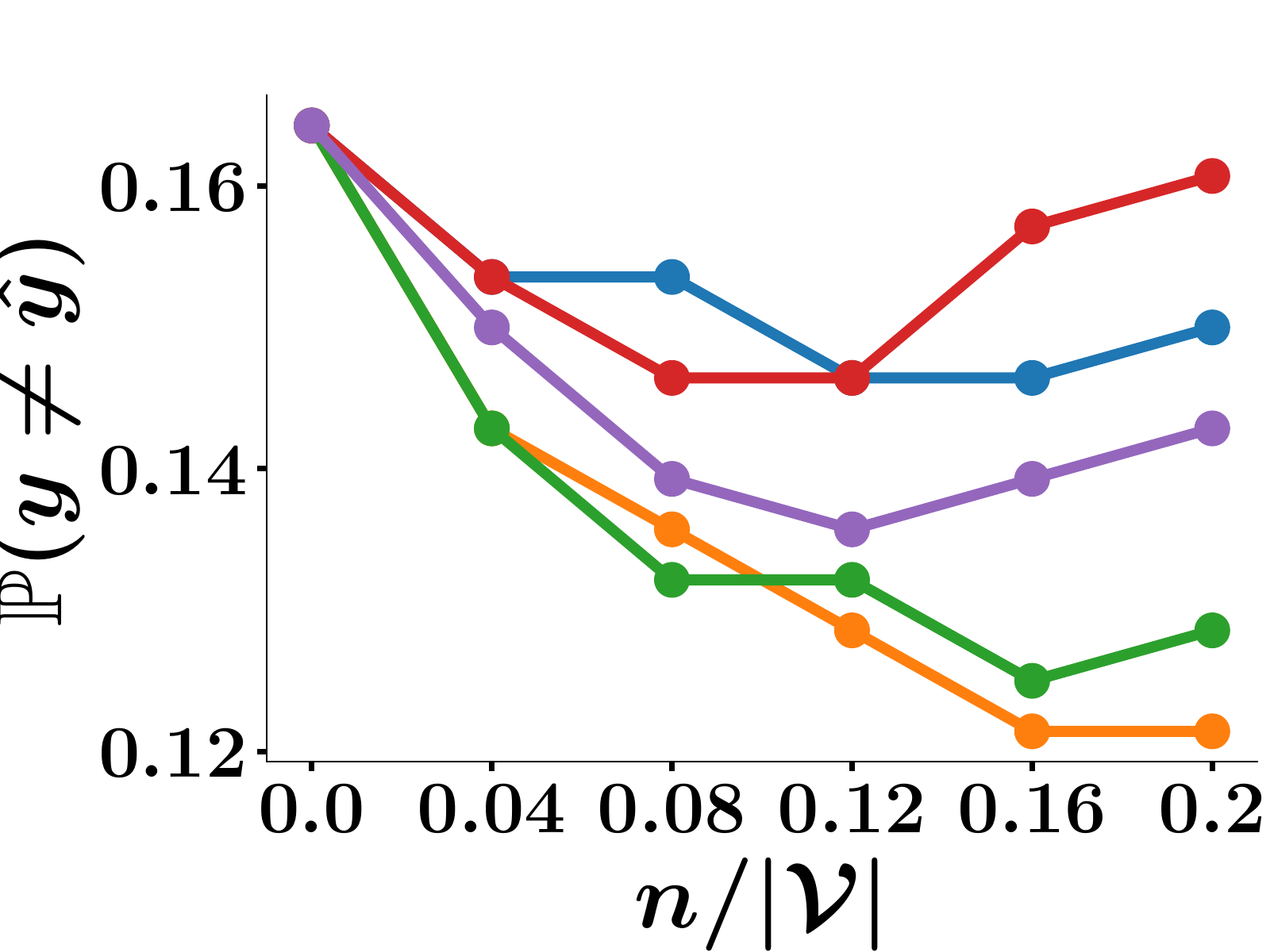}} \hspace*{15mm}
\subfloat[F1 score]{\includegraphics[width=0.27\textwidth]{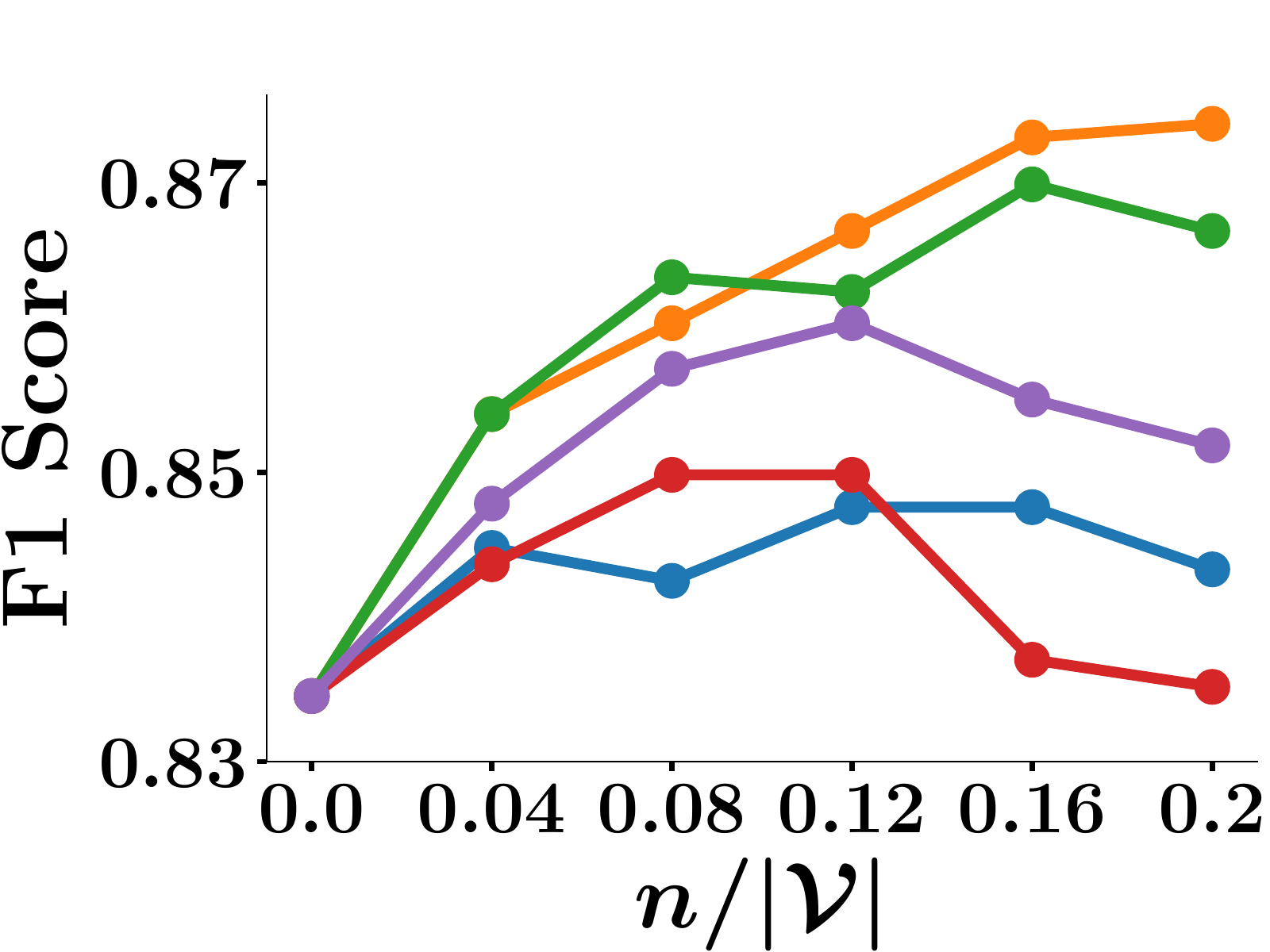}} 
\caption{
Performance of the stochastic distorted greedy algorithm against the maximum number of outsourced samples $n$ 
for different levels of human error $c(\xb,y)$ on the Aptos dataset.}
\label{fig:human-error-variation-real}
\end{figure}

\section{Conclusions}
\label{sec:conclusions}
In this paper, we have shown that, for support vector machines, we can solve the problem of classi\-fi\-ca\-tion under human assistance using algorithms
with approximation gua\-ran\-tees. 
Moreover, we have further shown that, under human assistance, support vector machines trained to operate under different automation levels can provide
superior empirical performance than those trained for full automation.

Our work also opens many interesting venues for future work.  For example, we have assumed that the human error is known, however, in practice, 
the spectrum of human abilities spans a broad range. It would be interesting to develop algorithms that, over time, adapt to the particular human(s) they are dealing with. 
Moreover, we have assumed that the human annotations are independent, however, there exist scenarios in which annotations are correlated, \eg, when a single user 
sequentially reviews a set of items in a session.
It would be interesting to lift the independence assumption and design algorithms that account for the correlation between annotations.
It would also be va\-lua\-ble to find tighter lower bounds on the parameter $\gamma$,  which better characterize the good empirical performance.
%
Finally, it would be worth to extend our analysis to other convex margin-based classifiers as well as design machine learning algorithms operating under different automation levels in sequential decision-making tasks, \eg,  semi-autonomous driving~\cite{meresht2020learning}.

\section*{Acknowledgements} 
This project has received funding from the European Research Council (ERC) under the European
Union'{}s Horizon 2020 research and innovation programme (grant agreement No. 945719). 

\bibliographystyle{plainnat}
\bibliography{refs}

\clearpage
\newpage

\begin{appendix}
\label{sec:appendix}
\section{Proof of Theorem~\ref{thm:np-hard}} \label{app:np-hard}
We prove that a particular instance of our problem is NP-hard.
Let $\ell(h_{w}(\xb_i), y_i) = (1-y \, \wb^\top \xb_i)^2$ and assume the human error $c(\xb_i,y_i)=0$ for all $i \in \Vcal$, 
$\lambda=0$, 
$d> |\Vcal|$, and 
$\Xb=[\xb^\top _1;\xb^\top _2;..;\xb^\top _{|\Vcal|}]$ has full row rank $|\Vcal|$.
Then, we can map our problem to the NP-hard problem of robust least square regression~\cite{bhatia2017consistent},
which is given by:
\begin{equation*}
\minz{\Scal,\wb, |\Scal|>n} \sum_{i\in\Scal}(r_i-\wb^\top \xb_i)^2,
\end{equation*}
where $r_i\in\RR$. 
More specifically, note that since $\Xb$ has full row rank then, it has a right inverse. Then, we define $\wb_0 = \Xb^{-1}_{R}(\yb-\rb)$, 
where $\Xb^{-1} _R$ is the right inverse of $\Xb$. Therefore, $\wb_0 ^\top \xb_i = \xb^\top _i \wb_0 = \left[\Xb \Xb^{-1} _{R}\right]_{i\star} (\yb-\rb) = y_i-r_i $. 
Finally, if we define a new variable $\wb'=\wb-\wb_0$, we can rewrite the optimization problem in Eq.~\ref{eq:optimization-problem} 
as follows:
\begin{align*}
\minz{\wb, \Vcal \backslash \Scal, |\Vcal \backslash \Scal|>|\Vcal|-n}  (1-y_i \wb^\top \xb_i )^2 &= \minz{\wb', \Vcal \backslash \Scal, |\Vcal \backslash \Scal|>|\Vcal|-n}  (1-y_i \wb'^\top \xb_i -y_i  \wb_0 ^\top \xb_i )^2\nonumber\\
 &= \minz{\wb', \Vcal \backslash \Scal, |\Vcal \backslash \Scal|>|\Vcal|-n}  (1-y_i\wb'^\top \xb_i -  y_i ^2 +y_i r_i )^2\nonumber\\
 &=\minz{\wb', \Vcal \backslash \Scal, |\Vcal \backslash \Scal|>|\Vcal|-n}  (y_i r_i-y_i\wb'^\top \xb_i )^2\\
 &=\minz{\wb', \Vcal \backslash \Scal, |\Vcal \backslash \Scal|>|\Vcal|-n}  ( r_i-\wb'^\top \xb_i )^2.
\end{align*}
This concludes the proof.

\section{Proof of Proposition~\ref{thm:mont}} \label{app:mont}
%
Without loss of generality, assume $b = 0$. Then, we have:
\begin{align*}
 g(\Scal   \cup \{j\})  - g(\Scal) &= \sum_{i\in\Vcal \backslash \Scal} [\lambda \bnm{\wb^{*}(\Vcal \backslash \Scal)}^2 + (1-y_i \wb^{*}(\Vcal \backslash \Scal)^{\top} \Phi(\xb_i))_+] \\
&- \sum_{i\in{\Vcal \backslash (\Scal\cup \{j\})}} [ \lambda \bnm{\wb^{*}(\Vcal \backslash (\Scal\cup \{j\}))}^2 + (1-y_i \wb^{*}(\Vcal \backslash (\Scal \cup \{j\})^{\top} \Phi(\xb_i))_+ ] \\
 & =  \lambda |\Vcal \backslash \Scal| \bnm{\wb^{*}(\Vcal \backslash \Scal)}^2  + \sum_{i\in\Vcal \backslash \Scal} (1-y_i \wb^{*}(\Vcal \backslash \Scal)^{\top} \Phi(\xb_i))_+ \\
  &-  \lambda (|\Vcal \backslash \Scal|-1) \bnm{\wb^{*}(\Vcal \backslash \Scal)}^2 - \sum_{\Vcal \backslash (\Scal\cup \{j\})} (1-y_i \wb^{*}(\Vcal \backslash \Scal)^{\top} \Phi(\xb_i))_+\\
 &+  \lambda (|\Vcal \backslash \Scal|-1) \bnm{\wb^{*}(\Vcal \backslash \Scal)}^2  + \sum_{i\in \Vcal \backslash (\Scal\cup \{j\}) } (1-y_i \wb^{*}(\Vcal \backslash \Scal)^{\top} \Phi(\xb_i))_+ \\
&- \lambda (|\Vcal \backslash \Scal|-1) \bnm{\wb^{*}(\Vcal \backslash (\Scal \cup \{j\}))}^2 - \sum_{i\in{\Vcal \backslash (\Scal\cup \{j\})}} (1-y_i \wb^{*}(\Vcal \backslash (\Scal \cup \{j\})^{\top} \Phi(\xb_i))_+ \\
&= \lambda  \bnm{\wb^{*}(\Vcal \backslash \Scal)}^2  +  [1-y_j \wb^{*}(\Vcal \backslash \Scal)^{\top} \Phi(\xb_j)]_+ \\
&+  \lambda (|\Vcal \backslash \Scal|-1) \bnm{\wb^{*}(\Vcal \backslash \Scal)}^2  + \sum_{i\in \Vcal \backslash (\Scal\cup \{j\}) } (1-y_i \wb^{*}(\Vcal \backslash \Scal)^{\top} \Phi(\xb_i))_+ \\
 & - \lambda (|\Vcal \backslash \Scal|-1) \bnm{\wb^{*}(\Vcal \backslash (\Scal \cup \{j\}))}^2 - \sum_{i\in{\Vcal \backslash (\Scal\cup \{j\})}} (1-y_i \wb^{*}(\Vcal \backslash (\Scal \cup \{j\})^{\top} \Phi(\xb_i))_+ \\
&\overset{(i)}{\ge} \lambda  \bnm{\wb^{*}(\Vcal \backslash \Scal)}^2  +  (1-y_j \wb^{*}(\Vcal \backslash \Scal)^{\top} \Phi(\xb_j))_+ \ge  0,
\end{align*}
where $\wb^{*}(\Vcal \backslash \Scal) = \argmin_{w} \sum_{i\in\Vcal \backslash \Scal} [\lambda \|\wb\|^2 + (1-y_i w^{\top} \Phi(\xb_i))_+]$ and
the inequality $(i)$ holds because 
\begin{align*}
\min_{\wb} & \sum_{i\in{\Vcal \backslash (\Scal\cup \{j\})}}  [ \lambda \|\wb\|^2 + (1-y_i \wb^{*}(\Vcal \backslash \Scal)^{\top} \Phi(\xb_i))_+ ]\nn\\
&= \sum_{i\in{\Vcal \backslash (\Scal\cup \{j\})}} [\lambda \|\wb^{*}(\Vcal \backslash (\Scal \cup \{j\})) \|^2 
 + (1-y_i \wb^{*}(\Vcal \backslash (\Scal \cup \{j\}))^{\top} \Phi(\xb_i))_+] \\
& \leq \sum_{i\in{\Vcal \backslash (\Scal\cup \{j\})}} [\lambda \|\wb^{*}(\Vcal \backslash \Scal) \|^2 
+ (1-y_i \wb^{*}(\Vcal \backslash \Scal)^{\top} \Phi(\xb_i))_+].
\end{align*}
%

%
%

\section{Proof of Theorem~\ref{thm:soft-linear-subm}} \label{app:thm:soft-linear-subm}
First, we find a lower bound for $g(\Scal \cup \{j\}) - g(\Scal)$ for every $j \notin \Scal$. Starting from Eq.~\ref{eq:soft-lin-svm-2} with $\Phi(\xb) = \xb$, we have:
\begin{equation*}
g(\Scal\cup \{j\}) - g(\Scal) \ge \lambda \bnm{\wb^{*}(\Vcal \backslash \Scal)}^2 + (1 - y_j(\wb^{*}(\Vcal \backslash \Scal)^{\top} \xb_j + b^{*}(\Scal)))_{+} \ge \lambda \bnm{\wb^{*}(\Vcal \backslash \Scal)}^2. \label{eq:fnumsoftlinear}
\end{equation*}
Then, to bound $\bnm{\wb^{*}(\Vcal \backslash \Scal)}^2$, we resort to Lemma~\ref{prop:svm-dual}, which equip us with the dual formulation of 
our problem, \ie,
\begin{equation*}
\wb^{*}(\Vcal \backslash \Scal)= {\dfrac{\sum_{i\in \Vcal \backslash \Scal}\alpha_{i}(\Vcal \backslash \Scal) y_i\xb_i}{2\lambda|\Vcal \backslash \Scal|}} \quad \text{and} \quad \sum_{i\in\Vcal \backslash \Scal}\alpha_{i}(\Vcal \backslash \Scal) y_i=0, \label{eq:coneq}
\end{equation*}
and the following condition:
\begin{align*}
  y_i(\wb^{*}(\Vcal \backslash \Scal)^{\top} \xb_i+b^{*}(\Vcal \backslash \Scal))< 1 \implies \alpha_{i}(\Vcal \backslash \Scal) =1,  
\end{align*}
as well as to the fact that, due to the upper bound on $|\Scal|$, $\Vcal \backslash \Scal$ contains at least one positive and one negative point since 
$|\Vcal \backslash \Scal| = |\Vcal|-|\Scal| > |\Vcal|-(|\Vcal^{+}|-s^*) = |\Vcal^{-}|+s^*$ and $|\Vcal \backslash \Scal|  > |V^{+}| + s^{*}$. 
More specifically, we consider the following two cases:

--- \emph{Case (a)}: There is at least one instance $i^+\in \Vcal \backslash \Scal$ with $y_{i^+}=+1$ and another instance $i^-\in \Vcal \backslash \Scal$ with $y_{i^-}=-1$ that satisfy    
$y_{i^+}(\wb^{*}(\Vcal \backslash \Scal)^{\top}\xb_{i^+}+b^{*}(\Vcal \backslash \Scal))\ge 1$ and $y_{i^-}(\wb^{*}(\Vcal \backslash \Scal)^{\top}\xb_{i^-}+b^{*}(\Vcal \backslash \Scal))\ge 1$, 
respectively. Then, by adding these two inequalities, we have the following bound: 
\begin{equation*}
 \wb^{*}(\Vcal \backslash \Scal)^{\top} (y_{i^+}\xb_{i^+}+y_{i^-}\xb_{i^-}) \ge 2 \implies \|\wb^{*}(\Vcal \backslash \Scal)\|^2 \ge \frac{4}{\|\xb_{i^+}-\xb_{i^-}\|^2} 
 \ge \frac{1}{\max_{i\in\Vcal}\bnm{\xb_i}^2}.
\end{equation*}

--- \emph{Case (b)}: If case (\emph{a}) does not happen, then it means that either all negative samples or all positive samples are misclassified. 
Without loss of generality, assume that all negative instances are miss-classified. Then, from the KKT conditions, we have that $\alpha_{i}(\Vcal \backslash \Scal)=1$
for all $i\in(\Vcal \backslash \Scal)\cap \Vcal^-$. Using this observation and the dual formulation of the problem above, we have:
\begin{equation*}
{\sum_{i\in(\Vcal \backslash \Scal)\cap \Vcal^+}\alpha_{i}(\Vcal \backslash \Scal)
= \sum_{i\in(\Vcal \backslash \Scal)\cap \Vcal^-}\alpha_{i}(\Vcal \backslash \Scal) =|(\Vcal \backslash \Scal)\cap\Vcal^-|}.
\end{equation*}
Then, using the above equality, we can derive the following bound:
\begin{align*}
\bnm{\wb^{*}(\Vcal \backslash \Scal)} & = \frac{1}{2\lambda|\Vcal \backslash \Scal|}
\bnm{ \sum_{i\in(\Vcal \backslash \Scal)\cap\Vcal^+}\alpha_{i}(\Vcal \backslash \Scal) \xb_i 
-  \sum_{i\in(\Vcal \backslash \Scal)\cap\Vcal^-}\alpha_{i}(\Vcal \backslash \Scal) \xb_i  } \\
&= \frac{1}{2\lambda|\Vcal \backslash \Scal|} 
 \bnm{ \dfrac{\sum_{i\in(\Vcal \backslash \Scal)\cap\Vcal^+}\alpha_{i}(\Vcal \backslash \Scal) \xb_i}{\sum_{i\in(\Vcal \backslash \Scal)\cap \Vcal^+}\alpha_{i}(\Vcal \backslash \Scal) }
 -  \dfrac{\sum_{i\in(\Vcal \backslash \Scal)\cap\Vcal^-}\alpha_{i}(\Vcal \backslash \Scal) \xb_i}{\sum_{i\in(\Vcal \backslash \Scal)\cap \Vcal^-}\alpha_{i}(\Vcal \backslash \Scal)}  }\cdot \sum_{i\in(\Vcal \backslash \Scal)\cap \Vcal^-}\alpha_{i}(\Vcal \backslash \Scal) 
\end{align*}

\begin{align*}
 & \overset{(i)}{\ge} \frac{1}{2\lambda|\Vcal \backslash \Scal|} \Delta_{1/|(\Vcal \backslash \Scal)\cap \Vcal^-|} \cdot |(\Vcal \backslash \Scal)\cap \Vcal^-|\nonumber \\
 & \overset{(ii)}{\ge} \frac{1}{2\lambda|\Vcal \backslash \Scal|}\Delta^* \cdot |(\Vcal \backslash \Scal)\cap \Vcal^-| \\
  & \overset{(iii)}{=} \frac{ \Delta^*   \sigma^* |\Vcal| }{2\lambda|\Vcal \backslash \Scal|}  \\
  & \ge \frac{ \Delta^*   \sigma^*   }{2\lambda },
\end{align*}
where
\begin{itemize}
\item the inequality $(i)$ is due to the fact that 
\begin{equation*}
\frac{\sum_{i\in(\Vcal \backslash \Scal)\cap\Vcal^\pm}\alpha_{i}(\Vcal \backslash \Scal) \xb_i}
{\sum_{i\in\Vcal \backslash \Scal\cap \Vcal^\pm}\alpha_{i}(\Vcal \backslash \Scal)}  \in \Ccal^{\pm} _{1/|(\Vcal \backslash \Scal)\cap \Vcal^-|}
\end{equation*}
and, by definition, it holds that $\bnm{\ab^+-\ab^-}>  
\Delta_{1/|(\Vcal \backslash \Scal)\cap \Vcal^-|}$ for $\ab^\pm \in\Ccal^{\pm} _{1/|(\Vcal \backslash \Scal)\cap \Vcal^-|}$;
\item the inequality $(ii)$ is due to $|(\Vcal \backslash \Scal)\cap \Vcal^-| \ge s^*$ (Fact~\ref{fact:scalBoundst}) which results in $\Delta_{1/|(\Vcal \backslash \Scal)\cap\Vcal^-|} \ge \Delta_{1/s^*}=\Delta^*$ 
 (Fact~\ref{fact:ccalprop1}); and,
\item the equality $(iii)$ is due to $|\Scal|<n< (\rho^*-\sigma^*)|\Vcal|$  and Fact~\ref{fact:ccalprop1}. 
\end{itemize}

Then, if we combine the bounds from both cases, we have that:
 \begin{equation*}
  g(\Scal\cup \{j\})-g(\Scal) \ge \lambda \bnm{\wb^{*}(\Vcal \backslash \Scal)}^2 \ge 
  \min\left\{ \dfrac{\left[\Delta^{\ast}\sigma^* \right]^2}{4\lambda}, \dfrac{\lambda}{\max_{i\in\Vcal}\bnm{\xb_i}^2}  \right\}.
 \end{equation*}
 
Second, we find a lower bound for $g(\Scal\cup \Lcal) - g(\Scal)$, with $\Lcal \subseteq \Vcal \backslash \Scal$.
To this end, we can directly use (I) of Lemma~\ref{lem:key} and obtain
\begin{align*}
 g(\Scal\cup \Lcal) -g(\Scal) 
  &\le
    \dfrac{\left(2\sqrt{\lambda}+\kappa\right) |\Lcal|}{\sqrt{\lambda}}  +   \dfrac{\left(2\sqrt{\lambda}+\kappa\right)^2 |\Lcal|}{2\lambda}
  \cdot \left( \dfrac{1}{2} 
 + \sqrt{\dfrac{1}{4} +\dfrac{4\lambda |\Vcal| 
 (1+\frac{\kappa}{\sqrt{\lambda}}) }{(2\sqrt{\lambda}+\kappa)^2 |\Lcal|^2}} \right) \\
&\qquad +|\Vcal|  (1+\frac{\kappa}{\sqrt{\lambda}}) \\ &=
    \eta |\Lcal| +   \dfrac{\eta^2}{2} |\Lcal|
  \cdot \left( \dfrac{1}{2} 
 + \sqrt{\dfrac{1}{4} +\dfrac{4 (\eta-1) |\Vcal|}{ \eta^2 |\Lcal|^2}} \right) +|\Vcal|  ( \eta-1),
\end{align*}
where the equality follows from the definition of $\eta$.
Finally, if we combine both lower bounds, we can bound the submodularity ratio from below as follows:
\begin{align}
 \dfrac{\sum_{j\in \Lcal} \left[ g(\Scal\cup \{j\})-g(\Scal) \right]}{g(\Scal \cup \Lcal)-g(\Scal)} 
 & \ge \dfrac{|\Lcal|\min\left\{ \dfrac{\left[\Delta^{\ast}\sigma^* \right]^2}{4\lambda}, \dfrac{\lambda}{\max_{i\in\Vcal}\bnm{\xb_i}^2}  \right\} }
 { \eta |\Lcal| +   \dfrac{\eta^2}{2} |\Lcal|
  \cdot \left( \dfrac{1}{2} 
 + \sqrt{\dfrac{1}{4} +\dfrac{4 (\eta-1) |\Vcal|}{ \eta^2 |\Lcal|^2}} \right) +|\Vcal|  ( \eta-1) }\nonumber\\
 & = \dfrac{\min\left\{ \dfrac{\left[\Delta^{\ast}\sigma^* \right]^2}{4\lambda}, \dfrac{\lambda}{\max_{i\in\Vcal}\bnm{\xb_i}^2}  \right\}}
 {\eta  +  \dfrac{\eta^2}{2} \left( \dfrac{1}{2} 
 + \sqrt{\dfrac{1}{4} +\dfrac{4 |\Vcal|   (\eta-1)}{\eta^2 |\Lcal|^2}} \right) +  \left(\eta-1\right) \dfrac{|\Vcal|}{|\Lcal|} }\nonumber\\
& \ge \dfrac{\min\left\{ \dfrac{\left[\Delta^{\ast}\sigma^* \right]^2}{4\lambda},  \dfrac{1}{(\eta-2)^2} \right\}}
 {\eta  +  \dfrac{\eta^2}{2} \left( \dfrac{1}{2} 
 + \sqrt{\dfrac{1}{4} +\dfrac{4 |\Vcal|   (\eta-1)}{\eta^2}} \right) +  \left(\eta-1\right) |\Vcal| }. \nonumber
\end{align}
 
\section{Proof of Theorem~\ref{thm:soft-nonlinear-subm}}  \label{app:soft-nonlinear-subm}
First, we find a lower bound for $g(\Scal \cup \{j\}) - g(\Scal)$ for every $j \notin \Scal$. Starting from Eq.~\ref{eq:soft-lin-svm-2}, we have that:
\begin{align*}
g(\Scal\cup \{j\}) - g(\Scal) &\ge \lambda \bnm{\wb^{*}(\Vcal \backslash \Scal)}^2 + (1 - y_j(\wb^{*}(\Vcal \backslash \Scal)^{\top} \Phi(\xb_j) + b^{*}(\Scal)))_{+} \\
&\ge \lambda \bnm{\wb^{*}(\Vcal \backslash \Scal)}^2. \label{eq:fnumsoftlinear}
\end{align*}
Then, to bound $\bnm{\wb^{*}(\Vcal \backslash \Scal)}^2$, we resort to Lemma~\ref{prop:svm-dual}, which equip us with the dual formulation of 
our problem, \ie,
\begin{equation*}
\wb^{*}(\Vcal \backslash \Scal)^{\top} {\Phi(\xb)} = {\dfrac{\sum_{i\in \Vcal \backslash \Scal}\alpha_{i}(\Vcal \backslash \Scal) y_i\ K(\xb, \xb_i)}{2\lambda|\Vcal \backslash \Scal|}}, \, \sum_{i\in\Vcal \backslash \Scal}\alpha_{i}(\Vcal \backslash \Scal) y_i=0, \, \text{and} \, \alpha_i(\Vcal \backslash \Scal)=0 \ \forall i\in\Scal,
\end{equation*}
and the KKT conditions, \ie,
\begin{align*}
 y_i(\wb^{*}(\Vcal \backslash \Scal)^{\top} \Phi(\xb_i)+b^{*}(\Vcal \backslash \Scal))< 1 \implies  \alpha_{i}(\Vcal \backslash \Scal) =1   
\end{align*}
as well as the fact that, due to the upper bound on $|\Scal|$, $\Vcal \backslash \Scal$ contains at least one positive and one negative points since 
$|\Vcal \backslash \Scal| =|\Vcal|-|\Scal|> |\Vcal|-(|\Vcal^+|-\sigma^*|\Vcal|) = |\Vcal^{-}|+\sigma^*|\Vcal|$ and $|\Vcal \backslash \Scal| > |\Vcal^{+}|+\sigma^*|\Vcal|$. 
More specifically, we consider the following two cases:

--- \emph{Case (a)}: There is at least one instance $i^+\in \Vcal \backslash \Scal$ with $y_{i^+}=+1$ and an instance $i^-\in \Vcal \backslash \Scal$ with $y_{i^-}=-1$, which satisfy    
$y_{i^+}( \wb^{*}(\Vcal \backslash \Scal)^{\top} \Phi(\xb_{i^+}) + b^{*}(\Vcal \backslash \Scal))\ge 1$ and $y_{i^-}(\wb^{*}(\Vcal \backslash \Scal)^{\top} \Phi(\xb_{i^-}) +b^{*}(\Vcal \backslash \Scal))\ge 1$, respectively. 
Then, by adding these two inequalities, we have the following bound: 
\begin{equation*}
\wb^{*}(\Vcal \backslash \Scal)^{\top} (y_{i^+} \Phi(\xb_{i^+}) +y_{i^-} \Phi(\xb_{i^-}))
\ge 2 \implies \bnm{\wb^{*}(\Vcal \backslash \Scal)}  >\frac{1}{\max_{i\in\Vcal} \sqrt{K(\xb_i,\xb_i)}}.
\end{equation*}

--- \emph{Case (b)}: If case (\emph{a}) does not happen, then it means that either all negative samples or all positive samples are miss-classified. 
Without loss of generality, assume that all negative instances are miss-classified. 
Then, from the KKT conditions, we have that $\alpha_{i}(\Vcal \backslash \Scal)=1,\ $ for $i\in(\Vcal \backslash \Scal)\cap \Vcal^-$.
Using this observation and the dual formulation of the problem above, we have that: 
\begin{equation*}
{\sum_{i\in(\Vcal \backslash \Scal)\cap \Vcal^+}\alpha_{i}(\Vcal \backslash \Scal) = 
\sum_{i\in(\Vcal \backslash \Scal)\cap \Vcal^-}\alpha_{i}(\Vcal \backslash \Scal) =|(\Vcal \backslash \Scal)\cap\Vcal^-|}.
\end{equation*}
Then, using the above equality, we can derive the following bound:
\begin{align*}
\bnm{\wb^{*}(\Vcal \backslash \Scal)}^2 &= \dfrac{\alphab(\Vcal \backslash\Scal)^{\top} 
\Yb^\top \KK \Yb \alphab(\Vcal \backslash\Scal)}{4\lambda^2 |\Vcal \backslash \Scal|^2} \nn\\
&=  \dfrac{\mub(\Vcal \backslash \Scal)^{\top} \Yb^\top \KK \Yb \mub(\Vcal \backslash \Scal)}{4\lambda^2 |\Vcal \backslash \Scal|^2}  
\cdot \left(\sum_{i\in(\Vcal \backslash \Scal)\cap \Vcal^-}\alpha_{i}(\Vcal \backslash \Scal)\right)^2
\\
& \ge \dfrac{\zeta |(\Vcal \backslash \Scal)\cap \Vcal^-|^2}{{4\lambda^2 |\Vcal \backslash \Scal|^2}} \\
& \overset{(i)}{\ge} \dfrac{\zeta \sigma^{*2}|\Vcal|^2}{{4\lambda^2 |\Vcal \backslash \Scal|^2}}
\ge \dfrac{\zeta \sigma^{*2}}{4\lambda^2},
\end{align*}
where $\mu(\Vcal \backslash \Scal)_i= \dfrac{\alpha_{i}(\Vcal \backslash \Scal)}{\sum_{i\in (\Vcal \backslash \Scal)\cap \Vcal^-} \alpha_{i}(\Vcal \backslash \Scal)}$ and the inequality $(ii)$ is due 
to Fact~\ref{fact:scalBoundst}.

Finally, if we combine both lower bounds, proceed similarly as in the proof of Theorem~\ref{thm:soft-linear-subm} in Appendix~\ref{app:thm:soft-linear-subm}, and apply Lemma~\ref{lem:key}, we obtain 
the required lower bound on the submodularity ratio $\gamma$.
%

\section{Hard margin linear SVMs} \label{app:hard-margin-linear-svm}
For hard margin linear SVMs, we can rewrite the minimization problem defined in Eq.~\ref{eq:optimization-problem} as follows:
\begin{equation} \label{eq:hard-lin-svm} 
\begin{split}
\underset{\Scal, \wb, b}{\text{minimize}} & \quad |\Vcal \backslash \Scal| \, \lambda \|\wb\|^2 + \sum_{i \in \Scal} [1-y_i h(\xb_i)]_{+}\\
\text{subject to} & \quad | \Scal | \leq n \\
& \quad  y_i(\wb^\top \xb_i  +b)\ge 1 \quad \forall i\in \Vcal \backslash \Scal.
\end{split}
\end{equation}
%
%
For any given set $\Scal$, let $\wb^{*}(\Vcal \backslash \Scal)$ and $b^{*}(\Vcal \backslash \Scal)$ be the parameters that minimize the objective
function above subject to the constraints. Moreover, note that these parameters can be found in polynomial time since $|\Vcal \backslash \Scal| \lambda \|\wb\|^2$ is a convex 
function and $y_i(\wb^\top \xb_i +b)\ge 1$ is a convex constraint. 
Then, we can rewrite the above minimization problem as a set function maximization problem:
\begin{equation} \label{eq:hard-lin-svm-2} 
\underset{\Scal}{\text{maximize}} \quad \explainup{ \lambda \left[ |\Vcal| \|\wb^{*}(\Vcal)\|^2 - |\Vcal \backslash \Scal| \|\wb^{*}(\Vcal \backslash \Scal)\|^2 \right]}{g(\Scal)} - \explain{\sum_{i \in \Scal} [1-y_i h(\xb_i)]_{+}}{c(\Scal)} 
\quad \text{subject to} \quad | \Scal | \leq n.
\end{equation}
In the above, the constant term $|\Vcal| \|\wb^{*}(\Vcal)\|^2$ ensures that the function $g(\Scal)$ is non-negative, it readily follows from Proposition~\ref{thm:mont} that the function $g(\Scal)$ is monotone, and the function $c(\Scal)$ is clearly non-negative and modular.

Finally, the following proposition provides a lower bound on the submodularity ratio of $g(\Scal)$:
\begin{proposition}\label{thm:weak-hard-lin}
The set function $g(\Scal)$, defined in Eq.~\ref{eq:hard-lin-svm-2}, is $\gamma$-weakly submodular with $\gamma \geq {1}/{|\Vcal|}$.
\end{proposition}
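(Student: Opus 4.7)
My plan is to leverage a single structural property that is unique to hard margin SVMs: enlarging $\Scal$ only drops constraints from the optimization defining $\wb^{*}$, so the map $\Scal \mapsto \|\wb^{*}(\Vcal \backslash \Scal)\|^2$ is non-increasing in $\Scal$. That is, if $\Scal \subseteq \Tcal$ then $\|\wb^{*}(\Vcal \backslash \Tcal)\|^2 \leq \|\wb^{*}(\Vcal \backslash \Scal)\|^2$, simply because the feasible set of the latter problem is contained in the feasible set of the former. This is a much cleaner tool than what was needed in the soft margin case, and I expect it to do essentially all of the work.

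First, I would use this monotonicity to lower bound each singleton marginal gain. Writing out
\begin{align*}
g(\Scal \cup \{j\}) - g(\Scal) = \lambda |\Vcal \backslash \Scal| \|\wb^{*}(\Vcal \backslash \Scal)\|^2 - \lambda (|\Vcal \backslash \Scal| - 1) \|\wb^{*}(\Vcal \backslash (\Scal \cup \{j\}))\|^2
\end{align*}
and replacing $\|\wb^{*}(\Vcal \backslash (\Scal \cup \{j\}))\|^2$ by the (not smaller) quantity $\|\wb^{*}(\Vcal \backslash \Scal)\|^2$, the two terms with coefficient $|\Vcal \backslash \Scal| - 1$ cancel and I obtain the clean lower bound $g(\Scal \cup \{j\}) - g(\Scal) \geq \lambda \|\wb^{*}(\Vcal \backslash \Scal)\|^2$. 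Second, for any $\Lcal \subseteq \Vcal$, I would upper bound $g(\Scal \cup \Lcal) - g(\Scal)$ by simply discarding the non-negative term $\lambda |\Vcal \backslash (\Scal \cup \Lcal)| \|\wb^{*}(\Vcal \backslash (\Scal \cup \Lcal))\|^2$, which yields $g(\Scal \cup \Lcal) - g(\Scal) \leq \lambda |\Vcal \backslash \Scal| \|\wb^{*}(\Vcal \backslash \Scal)\|^2$.

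Combining the two, the submodularity ratio is at least
\begin{align*}
\frac{\sum_{j \in \Lcal \backslash \Scal} [g(\Scal \cup \{j\}) - g(\Scal)]}{g(\Scal \cup \Lcal) - g(\Scal)} \geq \frac{|\Lcal \backslash \Scal| \cdot \lambda \|\wb^{*}(\Vcal \backslash \Scal)\|^2}{\lambda |\Vcal \backslash \Scal| \cdot \|\wb^{*}(\Vcal \backslash \Scal)\|^2} = \frac{|\Lcal \backslash \Scal|}{|\Vcal \backslash \Scal|} \geq \frac{1}{|\Vcal|},
\end{align*}
which gives the claimed bound $\gamma \geq 1/|\Vcal|$. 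The only mildly subtle point I anticipate is the degenerate case $\|\wb^{*}(\Vcal \backslash \Scal)\|^2 = 0$, where the ratio is ill-defined; there, monotonicity forces $\|\wb^{*}(\Vcal \backslash (\Scal \cup \Lcal))\|^2 = 0$ as well, so $g(\Scal \cup \Lcal) - g(\Scal) = 0$, and the weak submodularity inequality $\sum_{j \in \Lcal \backslash \Scal} [g(\Scal \cup \{j\}) - g(\Scal)] \geq \gamma [g(\Scal \cup \Lcal) - g(\Scal)]$ reads $0 \geq \gamma \cdot 0$ and holds trivially. Notably, and in stark contrast to the soft margin setting, no dual SVM formulation, no reduced convex hull construction, and no stability bound are required—the constrained structure of the hard margin problem supplies the monotonicity directly.
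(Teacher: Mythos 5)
Your proof is correct and follows essentially the same route as the paper: the same lower bound $g(\Scal\cup\{j\})-g(\Scal)\ge\lambda\|\wb^{*}(\Vcal\backslash\Scal)\|^2$ (the paper derives it by citing the optimality argument in Proposition~\ref{thm:mont}, which for the hard-margin objective reduces to exactly your feasible-set-nesting monotonicity), the same upper bound obtained by discarding the non-negative term, and the same final ratio $|\Lcal|/|\Vcal\backslash\Scal|\ge 1/|\Vcal|$. Your explicit treatment of the degenerate case $\|\wb^{*}(\Vcal\backslash\Scal)\|=0$ is a small addition the paper omits.
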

\begin{proof}
From the second part of the proof in Proposition~\ref{thm:mont}, for $j\notin \Scal$ we have:
\begin{equation*}
g(\Scal\cup \{ j \}) - g(\Scal) \ge \lambda \|\wb^{*}(\Vcal \backslash \Scal)\|^2. 
\end{equation*}
Furthermore, for any $\Lcal \subseteq \Vcal$ such that $\Lcal \cap \Scal=\emptyset$, we have:
\begin{align*}
g(\Scal\cup \Lcal) - g(\Scal) &= |\Vcal \backslash \Scal| \, \lambda \| \wb^{*}(\Vcal \backslash \Scal) \|^2  - (|\Vcal \backslash \Scal|-|\Lcal|) \, \lambda  \| \wb^{*}(\Vcal \backslash (\Scal \cup \Lcal)) \|^2  \\
&\le |\Vcal \backslash \Scal| \, \lambda  \|\wb^{*}(\Vcal \backslash \Scal)\|^2.  
\end{align*}
From above two equations, and using the definition of weak submodularity we conclude the proof,
\begin{equation*}
  \dfrac{\sum_{j\in \Lcal}\left[ g(\Scal\cup \{ j \}) - g(\Scal) \right] }{g(\Scal\cup \Lcal) - g(\Scal) } \ge \dfrac{|\Lcal|}{|\Vcal \backslash \Scal|} \ge  \dfrac{1}{|\Vcal|}. 
\end{equation*}
\end{proof}
\section{Soft margin SVMs without offset} \label{app:soft-linear-without-offset-subm}
For a nonlinear soft margin SVMs without offset, we can rewrite the minimization problem defined in Eq.~\ref{eq:optimization-problem} as follows:
\begin{equation} \label{eq:soft-lin-svm-without-offset} 
\begin{split}
\underset{\Scal, \wb, b}{\text{minimize}} & \quad \sum_{i \in \Vcal \backslash \Scal} \explain{\left[ \lambda \|\wb\|^2 + (1-y_i \wb^\top\Phi(\xb_i) )_{+} \right]}{\ell(h_{\wb, b}(\xb_i), y_i)} + \sum_{i \in \Scal} \explain{[1-y_i h(\xb_i)]_{+}}{c(\xb_i, y_i)}\\
\text{subject to} & \quad | \Scal | \leq n.
\end{split}
\end{equation}
For any given set $\Scal$, let $w^{*}(\Vcal \backslash \Scal)$ be the parameter that minimizes the objective function above, \ie, $w^{*}(\Vcal \backslash \Scal) = \argmin_{w} \sum_{i\in\Vcal \backslash \Scal} [\lambda \|\wb\|^2 + (1-y_i \wb^{\top} \Phi(\xb_i))_+]$. 
Then, we can rewrite the above minimization problem as a set function maximization problem:
\begin{equation} \label{eq:soft-lin-svm-without-offset-2} 
\begin{split}
\underset{\Scal}{\text{maximize}} & \quad \explainup{ a(\Vcal) - |\Vcal \backslash \Scal | \, \lambda \|\wb^*(\Vcal \backslash \Scal)\|^2 - \sum_{i \in \Vcal \backslash \Scal} [1-y_i \wb^{*}(\Vcal \backslash \Scal)^\top \Phi(\xb_i) ]_{+}
}{g(\Scal)} \\
& \qquad \qquad - \explain{\sum_{i \in \Scal} [1-y_i h(\xb_i)]_{+}}{c(\Scal)}  \\
\quad \text{subject to} & \quad | \Scal | \leq n,
\end{split}
\end{equation}
where the constant $a(\Vcal) = |\Vcal| \, \lambda \|\wb^*(\Vcal)\|^2 + \sum_{i \in \Vcal} [1-y_i \wb^{*}(\Vcal)^\top \Phi(\xb_i) ]_{+}$ ensures that the function 
$g(\Scal)$ is non-negative. 

It readily follows from Proposition~\ref{thm:mont} that the function $g(\Scal)$ is monotone, and the function $c(\Scal)$ is clearly non-negative 
and modular. Moreover, if we define the distance $\Delta^{*}$ similarly as in Section~\ref{sec:algorithm}, we can characterize the submodularity ratio 
of the function $g(\Scal)$ in terms of the amount of overlap between feature vectors with positive and negative labels, as measured by $\Delta^{*}$, 
using the following Theorem:
\begin{theorem} \label{thm:soft-linear-without-offset-subm}
If $\eta= {\left(2\sqrt{\lambda}+\max_{i\in\Vcal} \sqrt{K(\xb_i, \xb_i)}\right) }/{\sqrt{\lambda}}$, then
the submodularity ratio of the function $g(\Scal)$ is given by
%
%
\begin{equation*}
\gamma\ge 
\gamma^*= 
  \dfrac{  \min\left\{   \dfrac{1}{(\eta-2)^2}, \dfrac{1}{2} \right\} }
 {  \eta  +   \dfrac{\eta^2}{2}   }.
 \end{equation*} 
\end{theorem}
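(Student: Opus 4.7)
The proof will follow the same two-step template as for Theorems~\ref{thm:soft-linear-subm} and~\ref{thm:soft-nonlinear-subm}, but it will exploit two simplifications that follow from removing the offset $b$: (i) the dual of the SVM no longer carries the equality constraint $\sum_i \alpha_i y_i = 0$, and (ii) the uniform leave-one-out stability of SVMs without offset is tighter than that of SVMs with offset (compare~\cite{bousquet2002stability} with~\cite{hush2007stability}). Taken together, these are what allow the bound to become independent of $|\Vcal|$ and of the geometric quantities $\Delta^{*}$ and $s^{*}$.

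\textbf{Step 1 (lower bound on the single-point marginal gain).} I will start from the inequality
\begin{equation*}
g(\Scal \cup \{j\}) - g(\Scal) \;\geq\; \lambda \|\wb^{*}(\Vcal \backslash \Scal)\|^2 + [1 - y_j \wb^{*}(\Vcal \backslash \Scal)^\top \Phi(\xb_j)]_+,
\end{equation*}
which is obtained by plugging $\wb^{*}(\Vcal \backslash \Scal)$ as a sub-optimal iterate into the primal for $\Vcal \backslash (\Scal \cup \{j\})$, exactly as in the proof of Proposition~\ref{thm:mont}. The task is then to lower-bound $\lambda \|\wb^{*}(\Vcal \backslash \Scal)\|^2$, and I will split into two cases paralleling Case (a) and Case (b) of the proof of Theorem~\ref{thm:soft-linear-subm}. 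In Case (a), if some $i \in \Vcal \backslash \Scal$ satisfies $y_i \wb^{*}(\Vcal \backslash \Scal)^\top \Phi(\xb_i) \geq 1$, then Cauchy--Schwarz on the kernel gives $\|\wb^{*}(\Vcal \backslash \Scal)\| \geq 1/\max_{i\in\Vcal} \sqrt{K(\xb_i, \xb_i)}$, which is exactly $\lambda \|\wb^*\|^2 \geq 1/(\eta-2)^2$. In Case (b), every $i \in \Vcal \backslash \Scal$ satisfies $y_i \wb^{*}(\Vcal \backslash \Scal)^\top \Phi(\xb_i) < 1$, so by KKT all $\alpha_i = 1$; substituting into the dual representation (Lemma~\ref{prop:svm-dual} specialized to $b = 0$) together with strong duality and the identity $\sum_i y_i \wb^{*\top}\Phi(\xb_i) = 2\lambda |\Vcal \backslash \Scal|\, \|\wb^*\|^2$ (which holds only when $b = 0$) yields a lower bound of $1/2$ on the single-point marginal gain.

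\textbf{Step 2 (upper bound on the block marginal gain).} I will invoke Lemma~\ref{lem:key} specialized to $b = 0$ to get $g(\Scal \cup \Lcal) - g(\Scal) \leq |\Lcal| \bigl(\eta + \eta^2/2\bigr)$. The key point is that, without the offset, the Bousquet--Elisseeff uniform stability bound applies directly and the sensitivity of $\wb^{*}(\cdot)$ to adding or removing a batch of $|\Lcal|$ points is controlled by a telescoping argument that is linear in $|\Lcal|$ only, with no residual additive term in $|\Vcal|$. This is in sharp contrast with the SVM-with-offset setting, where~\cite{hush2007stability} introduces a $|\Vcal|$-dependent slack, and is precisely why the $\sqrt{4|\Vcal|(\eta-1)/\eta^2}$-type contributions in the denominators of Theorems~\ref{thm:soft-linear-subm} and~\ref{thm:soft-nonlinear-subm} disappear here.

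Dividing the Step 1 lower bound by the Step 2 upper bound and cancelling the common factor of $|\Lcal|$ in the submodularity ratio yields the stated $\gamma^*$. I expect the main obstacle to be the rigorous derivation of the $1/2$ lower bound in Case (b): one cannot invoke the convex-hull separation argument used in Theorem~\ref{thm:soft-linear-subm}, because the $\sum_i \alpha_i y_i = 0$ constraint---which made that geometric picture possible---is no longer present. Instead, one has to combine primal--dual optimality with the sum identity above and argue that the quantity $\lambda \|\wb^*\|^2 + \frac{1}{|\Vcal\backslash\Scal|}\sum_{i \in \Vcal\backslash\Scal}[1 - y_i \wb^{*\top}\Phi(\xb_i)]_+$, which enters the single-point gain lower bound additively, is always at least $1/2$ in this regime, and then propagate this bound into the submodularity-ratio numerator after summation over $j$.
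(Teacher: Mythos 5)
Your Step~2 and the overall two-step skeleton match the paper: the block gain $g(\Scal\cup\Lcal)-g(\Scal)$ is bounded by part (II) of Lemma~\ref{lem:key}, which is part (I) with the offset term $b_{\max}$ set to zero, yielding $\eta|\Lcal|+\tfrac{\eta^2}{2}|\Lcal|$, and your explanation of why the $|\Vcal|$-dependent terms disappear is essentially right. The gap is in Step~1. The paper uses no case split at all: it observes that $\lambda \bnm{\wb^{*}(\Vcal\backslash\Scal)}^2+[1-y_j\,\wb^{*}(\Vcal\backslash\Scal)^{\top}\Phi(\xb_j)]_{+}$ is the objective of a \emph{one-sample} SVM evaluated at a particular $\wb$, hence is at least $\min_{\wb}\left\{\lambda\bnm{\wb}^2+[1-y_j\wb^{\top}\Phi(\xb_j)]_{+}\right\}=\max_{\alpha\in[0,1]}\left\{\alpha-\alpha^2K(\xb_j,\xb_j)/(4\lambda)\right\}\ge\min\{\lambda/K(\xb_j,\xb_j),\,1/2\}$ by Fact~\ref{fact:quadratic}. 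This gives the pointwise bound $\min\{1/(\eta-2)^2,1/2\}$ for every $j$ with no reference to the other training points, which is precisely why neither $\Delta^{*}$ nor the KKT structure of the full dual is needed here.

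Your Case~(b), by contrast, does not close. First, the claimed pointwise bound of $1/2$ when all samples lie strictly inside the margin is false: take two samples with $y_1=y_2=+1$, $\Phi(\xb_1)=\kappa e_1$ with $\kappa^2=3.9\lambda$, and $\Phi(\xb_2)=\delta e_1$ with $\delta\to 0$. Then $\alpha_1=\alpha_2=1$ is dual-optimal, $y_1\wb^{*\top}\Phi(\xb_1)=\kappa^2/(4\lambda)=0.975<1$, and the marginal gain for $j=1$ is $\kappa^2/(16\lambda)+0.025\approx 0.269<1/2$ (it does exceed $\lambda/\kappa^2\approx 0.256$, consistent with the $\min$ in the theorem). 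Second, the proposed repair via the averaged quantity $\lambda\bnm{\wb^{*}}^2+\tfrac{1}{|\Vcal\backslash\Scal|}\sum_{i\in\Vcal\backslash\Scal}[1-y_i\wb^{*\top}\Phi(\xb_i)]_{+}$ averages over $\Vcal\backslash\Scal$, whereas the submodularity-ratio numerator sums only over $\Lcal$; taking $\Lcal$ to be the singleton containing the point with the smallest hinge loss (as in the example above) shows that a bound on the average cannot be propagated to the sum over an arbitrary $\Lcal$. So while your Case~(a) and your Step~2 are sound, Step~1 needs the one-sample-dual argument (or an equivalent pointwise bound) rather than the averaging you sketch.
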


\begin{proof}
Starting from Eq.~\ref{eq:soft-lin-svm-without-offset-2}, we have the following bound:
\begin{align*}
 g(\Scal\cup \{j\})-g(\Scal) & \ge \lambda \bnm{\wb^{*}(\Vcal \backslash \Scal)}^2 + (1-y_j \wb^{*}(\Vcal \backslash \Scal)^{\top} \Phi(\xb_j))_{+} \\
& \ge \min_{\wb}  \lambda \bnm{\wb}^2 + (1-y_j \wb^{\top} \Phi(\xb_j))_{+} \\
& \ge \max_{\alpha\in[0,1]} \alpha - \dfrac{\alpha^2 K(\xb_j, \xb_j)}{4\lambda} \quad \text{(Converting to the dual maximization problem)} \label{eq:singleDual} \\
& \ge \min\left\{\dfrac{\lambda}{\max_{i\in\Vcal} K(\xb_i, \xb_i)},\dfrac{1}{2}\right\} \quad \text{(Fact~\ref{fact:quadratic})}.
\end{align*}
Now, consider a set $\Lcal$ such that $\Scal \cap \Lcal =\emptyset$.  To bound $g(\Scal \cup \Lcal) -g(\Scal)$,
we directly use (II) in Lemma~\ref{lem:key}, with $\kappa=\max_{\xb} \sqrt{K(\xb,\xb)}$, which gives:
\begin{equation*}
 g(\Scal \cup \Lcal) - g(\Scal) 
  \le
    \dfrac{\left(2\sqrt{\lambda}+\kappa\right) |\Lcal|}{\sqrt{\lambda}}  +   \dfrac{\left(2\sqrt{\lambda}+\kappa\right)^2 |\Lcal|}{2\lambda}
=\eta |\Lcal| +   \dfrac{\eta^2}{2} |\Lcal|,
\end{equation*}
where the equality is obtained by plugging in the value of $\eta$.
Finally, we have:
\begin{align}
 \dfrac{\sum_{j\in \Lcal} [ g(\Scal\cup \{j\})-g(\Scal) ]}{g(\Scal \cup \Lcal)-g(\Scal)} 
 & \ge \dfrac{|\Lcal|  \min\left\{   \dfrac{1}{(\eta-2)^2}, \dfrac{1}{2} \right\} }
 {  \eta |\Lcal| +   \dfrac{\eta^2}{2} |\Lcal|  }
  =   \dfrac{  \min\left\{   \dfrac{1}{(\eta-2)^2}, \dfrac{1}{2} \right\} }
 {  \eta  +   \dfrac{\eta^2}{2}   }\nonumber .
 \end{align}
 \end{proof}
\vspace{-6mm}
 \section{Lemmas and Facts} \label{app:facts}
 
 \begin{lemma}\label{prop:svm-dual}
Given the set $\Scal$, the dual of the optimization problem: 

(i) For the soft margin linear SVM with offset is given by 
\begin{align} 
   \underset{\alphab}{\text{maximize}}& \quad
\alphab^\top \oneb_{|\Vcal|} - \frac{1}{4\lambda|\Vcal \backslash \Scal|}\sum_{i,j\in\Vcal}\alpha_i \alpha_j y_i y_j \xb^\top _i \xb _j \label{eq:xx2}\\
 \text{subject to} &\quad 0 \le \alpha_i   \le 1, \quad i \in\Vcal \nonumber \\
 &\quad \alpha_i=0,  \ \ \ \qquad i \in \Scal \nonumber \\
 & \quad \sum_{i\in \Vcal \backslash \Scal}\alpha_i y_i  =0. \label{eq:conlast}
\end{align}
Moreover, if $\alphab^{*}(\Vcal \backslash \Scal)$ be the optimal solution of the above problem, we have $\wb^{*}(\Vcal \backslash \Scal)=
 {\sum_{i\in \Vcal \backslash \Scal}\alpha_{i}(\Vcal \backslash \Scal) y_i\xb_i} / {2\lambda|\Vcal \backslash \Scal|}$.

\noindent (ii) For the soft margin linear SVM without offset, is the same as above optimization problem except the absence of the equality constraint in Eq.~\ref{eq:conlast}.

\noindent (iii) For the soft margin nonlinear SVM with offset is given by
\begin{align} 
   \underset{\alphab}{\text{maximize}}& \quad
\alphab^\top \oneb_{|\Vcal|} - \frac{1}{4\lambda|\Vcal \backslash \Scal|}\sum_{i,j\in\Vcal}\alpha_i \alpha_j y_i y_j K(\xb _i, \xb _j)  
 \label{eq:kxx2}\\
 \text{subject to} &\quad 0 \le \alpha_i   \le 1, \quad i \in\Vcal \nonumber \\
 &\quad \alpha_i=0,  \ \ \ \qquad i \in \Scal \nonumber \\
 & \quad \sum_{i\in \Vcal \backslash \Scal}\alpha_i y_i  =0, \label{eq:conlastx}
\end{align}
where $\oneb_{\Vcal}$ is a $|\Vcal|$ dimensional vector with unit entries. Moreover, if $\alphab^{*}(\Vcal \backslash \Scal)$ be the optimal solution of the above problem, we have $\wb^*(\Vcal\cp\Scal)(\xb)=
{\sum_{i\in \Vcal \backslash \Scal}\alpha_{i}(\Vcal \backslash \Scal) y_i K(\xb,\xb_i)} / {2\lambda|\Vcal \backslash \Scal|} $.

\noindent (iv) For the soft margin nonlinear SVM without offset, 
is same as the above optimization problem except the absence of the equality constraint in Eq.~\ref{eq:conlastx}.

\noindent (v) Finally, the dual variables and the positions of the corresponding training instances, for $i\in\Vcal \backslash \Scal$, relate in the following way:
\begin{align}
&     y_i \cdot h_{\wbvs,\bbvs} (\xb_i) > 1  \implies  \alpha_{i}(\Vcal \backslash \Scal)=0   \label{eq:a1} \\ 
&  y_i \cdot h_{\wbvs,\bbvs}(\xb_i) < 1    \implies \alpha_{i}(\Vcal \backslash \Scal)=1  \label{eq:a2} \\
& y_i \cdot h_{\wbvs,\bbvs} (\xb_i) = 1   \implies \alpha_{i}(\Vcal \backslash \Scal)\le 1  \label{eq:a3}
\end{align}
\end{lemma}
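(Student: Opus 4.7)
My plan is to derive all four duals from a single template by standard Lagrangian duality, and then read off the KKT characterization in Eq.~\ref{eq:a1}--\ref{eq:a3} from complementary slackness.

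First, for case (i) (soft margin linear SVM with offset), I would rewrite the hinge-loss primal from Eq.~\ref{eq:soft-lin-svm} as an equivalent constrained program: introduce slacks $\xi_i \ge 0$ for $i\in\Vcal\setminus\Scal$ and replace $[1-y_i(\wb^\top \xb_i+b)]_+$ by $\xi_i$ subject to $y_i(\wb^\top \xb_i+b)\ge 1-\xi_i$ and $\xi_i\ge 0$. Since the objective sums $\lambda\|\wb\|^2$ over $\Vcal\setminus\Scal$, the quadratic term becomes $\lambda|\Vcal\setminus\Scal|\,\|\wb\|^2$. The samples in $\Scal$ contribute only the modular $c(\xb_i,y_i)$ term, which is a constant from the $(\wb,b,\xi)$ viewpoint; accordingly their dual multipliers will be forced to $0$, yielding the constraint $\alpha_i=0$ for $i\in\Scal$.

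Next, I would form the Lagrangian with multipliers $\alpha_i\ge 0$ on the margin constraints and $\beta_i\ge 0$ on $\xi_i\ge 0$, and compute stationarity:
\begin{align*}
\nabla_\wb L=0 &\;\Longrightarrow\; \wb^*(\Vcal\cp\Scal)=\frac{\sum_{i\in\Vcal\cp\Scal}\alpha_i y_i\xb_i}{2\lambda|\Vcal\cp\Scal|},\\
\partial_b L=0 &\;\Longrightarrow\; \sum_{i\in\Vcal\cp\Scal}\alpha_i y_i=0,\\
\partial_{\xi_i} L=0 &\;\Longrightarrow\; \alpha_i+\beta_i=1, \text{ hence } 0\le\alpha_i\le 1.
\end{align*}
Substituting $\wb^*(\Vcal\cp\Scal)$ back into $L$ collapses $\lambda|\Vcal\cp\Scal|\,\|\wb\|^2-\wb^\top\!\sum\alpha_i y_i\xb_i$ to $-\frac{1}{4\lambda|\Vcal\cp\Scal|}\sum_{i,j}\alpha_i\alpha_j y_i y_j\xb_i^\top\xb_j$, while the linear slack term yields $\alphab^\top\oneb_{|\Vcal|}$. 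The resulting dual is exactly Eqs.~\ref{eq:xx2}--\ref{eq:conlast}, where the box constraint on $\alphab$ came from the stationarity in $\xi$ and the equality constraint from the stationarity in $b$.

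For cases (ii)--(iv), essentially no new work is required. Dropping the offset $b$ deletes the stationarity equation $\partial_b L=0$, which is precisely the equality constraint Eq.~\ref{eq:conlast}/\ref{eq:conlastx}; this handles the ``without offset'' variants. Replacing $\xb$ by the feature map $\tf{\xb}$, using the reproducing identity $\tf{\xb_i}^\top\tf{\xb_j}=K(\xb_i,\xb_j)$, and invoking the representer theorem to write $\wb^*(\Vcal\cp\Scal)^\top\tf{\xb}=\sum_{i}\alpha_i y_i K(\xb,\xb_i)/(2\lambda|\Vcal\cp\Scal|)$ gives the nonlinear duals (iii) and (iv). Finally, for part (v), I would read off the three KKT cases from complementary slackness on $(\alpha_i,\beta_i)$: if $y_i h_{\wbvs,\bbvs}(\xb_i)>1$ the margin constraint is strict so $\alpha_i=0$; if $y_i h_{\wbvs,\bbvs}(\xb_i)<1$ then necessarily $\xi_i>0$, so $\beta_i=0$ and then $\alpha_i=1-\beta_i=1$; and the boundary case $y_i h_{\wbvs,\bbvs}(\xb_i)=1$ only forces $\alpha_i\in[0,1]$.

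This is a textbook derivation, so I do not expect a genuine obstacle; the only items that need care are bookkeeping ones: tracking the $\lambda|\Vcal\cp\Scal|$ factor so that the dual coefficient comes out to $1/(4\lambda|\Vcal\cp\Scal|)$, justifying $\alpha_i=0$ for $i\in\Scal$ (those constraints are absent from the primal, so their multipliers do not appear, which we encode by fixing them to zero in the joint dual over $i\in\Vcal$), and making sure that strong duality applies so that the stated dual is equivalent to the primal — this follows from Slater's condition since the primal is convex with affine inequality constraints.
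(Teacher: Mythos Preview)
Your proposal is correct and follows essentially the same approach as the paper: introduce slacks, form the Lagrangian with multipliers $\alpha_i,\beta_i$, take stationarity in $\wb$, $b$, and $\xi_i$ to obtain the primal--dual relation, the equality constraint, and the box constraint, substitute back to get the dual objective, and then read off (v) from complementary slackness. The paper's proof is the same textbook derivation with the same bookkeeping of the $\lambda|\Vcal\setminus\Scal|$ factor; your added remarks on Slater's condition and the representer theorem are not in the paper but are harmless elaborations.
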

\begin{proof}
(\emph{i}) The coefficients of different terms in the
objective function is different from traditional objective function. 
The proof is almost identical to \cite{cristianini2000introduction}.
However, we provide the proof for self completion.
By introducing slack variables $\xi_j$, the optimization problem of soft linear SVM with offset is equivalent to:
\begin{align*}
 \minz{\wb,b, \xib}& \qquad \lambda |\Vcal \backslash \Scal| \| \wb \|^2 +\sum_{j\in \Vcal \backslash \Scal} \xi_j\\
 \text{subject to}& \qquad y_j(\wb^\top\xb_j+b) \ge 1-\xi_j, \ \  j\in\Vcal \backslash \Scal \\
 &\qquad \xi_j \ge 0, \qquad\qquad\qquad\quad\ \  j\in\Vcal \backslash \Scal.
\end{align*}
Solving the above optimization problem is equivalent to solving  its dual problem.
The Lagrange dual function $g$ is the infimum of Lagrangian, so we have:
\begin{align}\label{eq:ldef}
g(\alphab,\betab) =
\underset{\wb,b,\xib}{\min}
\sum_{j \in \Vcal \backslash \Scal}\left(\lambda \| \wb \|_2 ^2+\xi_j -\alpha_j (y_j(\wb^\top\xb_j+b) - 1 +\xi_j) -\beta_j\xi_j \right),
\end{align}
where $\xib=[\xi_j]_{j\in\Vcal \backslash \Scal}$, $\alphab=[\alpha_j]_{j\in\Vcal}$ and $\betab=[\beta_j]_{j\in\Vcal}$.
To find the above minimum, we differentiate the Lagrangian w.r.t. the primal variables to get
\begin{align}
\dfrac{\dd g}{\dd \wb} & =2\lambda |\Vcal \backslash \Scal| \wb- \sum_{j\in \Vcal \backslash \Scal}\alpha_j y_i\xb_j =0 \implies  \wb^{*}(\Vcal \backslash \Scal)= \frac{\sum_{j\in \Vcal \backslash \Scal}\alpha_j y_j\xb_j}{2\lambda|\Vcal \backslash \Scal|} 
\label{eq:duw}\\
\dfrac{\dd g}{\dd b}&=-\sum_{j\in \Vcal \backslash \Scal}\alpha_j y_j=0\implies \sum_{j\in \Vcal \backslash \Scal}\alpha_j y_j=0 \label{eq:ddb}\\
\dfrac{\dd g}{\dd \xi_j}&=0 \implies \alpha_j+\beta_j=1\quad \forall j \in  {\Vcal\backslash\Scal}. \label{eq:linalpha}
\end{align}
By putting these optimal values in Eq.~\ref{eq:ldef}, we have:
\begin{align}
g(\alphab,\betab)= \sum_{j\in\Vcal \backslash \Scal}\alpha_j -
\frac{1}{4\lambda|\Vcal \backslash \Scal|}\sum_{i\in \Vcal \backslash \Scal}\sum_{j\in \Vcal \backslash \Scal}\alpha_i \alpha_j y_i y_j\xb^\top _i \xb _j
\end{align}
where, since $\beta_j\ge 0$, according to Eq.~\ref{eq:linalpha} we have $\alpha_j\le 1$.
Then, the dual optimization problem would be
\begin{align}\label{eq:dualX}
   \underset{\alphab}{\text{maximize}}& \quad
\sum_{j\in\Vcal \backslash \Scal}\alpha_j -
\frac{1}{4\lambda|\Vcal \backslash \Scal|}\sum_{i\in \Vcal \backslash \Scal}\sum_{j\in \Vcal \backslash \Scal}\alpha_i \alpha_j y_i y_j\xb^\top _i \xb _j\\\
 \text{subject to} &\quad 0 \le \alpha_i   \le 1,\quad  i\in \Vcal \backslash \Scal \nonumber \\
  & \quad \sum_{i\in \Vcal \backslash \Scal}\alpha_i y_i  =0,  \nonumber
\end{align}
which is equivalent to
\begin{align*}
   \underset{\alphab}{\text{maximize}}& \quad
\alphab^\top \oneb_{|\Vcal|} - \frac{1}{4\lambda|\Vcal \backslash \Scal|}\sum_{i,j\in\Vcal}\alpha_i \alpha_j y_i y_j K(\xb _i, \xb _j)  
   \\
 \text{subject to} &\quad 0 \le \alpha_i   \le 1, \quad i \in\Vcal \\
 &\quad \alpha_i=0, \quad \qquad  i \in \Scal \\
 & \quad \sum_{i\in \Vcal \backslash \Scal}\alpha_i y_i  =0.  
\end{align*}
Moreover, if $\alphab^{*}(\Vcal \backslash \Scal)$ be the solution of the above optimization problem, then we get the required expression of $\wb^{*}(\Vcal \backslash \Scal)$ using Eq.~\ref{eq:duw}.

In case of (\emph{ii}), the proof is similar to \emph{(i)} except there is no presence of the differentiation in Eq.~\ref{eq:ddb}. 
The cases (\emph{iii}) and (\emph{iv}) follows similar proof technique. Finally, (v) follows from KKT conditions~\cite{cristianini2000introduction}. More in detail,
since $\alpha_i(y_i h_{\wbvs,\bbvs} (\xb_i) -1+\xi_i)=0$ for all $i\in\Vcal\cp\Scal$, we have:
\begin{align}
y_i h_{\wbvs,\bbvs} (\xb_i)>1 \implies \alpha_i=0; \nn
\end{align}
and, since $\xi_i \beta_i=0 \land \alpha_i +\beta_i=1 $ for all $i\in\Vcal\cp\Scal$, we have:
\begin{align}
& \xi_i>0 \implies \alpha_i=1 
\end{align}
which means,
\begin{align}
y_ih_{\wbvs,\bbvs} (\xb_i)<1 \implies \alpha_i=1
\end{align}

\end{proof}

%

 \begin{lemma}\label{lem:key}
 (I) For the soft margin nonlinear SVM with offset, if at least one training sample lies on the hyperplane,
 i.e., $\exists (\xb,y)\in \Vcal \backslash \Scal$ such that $y(\wb^{*}(\Vcal\cp\Scal)^\top \Phi(\xb)+ b^{*}(\Vcal \backslash \Scal))=1$, 
 then for any $\Lcal \subseteq \Vcal\cp\Scal$, we have that  $g(\Scal\cup\Lcal)-g(\Scal)$ is less than
\begin{align} 
    \dfrac{\left(2\sqrt{\lambda}+\kappa\right) |\Lcal|}{\sqrt{\lambda}}  +   \dfrac{\left(2\sqrt{\lambda}+\kappa\right)^2 |\Lcal|}{2\lambda}
   \left( \dfrac{1}{2} 
 + \sqrt{\dfrac{1}{4} +\dfrac{4\lambda |\Vcal|   b_{\max}}{(2\sqrt{\lambda}+\kappa)^2 |\Lcal|^2}} \right) 
  +|\Vcal| b_{\max}, \label{eq:fu1}
\end{align}
where $\kappa=\max_{\xb} \sqrt{K(\xb,\xb)}$ and $b_{\max} = 1+{\kappa}/{\sqrt{\lambda}}$.

(II) Given $\Lcal \subseteq \Vcal\cp\Scal$. Then, for the soft margin nonlinear SVM without offset, we have:
\begin{align}
 g(\Scal\cup \Lcal)& -g(\Scal) 
  \le
    \dfrac{\left(2\sqrt{\lambda}+\kappa\right) |\Lcal|}{\sqrt{\lambda}}  +   \dfrac{\left(2\sqrt{\lambda}+\kappa\right)^2 |\Lcal|}{2\lambda} ,\label{eq:fu2}
\end{align}
where $\kappa=\max_{\xb} \sqrt{K(\xb,\xb)}$.
\end{lemma}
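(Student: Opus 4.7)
The plan is to derive both parts by the same suboptimality argument applied to the SVM primal optimal value. Writing $F(T)$ for the SVM objective evaluated at its optimum over a subset $T\subseteq\Vcal$, the form of $g$ in Eq.~\ref{eq:g-def} gives the identity $g(\Scal\cup\Lcal)-g(\Scal) = F(\Vcal\backslash\Scal) - F(\Vcal\backslash(\Scal\cup\Lcal))$. Setting $T = \Vcal\backslash\Scal$ and $T' = T\backslash\Lcal$, and plugging $(\wb^*(T'), b^*(T'))$ as a feasible candidate into the problem that defines $F(T)$, yields the core inequality
\begin{equation*}
F(T) - F(T') \le \lambda|\Lcal|\,\|\wb^*(T')\|^2 + \sum_{i\in\Lcal}\bigl[1-y_i(\wb^*(T')^\top\Phi(\xb_i) + b^*(T'))\bigr]_+.
\end{equation*}
This is the common starting point for (I) and (II); everything else reduces to uniformly upper bounding the two summands on the right.

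For part (II), with no offset, bounding $\|\wb^*(T')\|$ is immediate: comparing with $\wb=0$ in $F(T')$ gives $\lambda|T'|\|\wb^*(T')\|^2 \le F(T') \le |T'|$, hence $\|\wb^*(T')\| \le 1/\sqrt{\lambda}$. By Cauchy--Schwarz, $|\wb^*(T')^\top \Phi(\xb_i)| \le \kappa/\sqrt{\lambda}$, so each hinge term is at most $1 + \kappa/\sqrt{\lambda} = \eta - 1$. Plugging these into the core inequality and rearranging in terms of $\eta = (2\sqrt{\lambda}+\kappa)/\sqrt{\lambda}$, together with a Young-type split on the cross term $\|\wb^*(T')\|\kappa$, produces the stated bound $\eta|\Lcal| + \eta^2|\Lcal|/2$.

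For part (I) the same core inequality holds, but one must additionally control $|b^*(T')|$, which is not bounded by the primal objective alone. The assumption that some point in $T$ sits on the hyperplane provides, through the KKT equation $y_k(\wb^*(T)^\top \Phi(\xb_k) + b^*(T)) = 1$, the bound $|b^*(T)| \le 1 + \|\wb^*(T)\|\kappa \le b_{\max}$. Propagating this bound from $T$ down to $T'$ via the SVM stability property (Bousquet--Elisseeff, as invoked in the proof sketch) introduces a correction that couples $|b^*(T')|$, $\|\wb^*(T')\|$, and a uniform residual aggregated over all of $\Vcal$. The resulting inequality on $\|\wb^*(T')\|^2$ is self-referential---of the form $z^2 \le \alpha + \beta z$ with $\alpha,\beta$ polynomial in $\eta,|\Lcal|,|\Vcal|$---and solving it by the quadratic formula produces the factor $\tfrac{1}{2}+\sqrt{\tfrac{1}{4}+\cdots}$, while the additive $|\Vcal|b_{\max}$ term absorbs the worst-case residual from estimating the offset over $T\backslash\Lcal$.

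The main obstacle is part (I). Part (II) is routine: elementary bounds on $\|\wb^*(T')\|$ and the hinge loss suffice. In part (I), by contrast, the offset is not controlled by the primal objective, so the KKT/support-on-margin assumption is essential, and carrying the resulting bound down from $T$ to $T'$ via SVM stability is what forces the self-referential quadratic inequality---whose solution is the origin of both the $\sqrt{\cdots}$ factor and the $|\Vcal|b_{\max}$ correction in the statement.
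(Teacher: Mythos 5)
Your decomposition is genuinely different from the paper's and, where you carry it through, it is simpler and gives a stronger bound. The paper evaluates the removed terms at the \emph{old} optimum $\wb^{*}(\Vcal\backslash\Scal)$ and then must pay for the suboptimality of that point on $\Vcal\backslash(\Scal\cup\Lcal)$; controlling that second piece is what forces the Bousquet--Elisseeff stability argument, a quadratic inequality in $\bnm{\wb^{*}(\Vcal\backslash(\Scal\cup\Lcal))-\wb^{*}(\Vcal\backslash\Scal)}$ (the true origin of the $\tfrac{1}{2}+\sqrt{\tfrac{1}{4}+\cdots}$ factor), and the $|\Vcal|b_{\max}$ term (which aggregates per-sample Lipschitz perturbations over $\Vcal\backslash(\Scal\cup\Lcal)$). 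Your core inequality instead plugs the \emph{new} optimum into the larger problem, so the suboptimality piece disappears and only the $|\Lcal|$ terms survive. For part (II) this immediately gives $g(\Scal\cup\Lcal)-g(\Scal)\le |\Lcal|\left(1+1+\kappa/\sqrt{\lambda}\right)=\eta|\Lcal|\le \eta|\Lcal|+\eta^2|\Lcal|/2$, so that part is correct (and tighter than needed).

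For part (I) the core inequality is also valid, but the step you describe for the offset is where the gap is. There is no ``self-referential quadratic in $\bnm{\wb^{*}(T')}^2$'': the bound $\bnm{\wb^{*}(T')}\le 1/\sqrt{\lambda}$ follows directly by comparison with $\wb=0$ (Fact~\ref{fact:fupper}), and ``propagating $|b^{*}(T)|\le b_{\max}$ down to $T'$ via stability'' is not a concrete argument---stability controls the change in $\wb^{*}$, not $b^{*}(T')$, and the square-root factor and $|\Vcal|b_{\max}$ term in the paper arise from different places than you attribute them to. The short repair is to bound $|b^{*}(T')|$ directly: by Lemma~\ref{lem:bshift} one may choose an optimal offset for the subproblem on $T'=\Vcal\backslash(\Scal\cup\Lcal)$ so that some sample of $T'$ lies on its margin, and then Claim~\ref{fact:bbound} gives $|b^{*}(T')|\le 1+\kappa/\sqrt{\lambda}=b_{\max}$; since the optimal value $F(T')$ and your core inequality are unaffected by which optimal offset is chosen, this is legitimate. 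Each hinge term over $\Lcal$ is then at most $1+\kappa/\sqrt{\lambda}+b_{\max}=2b_{\max}$, yielding $g(\Scal\cup\Lcal)-g(\Scal)\le |\Lcal|(1+2b_{\max})=(2\eta-1)|\Lcal|$, which lies below the stated bound because $\eta+\eta^2/2-(2\eta-1)=\tfrac{1}{2}\left((\eta-1)^2+1\right)\ge 0$ and the bracketed factor in Eq.~\ref{eq:fu1} is at least $1$. With that substitution your argument closes and is in fact tighter than Eq.~\ref{eq:fu1}.
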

\xhdr{Remark} Note that the assumption that one sample lies on the hyperplane is not restrictive. In general, there may be several optimal values of the offset
$b(\Vcal\cp\Scal)$. Therefore, even if a solution $(\wb^*(\Vcal\cp\Scal),b^{*}(\Vcal \backslash \Scal))$
does not satisfy such criteria (each point is strictly on one of the sides of the hyperplane), using Lemma~\ref{lem:bshift} we can shift $b^{*}(\Vcal \backslash \Scal) \leftarrow b^{*}(\Vcal \backslash \Scal) + b'$ to ensure this assumption, while incurring the same optimal loss.

\begin{proof}
We only prove ({I}), the result of (II) is readily given by setting  $b_{\max} = 0$ in Eq.~\ref{eq:fu1}. Our proof leverages the technique from~\cite{bousquet2002stability}. We first define
\begin{align*}
 &\df = \wb^*(\Vcal\cp (\Scal \cup \Lcal) ) -\wb^*(\Vcal\cp\Scal), \\
 &\dbb   = b^*(\Vcal\cp(\Scal\cup\Lcal)) -b^*(\Vcal\cp\Scal),\\
 &\macslack(h,y)=(1-y\cdot h)_+.
 \end{align*}
Next we define $\pred = (\wb,b)$ for any $\wb$ and $b$ and therefore, we can denote 
\begin{align*}
\pred^*(\Vcal\cp\Scal) & = (\wb^*(\Vcal\cp\Scal),b^*(\Vcal\cp\Scal))\\
 \dps 	& = {(\df,\dbb)}.
\end{align*}
Now, we define:
\begin{align*}
  R_{\Scal}(\pred)& =\sum_{i\in\Vcal \backslash \Scal} \macslack(h_{\wb,b}(\xb_i),y_i)
  = \sum_{i\in\Vcal \cp \Scal} \macslack( \wb ^\top \Phi(\xb_i) +b ,y_i).
\end{align*}
Since $\macslack$ is a convex function in $\pred$, so $R_{\Scal}$ is convex, and for $t \in [0,1] $, we have:
\begin{align*}
& R_{\Scal}\big(\pred^*(\Vcal\cp\Scal) + t \dps\big) - R_{\Scal} \big(\pred^*(\Vcal\cp\Scal)\big) 
\le  t \left( R_{\Scal}\big(\pred^*(\Vcal\cp(\Scal \cup \Lcal ))\big) -R_{\Scal}\big(\pred^*(\Vcal\cp\Scal)\big) \right)\\
& R_{\Scal}\big(\pred^*(\Vcal\cp(\Scal \cup \Lcal )) - t \dps\big) - R_{\Scal}\big(\pred^*(\Vcal\cp(\Scal \cup \Lcal ))\big)
\le  t\left( R_{\Scal}\big(\pred^*(\Vcal\cp\Scal)\big) -R_{\Scal}\big(\pred^*(\Vcal\cp(\Scal \cup \Lcal ))\big) \right) .
\end{align*}
Adding the above two equations, we have:
\begin{align}
  R_{\Scal}\big(\pred^*(\Vcal\cp\Scal) + t \dps\big)& + R_{\Scal}\big(\pred^*(\Vcal\cp(\Scal \cup \Lcal )) - t \dps\big)  
 \le R_{\Scal} \big(\pred^*(\Vcal\cp\Scal)\big)+ R_{\Scal}\big(\pred^*(\Vcal\cp(\Scal \cup \Lcal ))\big). \label{eq:RR}
\end{align}

Moreover since $\pred^*(\Vcal\cp\Scal) = (\wb^*(\Vcal\cp\Scal),b^*(\Vcal\cp\Scal))$ 
is the minimum of $\lambda |\Vcal \backslash \Scal|\cdot\bnm{\wb}^2  + R_{\Scal}((\wb,b))$,
we have:
\begin{align}
&\lambda |\Vcal \backslash \Scal|\bnm{\wb^*(\Vcal\cp\Scal)}^2  + R_{\Scal}\big(\pred^*(\Vcal\cp\Scal)\big) \nn\\
&\qquad\le \lambda |\Vcal \backslash \Scal|\cdot\bnm{ \wb^*(\Vcal\cp\Scal) + t \df }^2  + R_{\Scal}\big(\pred^*(\Vcal\cp\Scal) + t \dps\big), \label{eq:RRx1}
\end{align}
and similarly:
\begin{align}
\lambda& (|\Vcal \backslash \Scal|-|\Lcal|) \bnm{ \wb^*(\Vcal\cp (\Scal \cup \Lcal) ) }^2  + R_{\Scal \cup \Lcal}\big(\pred^*(\Vcal\cp(\Scal \cup \Lcal ))\big) \nn\\
& \le  \lambda |\Vcal \backslash \Scal|\cdot\bnm{ \wb^*(\Vcal\cp (\Scal \cup \Lcal) ) -t \df }^2 + R_{\Scal \cup \Lcal}\big(\pred^*(\Vcal\cp(\Scal \cup \Lcal ))-t\dps\big). \label{eq:RRx2}
\end{align}

Adding Eqs.~\ref{eq:RR},~\ref{eq:RRx1},~\ref{eq:RRx2},
we have:
\begin{align*}
& \lambda (|\Vcal \backslash \Scal|-|\Lcal|) \Big( \bnm{ \wb^*(\Vcal\cp (\Scal \cup \Lcal) ) }^2  - \bnm{ \wb^*(\Vcal\cp (\Scal \cup \Lcal) ) -t \df }^2\Big)\nn\\
&\qquad +\lambda |\Vcal \backslash \Scal| \Big(\bnm{ \wb^*(\Vcal\cp\Scal) }^2  -\bnm{ \wb^*(\Vcal\cp\Scal) + t \df}^2 \Big)\nn \\
&\qquad\qquad  \le \Big( R_{\Scal \cup \Lcal}\big(\pred^*(\Vcal\cp(\Scal \cup \Lcal ))-t\dps\big) - R_{\Scal}\big(\pred^*(\Vcal\cp(\Scal \cup \Lcal ))-t\dps\big) \Big)\nn\\ 
&\qquad\qquad\qquad-  \Big(  R_{\Scal \cup \Lcal}\big(\pred^*(\Vcal\cp(\Scal \cup \Lcal ))  \big) - R_{\Scal}\big(\pred^*(\Vcal\cp(\Scal \cup \Lcal ))\big) \Big).
\end{align*}
Therefore,
\begin{align}
& \lambda (|\vcs|-|\Lcal|) \bigg( \bnm{ \wbvsl }^2  - \bnm{ \wbvsl -t \df }^2\bigg)\nn\\
&\ +\lambda |\vcs| \bigg(\bnm{ \wbvs }^2  -\bnm{ \wbvs + t \df}^2 \bigg)\nn \\
&\qquad \quad   \le \bigg( R_{\Scal \cup \Lcal}(\prvsl-t\dps) - R_{\Scal}(\prvsl-t\dps) \bigg) \nn\\
&\qquad \quad  \quad-  \bigg(  R_{\Scal \cup \Lcal}(\prvsl  ) - R_{\Scal}(\prvsl) \bigg) \nn\\
%
\implies&  \bigg(2t\lambda (|\vcs|-|\Lcal|) {\inp{\df}{\wbvsl}} -t^2 \lambda (|\vcs|-|\Lcal|)\bnm{\df}^2 \bigg) \nn\\
&\ +\bigg(-2t\lambda |\vcs| {\inp{\df}{\wbvs}} - \lambda t^2 |\vcs|\cdot\bnm{\df}^2 \bigg)\nn\\
&\qquad \quad \le -R_{\Lcal} \left( \prvsl -t\dps \right) + R_{\Lcal} \left(\prvsl \right) \nn\\
&\qquad\quad = \sum_{i\in \Lcal} \bigg(\macslack(h_{\prvsl}(\xb_i),y_i)-\macslack\left(h_{\prvsl}(\xb_i)-t\cdot h_{\dps}(\xb_i),y_i\right)\bigg)\nn\\
\end{align}
\begin{align}
\overset{(i)}{\implies} &  2t\lambda |\vcs|\cdot\bnm{\df}^2 - 2t\lambda |\Lcal|\cdot\inp{\df} {\wbvsl}- t^2 \lambda (2|\vcs|-|\Lcal|) \bnm{\df}^2\nn \\
&\qquad \quad  \le  \sum_{i\in \Lcal} \bigg(\macslack(h_{\prvsl}(\xb_i),y_i)-\macslack\left(h_{\prvsl}(\xb_i)-t\cdot h_{\dps}(\xb_i),y_i\right)\bigg) \nn\\
&\qquad \quad \le t |\Lcal| |h_{\dps}(\xb_i)| \nn\\
\overset{(ii)}{\implies} &  2t\lambda |\vcs|\cdot\bnm{\df}^2 - 2t\lambda |\Lcal|\cdot\inp{\df} {\wbvsl}- t^2 \lambda (2|\vcs|-|\Lcal|) \bnm{\df}^2 \nn\\
&\qquad\quad  \le t |\Lcal| (|{\df}^\top \Phi(\xb_i)| +2b_{\max}) \nn\\
%
\overset{(iii)}{\implies} &  2t\lambda |\vcs|\cdot\bnm{\df}^2 - 2t\lambda |\Lcal|\cdot\inp{\df} {\wbvsl}- t^2 \lambda (2|\vcs|-|\Lcal|) \bnm{\df}^2 
\nn\\
&\qquad\quad   \le t |\Lcal| (\kappa\bnm{\df} +2b_{\max}) \nn\\
\end{align}
\begin{align}
\overset{(iv)}{\implies} &  2\lambda |\vcs|\cdot\bnm{\df}^2 - 2\lambda |\Lcal|\cdot\inp{\df} {\wbvsl}- t \lambda (2|\vcs|-|\Lcal|) \bnm{\df}^2 
\nn\\
&\qquad\quad  \le  |\Lcal| (\kappa\bnm{\df} +2b_{\max}) \nn \\
%
\overset{(v)}{\implies}& 2\lambda |\vcs|\cdot\bnm{\df}^2 - 2\sqrt{\lambda} |\Lcal|\cdot\bnm{\df} - t \lambda (2|\vcs|-|\Lcal|) \bnm{\df}^2 
\nn\\
&\qquad\quad   \le  |\Lcal| (\kappa\bnm{\df} +2b_{\max})\label{eq:lastl1},
\end{align}
where 
\begin{itemize}
 \item  $(i)$ is due to that:
\begin{align}
  2t&\lambda (|\vcs|-|\Lcal|) {\inp{\df}{\wbvsl}}  - 2t\lambda |\vcs| {\inp{\df}{\wbvs}}\nn\\
  &= 2 t \lambda |\vcs|\cdot\inp{\df}{ (\wbvsl -\wbvs) }  -2t\lambda|\Lcal|\cdot\inp{\df} {\wbvsl}\nn\\
  &= 2 t \lambda |\vcs|\cdot\bnm{\df}^2   -2t\lambda|\Lcal|\cdot\inp{\df} {\wbvsl};\nn
 \end{align}
$\macslack(h_{\prvsl}(\xb_i),y_i)-\macslack\left(h_{\prvsl}(\xb_i)-t\cdot h_{\dps}(\xb_i),y_i\right) 
 \le      t|h_{\dps}(\xb_i)| \left|\frac{\dd \macslack(y,p)}{\dd p} \right|_{\max}$
with $ \text{sup}_p \left |\frac{\dd \macslack(y,p)}{\dd p} \right| =1$ for hinge loss; 
\item  ${(ii)}$ is due to that $|h_{\dps}(\xb)|=|{\df}^\top \Phi(\xb)+\dbb|\le |{\df}^\top \Phi(\xb)|+2b_{\max}$, with $b_{\max}=1+\frac{\kappa}{\lambda}$ (Fact~\ref{fact:bbound}); 
\item $(iii)$ is due to that $|{\df}^\top \Phi(\xb)|\le \bnm{\df} \sqrt{K(x,x)}\le \kappa \bnm{\df}$ by Cauchy-Schwartz inequality;
\item $(iv)$ is due to canceling $t \ge 0$ from both sides of the inequality; and,
\item ${(v)}$ is due to that $\inp{\df} {\wbvsl} \le \bnm{\wbvsl}  \bnm{\df}$ by Cauchy-Schwartz inequality
and $\bnm{\wbvsl}^2 \le \frac{1}{\lambda}$ by Fact~\ref{fact:fupper}.
 \end{itemize}
%
%
%
%
Eq.~\ref{eq:lastl1} is valid for all $t\in[0,1]$. Making $t\to 0$, we have :
\begin{align}
 &2\lambda |\vcs|\cdot\bnm{\df}^2 - (2\sqrt{\lambda} |\Lcal|+\kappa |\Lcal|) \bnm{\df} \le 2b_{\max}\nn \\
 \implies & \bnm{\df}  \le \dfrac{1}{ 2\lambda |\vcs|} \left( {\dfrac{(2\sqrt{\lambda}+\kappa)|\Lcal|}{2} 
 + \sqrt{\dfrac{(2\sqrt{\lambda}+\kappa)^2 |\Lcal|^2 }{4} + 4  \lambda |\vcs|   b_{\max}}} \right).\label{eq:dw-expression}
\end{align}
We slightly modify here the first part in the proof of {Proposition}~\ref{thm:mont} to note that
\begin{align}
 g(\Scal  & \cup \Lcal)  - g(\Scal)\nn\\
 &= \sum_{i\in\Vcal \backslash \Scal} [\lambda \bnm{\wb^{*}(\Vcal \backslash \Scal)}^2 + \macslack(h_{\prvs}(\xb_i),y_i) ]\nn \\
&\quad - \sum_{i\in{\Vcal \backslash (\Scal\cup \Lcal)}} [ \lambda \bnm{\wb^{*}(\Vcal \backslash (\Scal\cup \Lcal))}^2 + \macslack(h_{\prvsl}(\xb_i),y_i) ] \nn \\
 & =  \lambda |\Vcal \backslash \Scal| \bnm{\wb^{*}(\Vcal \backslash \Scal)}^2  + \sum_{i\in\Vcal \backslash \Scal} \macslack(h_{\prvs}(\xb_i),y_i)\nn \\
  &\quad -  \lambda (|\Vcal \backslash \Scal|-|\Lcal|) \bnm{\wb^{*}(\Vcal \backslash \Scal)}^2 - \sum_{i\in \Vcal \backslash (\Scal\cup \Lcal)}
  \macslack(h_{\prvs}(\xb_i),y_i)\nn \\
 &\quad +  \lambda (|\Vcal \backslash \Scal|-|\Lcal|) \bnm{\wb^{*}(\Vcal \backslash \Scal)}^2  + \sum_{i\in \Vcal \backslash (\Scal\cup \Lcal) } \macslack(h_{\prvs}(\xb_i),y_i)\nn \\
&\quad - \lambda (|\Vcal \backslash \Scal|-|\Lcal|) \bnm{\wb^{*}(\Vcal \backslash (\Scal \cup \Lcal))}^2 - \sum_{i\in{\Vcal \backslash (\Scal\cup \Lcal)}} \macslack(h_{\prvsl}(\xb_i),y_i)\nn \\
&= \lambda |\Lcal| \cdot \bnm{\wb^{*}(\Vcal \backslash \Scal)}^2  +  \sum_{j \in\Lcal} \macslack(h_{\prvs}(\xb_j),y_j) \nn\\
&\quad +  \lambda (|\Vcal \backslash \Scal|-|\Lcal|) \bnm{\wb^{*}(\Vcal \backslash \Scal)}^2  + \sum_{i\in \Vcal \backslash (\Scal\cup \Lcal) } \macslack(h_{\prvs}(\xb_i),y_i) \nn\\
 &\quad
  - \lambda (|\Vcal \backslash \Scal|-|\Lcal|) \bnm{\wb^{*}(\Vcal \backslash (\Scal \cup \Lcal))}^2 - \sum_{i\in{\Vcal \backslash (\Scal\cup \Lcal)}} \macslack(h_{\prvsl}(\xb_i),y_i). \label{eq:imm}
%
\end{align}
%
%
%
Now we bound the first part of Eq.~\ref{eq:imm}
\begin{align}
 \lambda  |\Lcal|\cdot\bnm{\wbvs}^2  +  \sum_{j \in\Lcal} \macslack(h_{\prvs}(\xb_j),y_j) 
& \overset{(i)}{\le} |\Lcal| +|\Lcal| ( 1 + \kappa \bnm{\wbvs}  +b_{\max})\nn\\
& \overset{(ii)}{\le} |\Lcal| +|\Lcal| \left( 1 + \kappa \frac{1}{\sqrt{\lambda}} +b_{\max}\right)  \nn\\
&= |\Lcal| \left( 2 + \kappa \frac{1}{\sqrt{\lambda}} +b_{\max}\right),  \label{eq:f1}
\end{align}
where
\begin{itemize}
\item inequality $(i)$ is due to $\bnm{\wbvs}^2  \le \frac{1}{\lambda}$ and $ \macslack(h_{\prvs}(\xb_j) ,  y_j) \le 1 + |\wbvs^\top \Phi(\xb_j)|
+b_{\max} \le 1 + \kappa \bnm{\wbvs} + b_{\max}$\nn\\
\item inequality ${(ii)}$ is due to  $\bnm{\wbvs}^2  \le \frac{1}{\lambda}$.
\end{itemize}
%
%
Next, we bound the second part of Eq.~\ref{eq:imm}  in the following.
\begin{align}
 & \left(\lambda (|\vcs|-|\Lcal|) \bnm{\wbvs}^2  + \sum_{i\in \Vcal\cp(\Scal\cup\Lcal) } \macslack(h_{\prvs}(\xb_i), y_i) \right)  \nn\\
 & \ -  \left( \lambda (|\vcs|-|\Lcal|) \bnm{\wbvsl}^2 + \sum_{i\in{\Vcal\cp(\Scal\cup\Lcal)}} \macslack(h_{\prvsl}(\xb_i), y_i)\right)\nn
    \end{align}
  \begin{align}
  &\ \le \lambda(|\vcs|-|\Lcal|) \inp{(\wbvsl -\wbvs)}{(\wbvsl + \wbvs)} \nn\\
 & \quad +  \sum_{i\in \Vcal\cp(\Scal\cup\Lcal) } \left( \macslack(h_{\prvs}(\xb_i), y_i) ) - \macslack(h_{\prvsl}(\xb_i), y_i) \right)  \nn\\
   &\ \overset{(i)}{\le}  \lambda(|\vcs|-|\Lcal|) \bnm{\df}  \bnm{\wbvsl + \wbvs} 
   + \sum_{i\in \Vcal\cp(\Scal\cup\Lcal)}\text{sup}_p \left| \frac{\dd \macslack(p,y)}{\dd p}\right| \cdot |h_{\dps}(\xb_i)| \nn\\
  &\  \overset{(ii)}{\le}   2\sqrt{\lambda}(|\vcs|-|\Lcal|) \bnm{\df} 
   + (|\vcs|-|\Lcal|)  \cdot (\kappa \bnm{\df} +b_{\max})\nn\\
 &  =(|\vcs|-|\Lcal|) (2\sqrt{\lambda} + \kappa) \bnm{\df}  + (|\vcs|-|\Lcal|) b_{\max},\label{eq:f2}
   \end{align}
where, 
\begin{itemize}
\item inequality ${(i)}$ is because:
\begin{align*}
& \inp{(\wbvsl -\wbvs)}{(\wbvsl + \wbvs)} \nn\\
 &\qquad\qquad \le \bnm{\df}  \bnm{\wbvsl + \wbvs},
\end{align*}
 by Cauchy-Schwartz inequality and,
\begin{align*}
\left( \macslack(h_{\prvs}(\xb_i), y_i) ) - \macslack(h_{\prvsl}(\xb_i), y_i) \right) \
\le \text{sup}_p \left| \frac{\dd \macslack(p,y)}{\dd p}\right|  \cdot |h_{\dps}(\xb_i)|  ,
\end{align*}
by Lipschitz criterion;
\item inequality ${(ii)}$ is due to that $\bnm{\wbvsl + \wbvs}  \le  \bnm{\wbvsl} + \bnm{\wbvs}  \le \frac{2}{\sqrt{\lambda}}$ from
Fact~\ref{fact:fupper}, the fact that $\text{sup}_p \left| \frac{\dd \macslack(p,y)}{\dd p}\right|=1$ and $|h_{\dps}(\xb)| \le |{\df}^\top \Phi(\xb)|+b_{\max}\le 
\kappa\bnm{\df} + b_{\max}$ from triangle inequality and Cauchy Schwartz inequality, respectively. 
\end{itemize}
By adding Eqs.~\ref{eq:f1} and~\ref{eq:f2} and then replacing $\bnm{\df}$ with its value provided by Eq.~\ref{eq:dw-expression}
we have that $g(\Scal \cup \Lcal) -  g(\Scal )$ is smaller than
\begin{align}
 \hspace{-4mm} & \left( 2 + \kappa \frac{1}{\sqrt{\lambda}}\right)|\Lcal|\nn\\ 
&\ +   \dfrac{(|\vcs| -|\Lcal|)}{ 2\lambda |\vcs|} \cdot  (2\sqrt{\lambda}+\kappa)\cdot \left( {\dfrac{(2\sqrt{\lambda}+\kappa)|\Lcal|}{2} 
 + \sqrt{\dfrac{(2\sqrt{\lambda}+\kappa)^2 |\Lcal|^2 }{4} + 4  \lambda |\vcs|   b_{\max}}} \right)  +|\vcs| b_{\max}\nn\\
 &\quad\le  
    \dfrac{\left(2\sqrt{\lambda}+\kappa\right) |\Lcal|}{\sqrt{\lambda}}  +   \dfrac{(2\sqrt{\lambda}+\kappa)^2 |\Lcal|}{2\lambda}
  \cdot \left( \dfrac{1}{2} 
 + \sqrt{\dfrac{1}{4} +\dfrac{4\lambda |\vcs|   b_{\max}}{\left(2\sqrt{\lambda}+\kappa\right)^2 |\Lcal|^2}} \right) +|\vcs| b_{\max} \nn\\
  &\quad\le  
    \dfrac{\left(2\sqrt{\lambda}+\kappa\right) |\Lcal|}{\sqrt{\lambda}}  +   \dfrac{\left(2\sqrt{\lambda}+\kappa\right)^2 |\Lcal|}{2\lambda}
  \cdot \left( \dfrac{1}{2} 
 + \sqrt{\dfrac{1}{4} +\dfrac{4\lambda |\Vcal|   b_{\max}}{(2\sqrt{\lambda}+\kappa)^2 |\Lcal|^2}} \right) +|\Vcal| b_{\max} \nn
 \end{align}

  \end{proof}

\vspace{-2mm}
\begin{lemma}\label{lem:bshift}
For soft margin SVMs with offset, if no training instance lies on the hyperplane, \ie, $\nexists (\xb,y)$ so that $y(\wb^*(\Vcal\cp\Scal)^\top \Phi(\xb)+b^{*}(\Vcal \backslash \Scal))=1$, then one can compute a shift value $b'$ so that  $(\wb^*(\Vcal\cp\Scal),b^*(\Vcal\cp\Scal)+b')$ is also an optimal solution and it ensures that at least one instance $(\xb,y)$ satisfies $y(\wb^*(\Vcal\cp\Scal)^{\top} \Phi(\xb)+b^{*}(\Vcal \backslash \Scal)+b')=1$.
\end{lemma}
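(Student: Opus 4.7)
\medskip
\noindent\textit{Proof plan.} The plan is to freeze $\wb = \wb^*(\Vcal\cp\Scal)$ and study the restricted objective as a univariate function of $b$. Define
\begin{equation*}
J(b) \;:=\; \lambda |\Vcal\cp\Scal|\,\|\wb^*(\Vcal\cp\Scal)\|^2 \;+\; \sum_{i\in\Vcal\cp\Scal} \bigl[1 - y_i\bigl(\wb^*(\Vcal\cp\Scal)^\top \Phi(\xb_i) + b\bigr)\bigr]_+ .
\end{equation*}
This is a convex piecewise linear function of $b$ whose breakpoints are exactly the values $\tilde b$ at which some training instance $(\xb_i,y_i)\in\Vcal\cp\Scal$ satisfies $y_i(\wb^*(\Vcal\cp\Scal)^\top \Phi(\xb_i) + \tilde b) = 1$. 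By the hypothesis of the lemma, $b^*(\Vcal\cp\Scal)$ is \emph{not} one of these breakpoints.

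Since $(\wb^*(\Vcal\cp\Scal), b^*(\Vcal\cp\Scal))$ is optimal for the joint problem, $b^*(\Vcal\cp\Scal)$ minimizes $J$. Because $b^*(\Vcal\cp\Scal)$ lies strictly between two consecutive breakpoints, $J$ is differentiable (in fact affine) on an open neighborhood of $b^*(\Vcal\cp\Scal)$, and the first-order condition $0\in\partial J(b^*(\Vcal\cp\Scal))$ becomes $J'(b^*(\Vcal\cp\Scal))=0$. Convex piecewise linear functions whose slope is zero on one affine piece are constant on that whole piece, so $J$ is constant on the entire interval $[b^-,b^+]$, where $b^-<b^*(\Vcal\cp\Scal)<b^+$ are the breakpoints of $J$ immediately adjacent to $b^*(\Vcal\cp\Scal)$.

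Next, I will argue that both $b^-$ and $b^+$ are finite. Under the conditions used throughout Section~\ref{sec:algorithm}, namely $|\Scal|\le n \le (\rho^*-\sigma^*)|\Vcal|$, the set $\Vcal\cp\Scal$ contains at least one positive and one negative example. Consequently, for a positive example the hinge contribution grows linearly as $b\to-\infty$ and for a negative example it grows linearly as $b\to+\infty$, which forces $J(b)\to\infty$ as $|b|\to\infty$. Together with the fact that $J$ is already flat at $b^*(\Vcal\cp\Scal)$, this coercivity guarantees that breakpoints exist on both sides of $b^*(\Vcal\cp\Scal)$, so $b^-$ and $b^+$ are finite.

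Taking $b' = b^+ - b^*(\Vcal\cp\Scal)$ finishes the argument: by construction, at $b^*(\Vcal\cp\Scal)+b'$ at least one training instance sits exactly on the margin, and since $J$ is constant on $[b^-,b^+]$ we have $J(b^*(\Vcal\cp\Scal)+b')=J(b^*(\Vcal\cp\Scal))$, so $(\wb^*(\Vcal\cp\Scal), b^*(\Vcal\cp\Scal)+b')$ is also an optimal solution. I do not anticipate a deep obstacle here; the only subtle step is the coercivity argument ensuring breakpoints on both sides, which hinges on both classes being represented in $\Vcal\cp\Scal$ under the theorem's hypotheses.
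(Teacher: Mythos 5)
Your proof is correct, but it reaches the conclusion by a different route than the paper. You fix $\wb=\wb^*(\Vcal\backslash\Scal)$ and treat the restricted objective $J(b)$ as a univariate convex piecewise-linear function whose kinks are exactly the offsets at which some instance lies on the margin; since $b^*(\Vcal\backslash\Scal)$ is by hypothesis not a kink, first-order optimality forces the affine piece containing it to have zero slope, hence $J$ is constant up to the adjacent breakpoints, and coercivity (both classes present in $\Vcal\backslash\Scal$) guarantees those breakpoints are finite. The paper instead constructs the shift explicitly, $b'=-y_{i^*}\xi$ with $i^*$ the instance closest to the margin, and verifies that the hinge loss is unchanged by invoking the dual KKT conditions of Lemma~\ref{prop:svm-dual}: with no instance on the margin, every $\alpha_i$ is $0$ or $1$, and $\sum_i \alpha_i y_i = 0$ makes the correction term $[y_{i^*}\xi]\sum_{i\in\Mcal_{\xmark}} y_i$ vanish. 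These are really two faces of the same fact --- the dual constraint $\sum_i\alpha_i y_i=0$ \emph{is} the statement that $0$ belongs to the subdifferential of $J$ at $b^*(\Vcal\backslash\Scal)$ --- but your argument establishes it directly from optimality of the offset without any appeal to the dual, which makes it more elementary and self-contained, and it sidesteps the paper's (inessential) preliminary case split on whether some instance satisfies $y_i(\wb^{*\top}\Phi(\xb_i)+b^*)>1$. What the paper's version buys in exchange is an explicit closed-form expression for $b'$ in terms of the training data. Your one caveat is handled appropriately: the coercivity step does require at least one instance of each class in $\Vcal\backslash\Scal$, a condition the paper guarantees elsewhere via the constraint $n\le(\rho^*-\sigma^*)|\Vcal|$ and also relies on implicitly, so flagging it rather than proving it is acceptable here.
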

\begin{proof}
 Since there is no training instance that satisfies  $y(\wb^*(\Vcal\cp\Scal)^{\top} \Phi(\xb)+b^{*}(\Vcal \backslash \Scal))=1$, then for each instance either $y(\wb^*(\Vcal\cp\Scal)^{\top} \Phi(\xb)+b^{*}(\Vcal \backslash \Scal)) > 1$ or $y(\wb^*(\Vcal\cp\Scal)^{\top} \Phi(\xb)+b^{*}(\Vcal \backslash \Scal))<1$. First, we assume  there exists at least one $(\xb_i,y_i)$, such that $y_i (\wb^*(\Vcal\cp\Scal)^{\top} \Phi(\xb_i)+b^{*}(\Vcal \backslash \Scal)) > 1$. Let us define,
\begin{align}
 \xi& = \min_{i\in\Vcal \backslash \Scal}\left\{ \left|1-y_i (\wb^*(\Vcal\cp\Scal)^{\top} \Phi(\xb_i) +b^{*}(\Vcal \backslash \Scal))\right| \right\}\nonumber \\
  i^* & \in \argmin_{i\in\Vcal \backslash \Scal}\left\{ \left|1-y_i (\wb^*(\Vcal\cp\Scal)^{\top} \Phi(\xb_i) +b^{*}(\Vcal \backslash \Scal))\right| \right\}. \nonumber
\end{align}
If we shift $b^{*}(\Vcal \backslash \Scal) \to b^{*}(\Vcal \backslash \Scal) - y_{i^*} \xi $, then we have:
\begin{align}
 1-y_i (\wb^*(\Vcal\cp\Scal)^{\top} \Phi(\xb_i) +b^{*}(\Vcal \backslash \Scal)) < 0 &\implies 1-y_i (\wb^*(\Vcal\cp\Scal)^{\top} \Phi(\xb_i) +b^{*}(\Vcal \backslash \Scal)-y_{i^*} \xi)\le 0. \label{eq:bbx1} \\
  1-y_i (\wb^*(\Vcal\cp\Scal)^{\top} \Phi(\xb_i) +b^{*}(\Vcal \backslash \Scal)) > 0 & \implies 1-y_i (\wb^*(\Vcal\cp\Scal)^{\top} \Phi(\xb_i) +b^{*}(\Vcal \backslash \Scal)-y_{i^*} \xi)\ge 0. \label{eq:bbx2}
\end{align}
Now, we define
\begin{align}
\Mcal_{\cmark}: = \{i\in \Vcal \backslash \Scal \,|\,1-y_i (\wb^*(\Vcal\cp\Scal)^{\top} \Phi(\xb_i) +b^{*}(\Vcal \backslash \Scal)) <0 \} \\
\Mcal_{\xmark}: = \{i\in\Vcal \backslash \Scal\,|\,1-y_i (\wb^*(\Vcal\cp\Scal)^{\top} \Phi(\xb_i) +b^{*}(\Vcal \backslash \Scal))> 0\}
\end{align}
Hence, we have:
\begin{align}
 \sum_{i \in \Vcal \backslash \Scal} &[1-y_i (\wb^*(\Vcal\cp\Scal)^{\top} \Phi(\xb_i) +b^{*}(\Vcal \backslash \Scal)-y_{i^*} \xi)]_+ \nonumber \\
 &= \sum_{i \in \Mcal_{\cmark}} [1-y_i (\wb^*(\Vcal\cp\Scal)^{\top} \Phi(\xb_i) +b^{*}(\Vcal \backslash \Scal)-y_{i^*} \xi)]_+ \nn\\
&\quad +\sum_{i \in \Mcal_{\xmark}} [1-y_i (\wb^*(\Vcal\cp\Scal)^{\top} \Phi(\xb_i) +b^{*}(\Vcal \backslash \Scal)-y_{i^*} \xi)]_+\nonumber
\end{align}
\begin{align}
& \overset{(i)}{=} \sum_{i \in \Mcal_{\xmark}} (1-y_i (\wb^*(\Vcal\cp\Scal)^{\top} \Phi(\xb_i) +b^{*}(\Vcal \backslash \Scal)-y_{i^*} \xi)) \nonumber\\
& = \sum_{i \in \Mcal_{\xmark}} (1-y_i (\wb^*(\Vcal\cp\Scal)^{\top} \Phi(\xb_i) +b^{*}(\Vcal \backslash \Scal))  -  [y_{i^*} \xi]\cdot \bigg(\sum_{i\in\Mcal_{\xmark}} y_i\bigg)    \nonumber\\
& \overset{(ii)}{=} \sum_{i \in \Mcal_{\xmark}} (1-y_i (\wb^*(\Vcal\cp\Scal)^{\top} \Phi(\xb_i) +b^{*}(\Vcal \backslash \Scal)), \label{eq:fll}
\end{align}
where $(i)$ is due to Eqs.~\ref{eq:bbx1} and~\ref{eq:bbx2}. To show $(ii)$, we know from Lemma~\ref{prop:svm-dual} that $\sum_{i\in\Vcal \backslash \Scal}\alpha_{i}(\Vcal \backslash \Scal) y_i=0$; $\alpha_{i}(\Vcal \backslash \Scal)=0$ 
for  $i\in\Mcal_{\cmark}$ ;
and $\alpha_{i}(\Vcal \backslash \Scal)=1$ 
for $i\in \Mcal_{\xmark}$.
Since there is no $(\xb,y)$
such that $y(\wb^*(\Vcal\cp\Scal)^{\top} \Phi(\xb) + b^{*}(\Vcal \backslash \Scal))=1$, 
then we have  $\sum_{i\in\Vcal \backslash \Scal_{\xmark}} \alpha_i y_i = \sum_{i\in\Vcal \backslash \Scal_{\xmark}} y_i =0$.

From Eq.~\ref{eq:fll} we can see that the loss function remains unchanged, Therefore $(\wb^*(\Vcal\cp\Scal),b^{*}(\Vcal \backslash \Scal) +b' )$  is also an optimal solution, where $b' = -y_{i^*} \xi$. Moreover, we have:
\begin{align}
1- y_{i^*} (\wb^*(\Vcal\cp\Scal)^\top \Phi(\xb_{i^*}) +b^{*}(\Vcal \backslash \Scal)-y_{i^*}\xi) = 1- y_{i^*} (\wb^*(\Vcal\cp\Scal)\Phi(\xb_{i^*}) +b^{*}(\Vcal \backslash \Scal))-\xi = 0,\nn
\end{align}
which completes the proof.
\end{proof}
%
\begin{claim} \label{fact:fupper}
For kernel SVMs without offset, we have $\forall\ \Scal\subseteq \Vcal: \bnm{\wb^*(\Vcal\cp\Scal)}^2  \le {1}/{\lambda}$ .
\end{claim}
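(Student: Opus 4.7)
The plan is a one-line optimality comparison against the zero vector. The objective being minimized (from Eq.~\ref{eq:soft-lin-svm-without-offset} restricted to $\Vcal\cp\Scal$, with $b=0$) can be rewritten as
\[
F(\wb) \;=\; |\Vcal\cp\Scal|\,\lambda\,\bnm{\wb}^2 \;+\; \sum_{i\in\Vcal\cp\Scal}\bigl(1 - y_i\,\wb^{\top}\Phi(\xb_i)\bigr)_{+},
\]
since the term $\lambda\bnm{\wb}^2$ inside the sum does not depend on $i$. I would use this rewriting and then pit $\wb^*(\Vcal\cp\Scal)$ against the trivial candidate $\wb = 0$.

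First, I would evaluate $F(0) = |\Vcal\cp\Scal|$, since each hinge term becomes $(1-0)_+ = 1$ and the regularizer vanishes. Next, since the hinge loss is non-negative, I would lower bound $F(\wb^*(\Vcal\cp\Scal)) \ge |\Vcal\cp\Scal|\,\lambda\,\bnm{\wb^*(\Vcal\cp\Scal)}^2$. Finally, by optimality of $\wb^*(\Vcal\cp\Scal)$, we have $F(\wb^*(\Vcal\cp\Scal)) \le F(0)$, which yields
\[
|\Vcal\cp\Scal|\,\lambda\,\bnm{\wb^*(\Vcal\cp\Scal)}^2 \;\le\; |\Vcal\cp\Scal|,
\]
and dividing through by $|\Vcal\cp\Scal|\,\lambda > 0$ gives $\bnm{\wb^*(\Vcal\cp\Scal)}^2 \le 1/\lambda$, as desired.

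There is no real obstacle here: the argument is a standard ``test the optimum against a feasible baseline'' trick. The only subtle points are (i) noticing that the common $\lambda\bnm{\wb}^2$ term factors out of the sum so one does not have to worry about the size of $|\Vcal\cp\Scal|$ spoiling the bound, and (ii) the case $\Scal = \Vcal$ is vacuous since then $\wb^*$ is unconstrained/undefined, so the statement should be read over subsets $\Scal$ for which $\Vcal\cp\Scal \neq \emptyset$. The bound is independent of $\Scal$, which is precisely what is needed to plug into the stability-based arguments in Appendix~\ref{app:soft-linear-without-offset-subm}.
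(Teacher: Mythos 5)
Your argument is correct and is essentially identical to the paper's own proof: both compare the optimal objective value against the feasible candidate $\wb = 0$, drop the non-negative hinge terms, and divide by $\lambda|\Vcal\cp\Scal|$. Your added remarks about factoring out the regularizer and excluding the degenerate case $\Scal=\Vcal$ are sensible but do not change the substance.
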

\begin{proof}
We have $\lambda |\Vcal \backslash \Scal| \bnm{\wb^*(\Vcal\cp\Scal)}^2  \le  \lambda |\Vcal \backslash \Scal| \bnm{\wb^*(\Vcal\cp\Scal)}^2  + \sum_{i\in\Vcal \backslash \Scal} \mac(\pred(\xb_i),y_i)  
 \le \lambda |\Vcal \backslash \Scal| .0 + \sum_{i\in\Vcal \backslash \Scal} \mac(0,y_i)=|\Vcal \backslash \Scal| $,
which proves the required result.
\end{proof}
 
\begin{claim}\label{fact:bbound}
For a  nonlinear  soft margin SVM with offsets, if there exists at least one training instance, which satisfies $y(\wb^*(\Vcal\cp\Scal)^{\top}\Phi( \xb)+b^{*}(\Vcal \backslash \Scal))=1$, then we have $|b^{*}(\Vcal \backslash \Scal)|\le 1 + \kappa/\sqrt{\lambda}$, where $\kappa=\max_{\xb} \sqrt{K(\xb,\xb)} \ge 0$
\end{claim}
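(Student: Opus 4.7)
\textbf{Proof plan for Claim~\ref{fact:bbound}.} The plan is to use the assumption that at least one training instance lies exactly on the margin in order to solve for $b^{*}(\Vcal\backslash\Scal)$ explicitly, and then bound the two resulting terms separately. Let $(\xb,y)$ be an instance with $y(\wb^{*}(\Vcal\cp\Scal)^{\top}\Phi(\xb)+b^{*}(\Vcal\backslash\Scal))=1$. Multiplying both sides by $y$ and using $y^{2}=1$ gives $\wb^{*}(\Vcal\cp\Scal)^{\top}\Phi(\xb)+b^{*}(\Vcal\backslash\Scal)=y$, and therefore $b^{*}(\Vcal\backslash\Scal)=y-\wb^{*}(\Vcal\cp\Scal)^{\top}\Phi(\xb)$.

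Next, I would take absolute values and apply the triangle inequality, obtaining $|b^{*}(\Vcal\backslash\Scal)|\le |y|+|\wb^{*}(\Vcal\cp\Scal)^{\top}\Phi(\xb)|=1+|\wb^{*}(\Vcal\cp\Scal)^{\top}\Phi(\xb)|$. Then by the Cauchy--Schwarz inequality in the RKHS, $|\wb^{*}(\Vcal\cp\Scal)^{\top}\Phi(\xb)|\le \bnm{\wb^{*}(\Vcal\cp\Scal)}\cdot \sqrt{K(\xb,\xb)}\le \kappa\,\bnm{\wb^{*}(\Vcal\cp\Scal)}$ by the definition of $\kappa=\max_{\xb}\sqrt{K(\xb,\xb)}$.

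Finally, the norm $\bnm{\wb^{*}(\Vcal\cp\Scal)}$ needs to be controlled. Here I would simply invoke (a mild variant of) Claim~\ref{fact:fupper}, which gives $\bnm{\wb^{*}(\Vcal\cp\Scal)}^{2}\le 1/\lambda$ and hence $\bnm{\wb^{*}(\Vcal\cp\Scal)}\le 1/\sqrt{\lambda}$. Claim~\ref{fact:fupper} is stated for SVMs without offset, but the same argument (comparing the objective value at $(\wb^{*},b^{*})$ against the value at $(\mathbf{0},b^{*})$, or equivalently using the fact that plugging $\wb=\mathbf{0}$ and the same $b^{*}$ cannot increase the loss by more than the regularizer drop) goes through verbatim in the offset case because the regularizer only penalizes $\bnm{\wb}$. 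Combining the three bounds yields $|b^{*}(\Vcal\backslash\Scal)|\le 1+\kappa/\sqrt{\lambda}$, as required.

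The proof is essentially mechanical once the on-margin equation is rearranged; the only subtlety is justifying that the $\bnm{\wb^{*}}\le 1/\sqrt{\lambda}$ bound from Claim~\ref{fact:fupper} carries over to the with-offset setting, which I would address in one line by noting that substituting $\wb=\mathbf{0}$ while keeping the optimal offset yields a feasible competitor whose loss is at most $|\Vcal\backslash\Scal|$, so $\lambda|\Vcal\backslash\Scal|\bnm{\wb^{*}(\Vcal\cp\Scal)}^{2}\le |\Vcal\backslash\Scal|$.
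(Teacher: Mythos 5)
Your argument is exactly the paper's: rearrange the on-margin equation to get $b^{*}(\Vcal \backslash \Scal)=y-\wb^{*}(\Vcal\backslash\Scal)^{\top}\Phi(\xb)$, apply the triangle inequality, bound $|\wb^{*}(\Vcal\backslash\Scal)^{\top}\Phi(\xb)|$ by Cauchy--Schwarz in the RKHS, and finish with the norm bound $\bnm{\wb^{*}(\Vcal\backslash\Scal)}\le 1/\sqrt{\lambda}$ from Claim~\ref{fact:fupper}. The one point to fix is your justification of that last bound in the offset case: the competitor $(\mathbf{0},b^{*})$ does \emph{not} have loss at most $|\Vcal\backslash\Scal|$ in general, since its hinge loss is $\sum_{i}(1-y_{i}b^{*})_{+}$, which exceeds $|\Vcal\backslash\Scal|$ whenever $b^{*}$ has the wrong sign for many samples; you should instead compare against the competitor $(\mathbf{0},0)$, whose objective is exactly $|\Vcal\backslash\Scal|$, which gives $\lambda|\Vcal\backslash\Scal|\,\bnm{\wb^{*}(\Vcal\backslash\Scal)}^{2}\le|\Vcal\backslash\Scal|$ as desired. (To your credit, you noticed that Claim~\ref{fact:fupper} is stated only for the without-offset case, a point the paper itself silently glosses over when invoking it here.)
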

\begin{proof}
Since $ y(\wb^*(\Vcal\cp\Scal)^{\top} \Phi(\xb)+b^{*}(\Vcal \backslash \Scal))=1$, then $b^{*}(\Vcal \backslash \Scal) = y- \wb^*(\Vcal\cp\Scal)^{\top} \Phi(\xb)$, and we have:
\begin{align}
 |b^{*}(\Vcal \backslash \Scal)| &= |y- \wb^*(\Vcal\cp\Scal)^{\top} \Phi(\xb)|\nonumber\\
 & \le 1 + |\wb^*(\Vcal\cp\Scal)^{\top} \Phi(\xb)|\nonumber\\
 & \le 1+ \bnm{f}_\HH \sqrt{K(\xb,\xb)} \qquad (\text{Fact}~\ref{fact:csi})\nonumber\\
 & \le 1+ \kappa/\sqrt{\lambda} \qquad (\text{Fact}~\ref{fact:fupper}).
\end{align}
\end{proof}
\begin{claim}[Cauchy-Schwartz Inequality] \label{fact:csi}
$|f(\xb)|\le \bnm{f} \sqrt{K(\xb,\xb)}$.
\end{claim}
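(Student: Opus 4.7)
The claim is the standard reproducing-kernel Hilbert space (RKHS) form of the Cauchy--Schwarz inequality, so the plan is to invoke the reproducing property of $K$ together with Cauchy--Schwarz in Hilbert space. Concretely, if $\Hcal$ denotes the RKHS associated with the positive-definite kernel $K$, then by definition every $f\in\Hcal$ satisfies the reproducing property $f(\xb)=\langle f,\,K(\cdot,\xb)\rangle_{\Hcal}$, and the ``kernel slice'' $K(\cdot,\xb)$ has norm $\|K(\cdot,\xb)\|_{\Hcal}^{2}=\langle K(\cdot,\xb),K(\cdot,\xb)\rangle_{\Hcal}=K(\xb,\xb)$, again by the reproducing property.

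With those two ingredients in hand, the proof is a one-line application of Cauchy--Schwarz in the Hilbert space $\Hcal$:
\begin{equation*}
|f(\xb)| \;=\; \bigl|\langle f,\,K(\cdot,\xb)\rangle_{\Hcal}\bigr|
\;\le\; \|f\|_{\Hcal}\cdot \|K(\cdot,\xb)\|_{\Hcal}
\;=\; \|f\|_{\Hcal}\sqrt{K(\xb,\xb)} ,
\end{equation*}
which is exactly the stated bound $|f(\xb)|\le\bnm{f}\sqrt{K(\xb,\xb)}$.

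There is essentially no technical obstacle here; the only point to be careful about is simply making explicit that $\bnm{f}$ in this paper denotes the RKHS norm of $f$ (so that the $\bnm{\wb^*(\Vcal\cp\Scal)}$ bounds elsewhere refer to this norm via the identification $f=\wb^{*\top}\Phi$, and in particular $\bnm{f}^2\le 1/\lambda$ in Fact~\ref{fact:fupper} is the RKHS-norm bound). Once this notational convention is pinned down, the inequality follows immediately from the reproducing property plus the standard Hilbert-space Cauchy--Schwarz inequality, with no additional computation required.
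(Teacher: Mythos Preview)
Your proof is correct and is the standard RKHS argument. The paper itself states this claim without proof, treating it as a known fact, so there is no alternative approach to compare against; your one-line application of the reproducing property together with Cauchy--Schwarz in $\Hcal$ is exactly the intended justification.
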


\begin{claim}\label{fact:scalBoundst}
If  $|\Scal|<n < \min\{|\Vcal^+|,|\Vcal^-|\}-s^*$ for some $s^*\in \nonumber^+$, then $|(\Vcal \backslash \Scal)\cap\Vcal^{\pm}| > s^*$.
\end{claim}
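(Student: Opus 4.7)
The statement is a purely combinatorial counting fact, so the plan is to give a direct set‑size argument rather than invoke anything from SVM theory.

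First, I would write $(\Vcal \backslash \Scal) \cap \Vcal^{\pm} = \Vcal^{\pm} \setminus \Scal$ and use the elementary identity $|\Vcal^{\pm} \setminus \Scal| = |\Vcal^{\pm}| - |\Vcal^{\pm} \cap \Scal|$. Since $\Vcal^{\pm} \cap \Scal \subseteq \Scal$, this gives the lower bound
\begin{equation*}
|\Vcal^{\pm} \setminus \Scal| \;\geq\; |\Vcal^{\pm}| - |\Scal|.
\end{equation*}

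Next I would plug in the hypothesis. By assumption $|\Scal| < n < \min\{|\Vcal^+|,|\Vcal^-|\} - s^*$, so in particular $|\Scal| < |\Vcal^{+}| - s^*$ and $|\Scal| < |\Vcal^{-}| - s^*$. Rearranging each inequality yields $|\Vcal^{\pm}| - |\Scal| > s^*$, and combining this with the previous bound gives
\begin{equation*}
|(\Vcal \backslash \Scal) \cap \Vcal^{\pm}| \;\geq\; |\Vcal^{\pm}| - |\Scal| \;>\; s^*,
\end{equation*}
as required.

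There is essentially no obstacle here: the whole statement is a two‑line pigeonhole/inclusion argument, and the only thing to be mildly careful about is that both inequalities are strict, so the final inequality inherits strictness from $|\Scal| < \min\{|\Vcal^+|,|\Vcal^-|\} - s^*$. No properties of the SVM, of $\Delta^*$, of $\rho^*$, or of $\sigma^*$ are needed; the claim is stated only to justify the set‑size manipulations used earlier (e.g.\ in the proof of Theorem~\ref{thm:soft-linear-subm}, where one needs $|(\Vcal\setminus\Scal)\cap\Vcal^-| \geq s^*$ to apply monotonicity of $\Delta_{1/s}$ in $s$).
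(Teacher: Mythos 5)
Your proof is correct and is essentially identical to the paper's own argument: both reduce the claim to the identity $|(\Vcal\backslash\Scal)\cap\Vcal^{\pm}| = |\Vcal^{\pm}| - |\Scal\cap\Vcal^{\pm}| \geq |\Vcal^{\pm}| - |\Scal|$ and then invoke the hypothesis $|\Scal| < \min\{|\Vcal^+|,|\Vcal^-|\} - s^*$. The only cosmetic difference is that you treat both signs symmetrically while the paper writes out only the $\Vcal^-$ case; your remark about tracking strictness is a fair (minor) tidying of the paper's final inequality chain.
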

\begin{proof}
We have:
 \begin{align}
  |(\Vcal \backslash \Scal)\cap\Vcal^-|& = |\Vcal^-\cp \Scal|\nonumber\\
  & =  |\Vcal^-|- |\Scal\cap\Vcal^-|\nonumber\\
  & \ge |\Vcal^-|- |\Scal|\nonumber\\
  & \ge |\Vcal^-|- \min\{\|\Vcal^+|,|\Vcal^-|\}+s^* > s^*.
 \end{align}
\end{proof}

\begin{claim}\label{fact:ccalprop1}
 Given $\Ccal^\pm _{1/s}$, $\Delta_{1/s}$ are defined according to Eqs.~\ref{eq:cnp} and~\ref{eq:deltas},
then $\Delta_{1/s_1} \ge \Delta_{1/s_2}$ if $s_1 > s_2$.
\end{claim}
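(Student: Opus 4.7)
The plan is to prove the claim by establishing the set inclusion $\Ccal^{\pm}_{1/s_1}\subseteq \Ccal^{\pm}_{1/s_2}$ whenever $s_1>s_2$, and then observing that minimizing a fixed continuous function (the distance $\|\xb^+-\xb^-\|$) over a smaller set yields a value that is at least as large. This is the natural monotonicity-under-shrinking-domain argument, and since both $\Ccal^{+}$ and $\Ccal^{-}$ shrink simultaneously as $s$ increases, the minimum pairwise distance can only grow.

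For the first step, I will fix any $\xb \in \Ccal^{\pm}_{1/s_1}$. By definition there exist coefficients $\mu_i$ with $\sum_{i\in\Vcal^{\pm}}\mu_i = 1$, $0\le \mu_i\le 1/s_1$, and $\xb=\sum_{i\in\Vcal^{\pm}}\mu_i\xb_i$. Since $s_1>s_2$ implies $1/s_1\le 1/s_2$, the same coefficients satisfy $0\le \mu_i\le 1/s_2$, so $\xb\in\Ccal^{\pm}_{1/s_2}$. Hence $\Ccal^{\pm}_{1/s_1}\subseteq\Ccal^{\pm}_{1/s_2}$. (The feasibility of the constraint set for $s_1$ is not vacuous thanks to the standing assumption $s\le \min\{|\Vcal^+|,|\Vcal^-|\}$ made in the paragraph containing Eq.~\ref{eq:cnp}.)

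For the second step, because $\Ccal^{+}_{1/s_1}\times\Ccal^{-}_{1/s_1}\subseteq \Ccal^{+}_{1/s_2}\times\Ccal^{-}_{1/s_2}$, any feasible pair for the minimization defining $\Delta_{1/s_1}$ is also feasible for the minimization defining $\Delta_{1/s_2}$. Therefore
\begin{equation*}
\Delta_{1/s_1}=\min_{\substack{\xb^{+}\in\Ccal^{+}_{1/s_1}\\ \xb^{-}\in\Ccal^{-}_{1/s_1}}}\|\xb^+-\xb^-\|\ \ge\ \min_{\substack{\xb^{+}\in\Ccal^{+}_{1/s_2}\\ \xb^{-}\in\Ccal^{-}_{1/s_2}}}\|\xb^+-\xb^-\|=\Delta_{1/s_2},
\end{equation*}
which gives the desired inequality.

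There is essentially no technical obstacle here: once the set inclusion is observed, the conclusion is a one-line consequence of the ``minimum over a subset is at least as large'' principle. The only thing worth double-checking is that the constraint sets are non-empty for the relevant values of $s$, which is guaranteed by the standing hypothesis $s\le \min\{|\Vcal^+|,|\Vcal^-|\}$ imposed when the reduced convex hulls were introduced.
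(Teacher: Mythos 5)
Your proof is correct and follows essentially the same route as the paper: establish the inclusion $\Ccal^{\pm}_{1/s_1}\subseteq\Ccal^{\pm}_{1/s_2}$ from $1/s_1\le 1/s_2$, then conclude that the minimum distance over the smaller product set is at least as large. Your write-up is if anything slightly more careful than the paper's, since you explicitly note the non-emptiness of the constraint sets.
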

\begin{proof}
Since $1/s_1 < 1/s_2$, then $\Ccal^\pm _{1/s_1} \subseteq \Ccal^\pm _{1/s_2} $. Therefore,
 any two points in $\Ccal^+ _{1/s_1}$ and $\Ccal^- _{1/s_1}$, also are in 
 $\Ccal^+ _{1/s_2}$ and $\Ccal^- _{1/s_2}$,  respectively. As a result, the minimum distance between two points in
 $\Ccal^+ _{1/s_2}$ and $\Ccal^- _{1/s_2}$ would be smaller.
\end{proof}

\begin{claim}\label{fact:quadratic}
 If $g(\alpha)=\alpha-p \alpha^2$, then $\max_{\alpha\in [0,1]} g(\alpha) \ge \min\left\{   \dfrac{1}{4p}, \dfrac{1}{2}  \right\}$.
\end{claim}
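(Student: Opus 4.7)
The plan is a direct single-variable optimization with a split according to whether the unconstrained maximizer of the concave quadratic $g(\alpha)=\alpha-p\alpha^2$ lies in $[0,1]$ or not. The claim only needs to be nontrivial when $p>0$ (otherwise $g$ is convex and non-decreasing on $[0,1]$, so $g(1)=1-p\ge 1$ trivially exceeds either candidate on the right-hand side), so I would dispatch the degenerate case $p\le 0$ in one sentence and then assume $p>0$.

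Assuming $p>0$, differentiate to locate the unconstrained maximizer $\alpha_{0}=\tfrac{1}{2p}$, with unconstrained maximum value $g(\alpha_{0})=\tfrac{1}{4p}$. I would then split on whether $\alpha_{0}\le 1$, equivalently $p\ge \tfrac{1}{2}$. In the first subcase ($p\ge \tfrac{1}{2}$) the maximizer is feasible, so $\max_{\alpha\in[0,1]} g(\alpha)=\tfrac{1}{4p}$; and since $\tfrac{1}{4p}\le \tfrac{1}{2}$ in this range, $\min\{\tfrac{1}{4p},\tfrac{1}{2}\}=\tfrac{1}{4p}$, which matches the maximum with equality. In the second subcase ($0<p<\tfrac{1}{2}$) the quadratic is still increasing at $\alpha=1$, so I would evaluate at the boundary to get $\max_{\alpha\in[0,1]}g(\alpha)\ge g(1)=1-p>\tfrac{1}{2}$; and since $\tfrac{1}{4p}>\tfrac{1}{2}$ in this range, $\min\{\tfrac{1}{4p},\tfrac{1}{2}\}=\tfrac{1}{2}$, giving the desired inequality.

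There is essentially no obstacle; the only thing to be careful about is consistently interpreting $\min\{\tfrac{1}{4p},\tfrac{1}{2}\}$ in the $p\le 0$ corner case (where $\tfrac{1}{4p}$ is either undefined or negative and the bound is vacuously or trivially true), which I would note explicitly so the statement is unambiguous as it is used inside the proof of Theorem~\ref{thm:soft-linear-without-offset-subm}. Combining the two subcases yields the stated inequality in all cases.
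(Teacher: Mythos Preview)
Your proposal is correct and follows essentially the same approach as the paper: locate the unconstrained maximizer $\alpha_0=\tfrac{1}{2p}$ by differentiation, then split on whether $\alpha_0\le 1$ to obtain either $\tfrac{1}{4p}$ or $g(1)=1-p\ge\tfrac{1}{2}$. Your version is slightly more careful in explicitly matching each case to the correct branch of the $\min$ and in dispatching the degenerate $p\le 0$ case, which the paper leaves implicit.
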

\begin{proof}
Assume $\alpha^*$ indicates the global maximum.  Now, $\left.\dfrac{dg(\alpha)}{d\alpha} \right|_{\alpha=\alpha^*} =0$
iff $\alpha^* <1$. Otherwise $\alpha^*=1$. Hence, if $1/2p <1$, then  $\max_{\alpha\in [0,1]} g(\alpha) = 1/4p$.
Otherwise, we have $\max_{\alpha\in [0,1]} g(\alpha) = 1-p \ge 1/2$. 
\end{proof}

\section{Additional Details for Experiments on Real Data} \label{app:impl-real}
\xhdr{Dirichlet parameters $\chib_{\bullet,\bullet}$} Given an instance $(\xb,y)$ with grade $q$, the values of $\chi_{\xb,q}$ depend only on $q$, which vary across different datasets. More specifically, we set $\xi_{\xb,q}$ for our datasets as follows:
\begin{itemize}
\item Messidor. It contains scores on four point scale. Hence, we have:
\begin{align}
\chib_{\xb,q}= \begin{cases}
[3,\, 3,\,1,\, 1] \quad \text{if }\ q=1\\
[2,\, 3,\,2,\, 1] \quad \text{if }\ q=2\\
 [0.5,\, 0.5,\,5,\, 4] \quad \text{if }\ q=3\\
[0.1,\, 0.1,\,4,\, 6] \quad \text{if }\ q=4
 \end{cases} 
\end{align}
\item Stare: It contains scores on five point scale. Hence, we have:
\begin{align}
\chib_{\xb,q}= \begin{cases}
[3,\, 3,\,2,\, 1,\, 1]\quad \text{if }\ q=1\\
[2,\, 7,\,0.5,\, 0.5,\, 0.1]\quad \text{if }\ q=2\\
[0.1,\, 0.1,\,4,\, 3,\, 2] \quad \text{if }\ q=3\\
[1,\, 2,\,3,\, 3,\, 1]\quad \text{if }\ q=4\\
[0.1,\, 0.1,\,5,\, 5,\, 5] \quad \text{if }\ q=5
 \end{cases} 
\end{align}
\item Aptos: It contains scores on five point scale. Hence, we have:
\begin{align}
\chib_{\xb,q}= \begin{cases}
[4,\, 2,\,1,\, 1,\, 1]\quad \text{if }\ q=1\\
[4,\, 1,\,1,\, 0.5,\, 0.5]\quad \text{if }\ q=2\\
[0.1,\, 0.1,\,5,\, 4,\, 4] \quad \text{if }\ q=3\\
[0.1,\, 0.1,\,4,\, 5,\, 4]\quad \text{if }\ q=4\\
[0.1,\, 0.1,\,4,\, 4,\, 5] \quad \text{if }\ q=5
 \end{cases} 
\end{align}
\end{itemize}

\xhdr{Choice of the additional model $\pi$} For all three real datasets, we use a logistic regression model for $\pi$, which is given as follows:
\begin{align}
 \pi(d \, |\,\xb) = \dfrac{1}{1+\exp\left[-\omega_{\pi} \cdot d   (\wb^*(\Vcal\cp\Scal^*)^\top  \xb  + b^*(\Vcal\cp\Scal^*) )\right]}
\end{align}
where $\omega_{\pi}$ is the trainable paramater.

\xhdr{Implementation of Triage based on predicted errors}
We first train a support vector machine for full automation, i.e., $\Scal=\emptyset$
as well as two additional supervised models that predict the human error and the machine 
error. Finally, we sort the test samples in decreasing order of the difference between the predicted machine error and the predicted human error and outsource top $n$ samples to humans. 

Following~\cite{raghu2019algorithmic}, we train two multi-layer perceptron models--- one ($\Ecal_h$) for predicting human error and another ($\Ecal_m$) for predicting machine error--- 
on $\{(\xb_i,z_i)\}_{i\in\Vcal}$ where $\xb_\bullet$ are the feature and $z_\bullet$ are the binary labels. 
The binary labels $z_\bullet$
are different for $\Ecal_h$ and $\Ecal_m$ and are computed as follows. 

For the human error model $\Ecal_h$, we first sample  
four expert scores per each sample $(\xb,y)$ using the same strategy described in Section~\ref{sec:real},
then split them into two evenly sized sets and, finally aggregate the grades in each set into single binary labels by averaging
and thresholding using the same thresholding strategy described in Section~\ref{sec:real}. 
We set $z=0$ if these binary labels for two sets agree and $z=1$ otherwise.
On the other hand, for machine error model $\Ecal_m$, we set $z=0$, 
if, for the sample $(\xb,y)$, the label predicted by the trained machine agrees with the label predicted by human and $z=1$ otherwise.

Each of  $\Ecal_h$ and $\Ecal_m$ consists of one hidden layer with 100 units with  ReLU$(\cdot)$ as the activation function for the hidden layer.
Moreover, we use $L_2$ regularization and optimize using stochastic gradient descent.

\xhdr{Computing infrastructure} All experiments were implemented in Python $2.7$ and were carried out in  $2.2$ GHz Intel Core i7 processor with $8$ GB of RAM.

\xhdr{Average run-time of each result}
%
The average run-time of each iteration of distorted greedy algorithm is $6.91, 1.26$ and $10.79$ seconds for Messidor, Stare and Aptos datasets respectively. 
On the other hand, the average run-time of each iteration of stochastic 
distorted greedy algorithm has decreased to  $0.98, 0.3$ and $0.97$ seconds for Messidor, Stare and Aptos datasets respectively. 
\end{appendix}

\end{document}